 \newtheorem{Theorem}{Theorem}
\newtheorem{lemma}[Theorem]{Lemma}
\newtheorem{Definition}[Theorem]{Definition}
\newtheorem{Proposition}[Theorem]{Proposition}
\newcommand{\Appendix}{Appendix}
\renewcommand{\Pr}{\mathds{P}} 
\newcommand{\Prob}{\Pr}
\newcommand{\E}{\mathbb{E}}
\newcommand{\R}{\mathbb{R}} 
\renewcommand{\H}{\mathcal{H}}
\newcommand{\Var}{\mathbb{V}ar}
\newcommand{\tb}{\textbf} 
\newcommand{\ind}{\mathds{1}}
\newcommand*\samethanks[1][\value{footnote}]
\newcommand{\specificthanks}[1]{\@fnsymbol{#1}}
\begin{document}

\title{Kernelized Stein Discrepancy Tests of Goodness-of-fit for Time-to-Event Data\thanks{TF, WX, AG are grateful for the support from the Gatsby
Charitable Foundation. NR is supported by Thomas Sauer-
wald’s ERC Starting Grant 679660.}}

\author{
  Tamara Fern\'andez \thanks{Corresponding author.} \textsuperscript{ \specificthanks{3}}\\
  University College London\\
  \texttt{t.a.fernandez@ucl.ac.uk} \\
   \and
  Nicol\'as Rivera\thanks{ Same contribution.} \\
  University of Cambridge\\
  \texttt{nicolas.rivera@cl.cam.ac.uk} \\ 
  \and
   Wenkai Xu\textsuperscript{ \specificthanks{3}}\\
  University College London\\
  \texttt{wenkai.xu.16@ucl.ac.uk} \\
   \and
  Arthur Gretton \\
  University College London\\
  \texttt{arthur.gretton@gmail.com } \\
}

\maketitle

\numberwithin{equation}{section}

\begin{abstract}
Survival Analysis and Reliability Theory are concerned with the analysis of time-to-event data, in which observations correspond to waiting times until an event of interest such as death from a particular disease or failure of a component in a mechanical system. This type of data is unique due to the presence of censoring, a type of missing data that occurs when we do not observe the actual time of the event of interest but, instead, we have access to an approximation for it given by random interval in which the observation is known to belong. Most traditional methods are not designed to deal with censoring, and thus we need to adapt them to censored time-to-event data. In this paper, we focus on non-parametric goodness-of-fit testing procedures based on combining the Stein's method and kernelized discrepancies. While for uncensored data, there is a natural way of implementing a kernelized Stein discrepancy test, for censored data there are several options, each of them with different advantages and disadvantages. In this paper, we propose a collection of kernelized Stein discrepancy tests for time-to-event data, and we study each of them theoretically and empirically; our experimental results show that our proposed methods perform better than existing tests, including previous tests based on a kernelized maximum mean discrepancy.
\end{abstract}

\section{Introduction}\label{sec:intro}
An important topic of study in statistics is the distribution of times to a critical event, otherwise known as survival times: examples include the infection time from a disease \cite{andersen2012competing, mirabello2009osteosarcoma}; the death time of a patient in a clinical trial \cite{collett2015modelling, biswas2007statistical};  or the possible re-offending times for released criminals \cite{chung1991survival}. Survival data are frequently subject to censoring: the time of interest is not observed, but rather a bound on it. The most common scenario studied is {\em right censoring}, where a lower bound on the survival time is observed, for instance, a patient might leave a clinical trial before it is completed, meaning that we only obtain a lower bound on the time of death (the definitions and terminologies for the survival analysis setting will be provided in Section \ref{sec:background}).

We address the setting where a model of survival times is proposed, and it is desired to test this model against observed data in the presence of censoring: this is known as \emph{goodness-of-fit} testing. When departures from the model follow a known parametric family, a number of classical tests are available, being the most popular in practice the Log-rank test \cite{hollander1979testing},  and its generalization, the weighted Log-rank test \cite{brendel2014weighted}. For an overview of these and other methods we refer the reader to \cite{klein2006survival}

In the event of more general departures from the null, kernel methods may be used to construct a powerful class of non-parametric tests to detect a greater range of alternative scenarios. For the uncensored case, a popular class of kernel goodness-of-fit tests utilize Stein's method \cite{barbour2005introduction, chen2010, ley2017stein,gorham2015measuring}  to develop a test statistic  \cite{liu2016kernelized, chwialkowski2016kernel,GorMac2017, jitkrittum2017linear}, which can be computed even when the model is known only up to normalization. In this paper we consider the particular case of kernel Stein discrepancies (KSDs) which are described in  Section \ref{sec:background}. While an alternative strategy would be simply to run a two-sample test using samples from the model, using for instance the maximum mean discrepancy (MMD) \cite{GreBorRasSchetal12}, Stein tests are more computationally efficient (no additional sampling is needed), and can take advantage of model structure to achieve better test power. KSD tests have been extended to various settings such as discrete variable models \cite{yang2018goodness}, point process \cite{yang2019stein}, latent variable models \cite{kanagawa2019kernel}, and directional data \cite{xu2020stein}.

\begin{figure*}[ht]
    \centering
    \includegraphics[width=1.\textwidth]{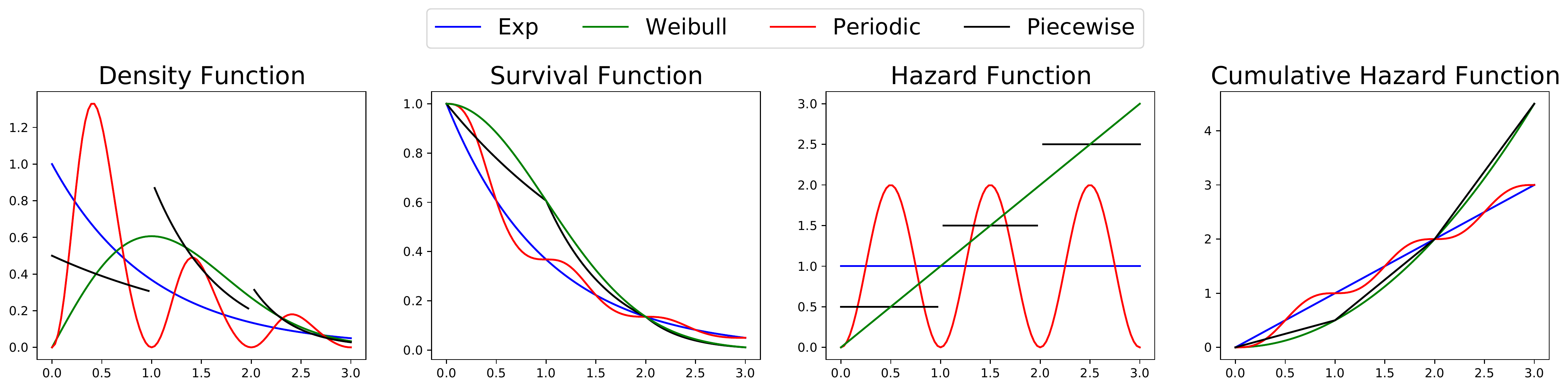}
    \vspace{-0.7cm}
    \caption{Example functions related to survival analysis. }
    \label{fig:dist_demo}
\end{figure*}

In the present work, we  propose to generalize Stein goodness-of-fit tests  to the setting of survival analysis with right-censored data. In Section \ref{sec: stein operator}, we introduce three separate approaches to constructing a Stein operator in the presence of censoring: the first, the {\em Survival Stein Operator}, is the most direct generalization of the Stein operator used in the uncensored KSD test. The second, the {\em Martingale Stein Operator}, uses a different construction, based on a classical martingale studied in the survival analysis literature. The third, the {\em Proportional Stein Operator}, is designed for composite null hypotheses: in this case, the {\em hazard function} (that is, the instantaneous probability of an event at a given time, conditioned on survival to that time) is known only up to a constant of proportionality. For instance, we may wish to use a constant hazard as the null hypothesis, without specifying in advance the value of the constant.

The rest of the paper is structured as follows: in Section \ref{sec:CKSD}, we construct kernel statistics of goodness-of-fit, based on each of the operators previously introduced.  We characterize the asymptotics of each statistic in Section \ref{sec:gof-test}. We find that in order to guarantee convergence in distribution under the null, the kernel statistic based on the Survival Stein Operator requires more restrictive conditions than the statistic built on the Martingale Stein Operator. In other words, the straightforward extension of the uncensored test is in fact the more restrictive approach of the two. Stronger assumptions again are required in obtaining convergence in distribution for the Proportional Stein Operator statistic, which should come as no surprise, given that the null is now an entire model class. For each statistic, we propose a wild bootstrap approach to obtain the test threshold. Empirical studies and results are presented in Section \ref{sec:exp}, where we compare with a recent state-of-the-art
non-parametric test  for censored data by \citet{fernandez2019maximum} based on the MMD, which
has been shown to outperform classical tests. For challenging cases, our Stein tests surpass the MMD test.

\section{Background} \label{sec:background}

\paragraph{Kernel Stein Discrepancy}
We briefly review   
the kernel Stein discrepancy (KSD) in the absence of censoring \cite{chwialkowski2016kernel,liu2016kernelized}, which is inspired from \cite{gorham2015measuring,ley2017stein}.
Let $f_0$ be a smooth probability density on $\mathbb{R}$.
For a bounded smooth function $\omega:\mathbb{R} \to \mathbb{R}$, the Stein operator $\mathcal{T}_0$ is given by
\begin{align}
\mathcal{T}_0 \hspace{0.05cm} \omega(x)&= \omega(x)(\log f_0(x))' + \omega'(x),
\label{eq:steinRd}
\end{align}
where $'$ denotes derivative w.r.t $x$.
Since $f_0$ vanishes at the boundary and $\omega$ is bounded, integration by parts on $\mathbb{R}$ results in Stein's Lemma:
$$
{\E}_0 [\mathcal{T}_0 \omega] = \int (\mathcal{T}_0 \omega)(x)f_0(x) = 0,
$$
under some regularity conditions.
Since the Stein operator $\mathcal{T}_0$ depends on the density $f_0$ only through the derivative of $\log f_0$, it does not involve the normalization constant of $f_0$, which is a useful property for dealing with unnormalized models \cite{hyvarinen2005estimation}.

Let $\mathcal{H}$ be a reproducing kernel Hilbert space (RKHS) on $\mathbb{R}$ with associated  kernel $K$. By using the Stein operator above, the kernel Stein discrepancy (KSD) \cite{chwialkowski2016kernel,liu2016kernelized} between two densities $f_X$ and $f_0$ is defined as
\begin{equation}
\operatorname{KSD}(f_X \| f_0) =\sup_{\omega \in B_1(\mathcal H)} \mathbb{E}_{X}[\mathcal{T}_0 \omega],
\label{eq:ksd}
\end{equation}
where $B_1(\mathcal H)$ denotes the unit ball of $\mathcal{H}$, and $\E_X$ denotes the expectation w.r.t. the density $f_X$. It is easy to see that $\mathrm{KSD}(f_X,f_0) \geq 0$ and that $\mathrm{KSD}(f_X\|f_0) = 0$ for $f_X = f_0$. Moreover, under some regularity conditions, we have that $\mathrm{KSD}(f_X,f_0) = 0$ if and only if $f_X = f_0$ \cite{chwialkowski2016kernel}.

By using standard properties of RKHSs, we can conveniently write  $\mathrm{KSD}(f_X,f_0)$  as
\begin{align}
\operatorname{KSD}^2(f_X\|f_0) = {\E}_{x,y \sim f_X} [h_0(x,y)], \label{eq:KSDequiv}
\end{align}
where $h_0(x,y) = $
$$\langle \log f_0(x)'K(x,\cdot) +  K'(x,\cdot), \log f_0(y)'K(y,\cdot) +  K'(y,\cdot)\rangle_{\mathcal H},$$
with $\langle \cdot, \cdot \rangle_{\mathcal H}$ denoting the inner product of $\mathcal H$.

\paragraph{Censored Data} 
Let $(X_1,\ldots,X_n) \overset{\textnormal{ i.i.d.}}{\sim}F_X$ be the survival times, which are non-negative real-valued random variables of interest, and let
$(C_1,\ldots,C_n)\overset{\textnormal{ i.i.d.}}{\sim}F_C$ be another collection of non-negative random variables called censoring times. In this work, we assume the non-informative censoring setting, where the censoring times are independent of the survival times. The data we observe correspond to $(T_i,\Delta_i)$ where $T_i = \min\{X_i,C_i\}$ and $\Delta_i=\ind_{\{X_i\leq C_i\}}$. We can imagine that $X_i$ is the time of interest (death of a patient) and $C_i$ is the time a patient leaves the study for some other reason, thus, for some patients we observe their actual death time, whereas for others we just observe a lower bound (the time they left the study). $\Delta_i$ indicates if we are observing $X_i$ or $C_i$.

We denote by $f_T$, $f_X$ and $f_C$, the respective density functions associated with the random variables $T$, $X$ and $C$. Similarly, we denote by $F_T$, $F_X$ and $F_C$, the respective cumulative distribution functions; and by $S_T=1-F_T$, $S_X=1-F_X$ and $S_C=1-F_C$, the survival functions. An important element in survival analysis is the hazard function which represents the instantaneous risk of dying at a given time (as $X$ usually refers to a death time). Given a distribution with density $f_X$ and survival function $S_X$, the hazard function $\lambda_X(x)$ is given by $f_X(x)/S_X(x)$, which can be seen as the density at $x$ of a random variable $X$ conditioned on the event $\{X\geq x\}$. The corresponding cumulative hazard function is defined as $\Lambda_X(x) = \int_0^x \lambda_X(t)dt$. A useful feature of the hazard function is that there is a one-to-one relation between hazard and density functions through the relation $S_X(x) = e^{-\Lambda_X(x)}$. For the random variables $T$ and $C$, we denote by $\lambda_T$ and $\lambda_C$ their respective hazard functions, and by $\Lambda_T$ and $\Lambda_C$, their cumulative hazards functions. As a remark, every continuous non-negative function $\lambda:\R_+ \to \R$ can be a hazard function, as long as $\int_{\R_+}\lambda(t)dt = \infty$, thus, describing hazards is much easier than describing densities, as we do not need to worry about normalization constants. Examples of corresponding functions for different models are displayed in Figure \ref{fig:dist_demo}.

As observations correspond to pairs $(T_i, \Delta_i)$, it is convenient to consider the joint measure $\mu$ on $\R_+\times\{0,1\}$ induced by the pair $(T,\Delta)$. We write $\mu_X$ to denote the measure $\mu$ when the survival times of interest $X_i$ are generated according to $f_X$, and  $\mu_0$ if they are generated under $f_0$ (i.e., under the null). Note that $\mu_X$ and $\mu_0$ also depend on $f_C$, however we don't make this dependence  explicit, since for goodness-of-fit we only care about $f_0$ and $f_X$.

Finally, for any function $\phi$, the following identities hold, which the reader should keep in mind for later use:

\begin{align}
    \E_{X}
[\Delta \phi(T)]& = \int_0^{\infty} \phi(s)f_X(s)S_C(s)ds,\label{eqn:expectedPhi}\\
    \E_{X}
[(1-\Delta) \phi(T)]& = \int_0^{\infty} \phi(s)f_C(s)S_X(s)ds.\label{eqn:expectedPhi0}
\end{align}
Here $\E_X=\E_{\mu_X}$ means that we are taking expectation w.r.t. $(T,\Delta)\sim \mu_X$. Similarly, we write $\E_0$ to indicate $(T,\Delta)\sim \mu_0$ (under the null hypothesis).

\section{Stein Operator for Censored Data} \label{sec: stein operator}
In this section, we describe a set of Stein operators for censored data.  We denote by $\Omega$ the set of functions $\R_+\times\{0,1\}\to\R$, and recall that $\mu_0$ is the measure induced by 
data $(T,\Delta)$ under the null hypothesis. 

\begin{Definition}
Let $\mathcal H\subseteq L^2(f_0)$. We call $\mathcal{T}_0: L^2(f_0) \to \Omega
$ a Stein operator for $\mathcal H $ if for each $\omega \in \mathcal H$
\begin{equation}\label{eq:stein_identity}
\E_0\left[(\mathcal T_0\omega)(T, \Delta)\right]= 0.
\end{equation}
\end{Definition}

An interesting technical point is that our operator takes functions $\omega:\R_+\to\R$ and maps them to $\Omega$. The idea behind having these two spaces is that while our data of interest is a time (hence the space $\mathcal H$ of functions $\R_+ \to \R$), we actually observe pairs $(T_i,\Delta_i)$, hence we need functions in $\Omega$.

We choose the general class $\mathcal{H}$ to be an RKHS.
We assume that $\mathcal H$ contains only differentiable and bounded functions, and that if $\omega \in \mathcal H$ then $\omega' \in \mathcal H$.  These requirements are not restrictive and most of the standard kernels in the literature generate RKHSs with these properties, including the Gaussian kernel (furthermore, we can avoid this restriction, but we keep it as it is convenient for the exposition of the paper). Further properties of $\mathcal H$ will be imposed if needed in particular cases.

\subsection{Survival Stein Operator}
Observe that $T_i=X_i$ if and only if $\Delta_i=1$. One might be tempted to  use only the uncensored observations to approximate $\int_0^{\infty}(\mathcal{T}_0 \omega)(x)f_0(x)dx$ (where $\mathcal{T}_0$ is the standard Stein operator in \eqref{eq:steinRd}) by computing $$\frac{1}{n}\sum_{i=1}^n \Delta_i(\mathcal{T}_0\omega)(T_i)=\frac{1}{n}\sum_{i=1}^n \Delta_i(\mathcal{T}_0\omega)(X_i),$$
however, this  sum does not converge to $\int_0^{\infty}(\mathcal{T}_0\omega)(x)f_0(x)dx$ as the term $\Delta_i$ introduces bias due to censoring. Indeed, such an empirical average converges to 
$\int_0^{\infty} (\mathcal{T}_0\omega)(x)S_C(x)f_X(x)$. To account for this bias we redefine  $\mathcal T_0:\H^{(s)}\to \Omega$ as
\begin{align}\label{eqn:defiT0}
    (\mathcal T_0 \omega)(x,\delta) = \delta \frac{\left(\omega(x)S_C(x)f_0(x)\right)'}{S_C(x)f_0(x)}+\omega(0)f_0(0).
\end{align}
 Here we write $\mathcal H^{(s)}$ instead of $\mathcal H$ whenever we assume that the additional condition is satisfied,
\begin{align}
     \int_{\R^+}  |\left(\omega(x)S_C(x)f_0(x) \right)'|dx<\infty,\quad  \forall \omega \in \H\label{eqn:conditionS},
\end{align}
which guarantees that the operator is well-defined. Notice that $\omega(0)f_0(0)$ in equation \eqref{eqn:defiT0} appears since we do not necessarily assume a vanishing boundary at 0.

Under the null hypothesis, $(T_i,\Delta_i)\sim \mu_0$, it holds that  
\begin{align}\label{eq:T_0stein}
\frac{1}{n}\sum_{i=1}^n (\mathcal T_0 \omega)(T_i,\Delta_i) \to\E_{0}[(\mathcal T_0 \omega)(T,\Delta)]
\end{align}
as the number of data points tends to infinity, and $\E_{0}[(\mathcal T_0 \omega)(T,\Delta)] = 0$  due to Equation \eqref{eqn:expectedPhi} and the fact that
\begin{align}
\int_{\R^+} (\omega(x)S_C(x)f_0(x))'dx+\omega(0)f_0(0) = 0,
\end{align}
which is proved using integration by parts. Notice that in this argument we use that  $\mathcal H^{(s)}$ only contains bounded functions, allowing us to get rid of the boundary at infinity.

The operator $\mathcal T_0$ can be seen as a natural extension of the Stein operator \cite{gorham2015measuring} to censored data. Observe that in the uncensored case, $S_C(x) \equiv 1$ recovers the standard Stein operator.

Unfortunately, in the goodness-of-fit setting, we only have access to the null distribution $f_0(x)$ but not to the censoring distribution $f_C(x)$, thus $S_C(x)$ needs to be estimated. The standard estimator for $S_C$ is the Kaplan-Meier estimator \cite{kaplan1958nonparametric} which is very data inefficient, leading to an unsatisfactory testing procedure.

To bypass the approximation of $S_C$ we define the survival Stein operator $\mathcal T_0^{(s)}:\mathcal H^{(s)}\to \Omega$ as
\begin{align}\label{eqn:defiT0S}
(\mathcal{T}_0^{(s)}{\omega})(x,\delta)&=
\delta\omega'(x)+\frac{\lambda_0'(x)}{\lambda_0(x)}\delta\omega(x)-\lambda_0(x)\omega(x)+\lambda_0(0)\omega(0)
\end{align}

\begin{Proposition}\label{Prop:T0TsExEquiv}
Consider $\mathcal T_0$ and $\mathcal T_0^{(s)}$ defined in equations~\eqref{eqn:defiT0} and \eqref{eqn:defiT0S}, respectively. Let $(T,\Delta)\sim \mu_{0}$. 
Then
\begin{align}
    \E_{0}[(\mathcal T_0^{(s)} \omega)(T,\Delta)]= \E_0[(\mathcal T_0 \omega)(T,\Delta)]=0,\quad \forall  \omega \in \mathcal H^{(s)}.\nonumber
    \end{align}
\end{Proposition}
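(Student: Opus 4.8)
The plan is to prove the two equalities separately, exploiting the definitions. The second equality, $\E_0[(\mathcal T_0\omega)(T,\Delta)]=0$, has already been established in the text immediately preceding the proposition (via \eqref{eqn:expectedPhi} together with the integration-by-parts identity $\int_{\R^+}(\omega(x)S_C(x)f_0(x))'dx+\omega(0)f_0(0)=0$), so only the first equality, $\E_0[(\mathcal T_0^{(s)}\omega)(T,\Delta)]=\E_0[(\mathcal T_0\omega)(T,\Delta)]$, genuinely requires work; and in fact it suffices to show $\E_0[(\mathcal T_0^{(s)}\omega)(T,\Delta)]=0$ directly, which also re-proves the second part.

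The key computation is to expand $\E_0[(\mathcal T_0^{(s)}\omega)(T,\Delta)]$ using the identities \eqref{eqn:expectedPhi} and \eqref{eqn:expectedPhi0}. The operator $\mathcal T_0^{(s)}$ in \eqref{eqn:defiT0S} splits into a $\delta$-weighted part, $\delta\omega'(x)+\frac{\lambda_0'(x)}{\lambda_0(x)}\delta\omega(x)$, a part not involving $\delta$, namely $-\lambda_0(x)\omega(x)$, and the constant boundary term $\lambda_0(0)\omega(0)$. Applying \eqref{eqn:expectedPhi} to the first group gives $\int_0^\infty\big(\omega'(s)+\frac{\lambda_0'(s)}{\lambda_0(s)}\omega(s)\big)f_0(s)S_C(s)\,ds$, while the term $-\lambda_0(x)\omega(x)$ (which depends on $T$ only, not on $\Delta$, so one writes it as $\Delta(-\lambda_0\omega)+(1-\Delta)(-\lambda_0\omega)$ and uses both \eqref{eqn:expectedPhi} and \eqref{eqn:expectedPhi0}) contributes $-\int_0^\infty\lambda_0(s)\omega(s)f_0(s)S_C(s)\,ds-\int_0^\infty\lambda_0(s)\omega(s)f_C(s)S_0(s)\,ds$; the constant contributes $\lambda_0(0)\omega(0)$. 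Now I use the fundamental survival-analysis relations $f_0=\lambda_0 S_0$ and $S_0'=-f_0=-\lambda_0 S_0$: the first lets me rewrite $\lambda_0'/\lambda_0\cdot f_0 S_C = (f_0'/f_0 - S_0'/S_0)f_0 S_C = f_0' S_C + \lambda_0 f_0 S_C$, so that the integrand of the first group becomes $\omega' f_0 S_C + \omega f_0' S_C + \omega\lambda_0 f_0 S_C$; the $\lambda_0$-term here cancels the first integral coming from $-\lambda_0\omega$, and $\omega' f_0 S_C + \omega f_0' S_C = (\omega f_0)' S_C$. One is then left with $\int_0^\infty (\omega f_0)'(s) S_C(s)\,ds - \int_0^\infty \lambda_0(s)\omega(s) f_C(s) S_0(s)\,ds + \lambda_0(0)\omega(0)$. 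Writing $\lambda_0 S_0 = f_0$ in the second integral makes it $\int_0^\infty \omega(s) f_0(s) f_C(s)\,ds$, and since $f_C = -S_C'$ this equals $-\int_0^\infty \omega(s) f_0(s) S_C'(s)\,ds$; adding it to $\int_0^\infty(\omega f_0)' S_C$ and integrating by parts collapses everything to $[\omega(s)f_0(s)S_C(s)]_0^\infty = -\omega(0)f_0(0)$ (the boundary at infinity vanishing because $\omega$ is bounded and $f_0 S_C\to 0$). Finally $\lambda_0(0)\omega(0) = \frac{f_0(0)}{S_0(0)}\omega(0) = f_0(0)\omega(0)$ since $S_0(0)=1$, which cancels the leftover $-\omega(0)f_0(0)$, giving $0$.

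An alternative, arguably cleaner route — which I would actually present — is to show $(\mathcal T_0^{(s)}\omega)(x,\delta)$ and $(\mathcal T_0\omega)(x,\delta)$ have the same $\mu_0$-expectation by relating them pointwise modulo terms that integrate to zero: expand the numerator in \eqref{eqn:defiT0} as $(\omega S_C f_0)' = \omega' S_C f_0 + \omega S_C' f_0 + \omega S_C f_0' = \omega' S_C f_0 - \omega f_C f_0 + \omega S_C f_0'$, divide by $S_C f_0$ to get $\delta\big(\omega' + \omega\frac{f_0'}{f_0}\big) - \delta\omega\frac{f_C}{S_C}$, then use $\frac{f_0'}{f_0} = \frac{\lambda_0'}{\lambda_0} - \lambda_0$ (differentiating $\log f_0 = \log\lambda_0 + \log S_0$ and $\log S_0' = -\lambda_0$) to see that $\mathcal T_0\omega$ equals $\mathcal T_0^{(s)}\omega$ plus the single correction $-\delta\omega(x)\frac{f_C(x)}{S_C(x)} + \lambda_0(x)\omega(x) - \lambda_0(0)\omega(0) + \omega(0)f_0(0)$; the last two terms cancel as above, and \eqref{eqn:expectedPhi}–\eqref{eqn:expectedPhi0} show $\E_0\big[-\Delta\omega(T)\frac{f_C(T)}{S_C(T)} + \lambda_0(T)\omega(T)\big] = -\int\omega f_0 f_C + \int\lambda_0\omega(f_0 S_C + f_C S_0) = 0$ after again substituting $f_0 = \lambda_0 S_0$.

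The main obstacle is purely bookkeeping: keeping straight the three ways a term can acquire its $\mu_0$-expectation (via $\Delta$, via $1-\Delta$, or via splitting a $\Delta$-free term into both) and correctly deploying the chain of elementary identities $f_0=\lambda_0 S_0$, $S_0'=-f_0$, $S_C'=-f_C$, $S_0(0)=1$, plus the boundedness of $\omega$ to kill the boundary term at infinity. There is no deep difficulty — the only subtlety worth flagging explicitly is the role of condition \eqref{eqn:conditionS}, which is exactly what legitimises the integration by parts and guarantees the displayed integrals are finite; I would state where it is used.
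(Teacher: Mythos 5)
Your first route is correct and complete: expanding $\E_0[(\mathcal T_0^{(s)}\omega)(T,\Delta)]$ via \eqref{eqn:expectedPhi}--\eqref{eqn:expectedPhi0}, using $\tfrac{\lambda_0'}{\lambda_0}=\tfrac{f_0'}{f_0}+\lambda_0$, and collapsing $\int_0^\infty(\omega f_0 S_C)'\,dx=-\omega(0)f_0(0)$ against the boundary term $\lambda_0(0)\omega(0)=f_0(0)\omega(0)$ gives $\E_0[(\mathcal T_0^{(s)}\omega)(T,\Delta)]=0$ directly; together with $\E_0[(\mathcal T_0\omega)(T,\Delta)]=0$ from the text preceding the proposition, this proves the statement. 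The paper organizes things slightly differently: it computes the difference $\E_0[(\mathcal T_0\omega)(T,\Delta)-(\mathcal T_0^{(s)}\omega)(T,\Delta)]$ and kills it with the single exchange identity $\E_0[\Delta\,\omega(T)\lambda_C(T)]=\E_0[(1-\Delta)\,\omega(T)\lambda_0(T)]$ (both sides equal $\int_0^\infty\omega f_0 f_C$) plus the pointwise identity $\tfrac{f_0'}{f_0}=\tfrac{\lambda_0'}{\lambda_0}-\lambda_0$, thereby avoiding any integration by parts inside the proof itself; your direct computation redoes the integration by parts but buys a self-contained re-derivation of the null mean. Your remark about where \eqref{eqn:conditionS} and boundedness of $\omega$ enter is apt and matches the paper's usage.

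However, the second route --- the one you say you would actually present, and which is in spirit exactly the paper's proof --- contains an algebra slip. After substituting $\tfrac{f_0'}{f_0}=\tfrac{\lambda_0'}{\lambda_0}-\lambda_0$ into \eqref{eqn:defiT0}, the term $-\delta\,\omega(x)\lambda_0(x)$ must be kept, while $\mathcal T_0^{(s)}$ carries $-\lambda_0(x)\omega(x)$ without a $\delta$; hence the correction is $(1-\delta)\lambda_0(x)\omega(x)-\delta\,\omega(x)\lambda_C(x)$ (plus the cancelling boundary terms), not $\lambda_0(x)\omega(x)-\delta\,\omega(x)\lambda_C(x)$ as you wrote. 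As stated, your claimed identity $\E_0\bigl[-\Delta\,\omega(T)\lambda_C(T)+\lambda_0(T)\omega(T)\bigr]=0$ is false: it equals $\E_0[\Delta\,\lambda_0(T)\omega(T)]=\int_0^\infty\lambda_0\,\omega\,f_0S_C\,dx$, which is strictly positive for, say, $\omega>0$. With the corrected correction the computation is exactly the paper's: $\E_0[(1-\Delta)\lambda_0(T)\omega(T)]=\int_0^\infty\omega f_0f_C\,dx=\E_0[\Delta\,\lambda_C(T)\omega(T)]$, so the expectation of the correction vanishes. The slip is precisely the $\Delta$-bookkeeping you flag as the main obstacle, so fix it before presenting that version (or simply present your first route, which is sound).
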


The previous proposition says that if the data we observed was generated from $\mu_0$ then the expectation of the operators $\mathcal T_0$ and $\mathcal T_0^{(s)}$ are equal for each function in $\mathcal H^{(s)}$. However, the relation between $\mathcal T_0$ and $\mathcal T_0^{(s)}$ is stronger than merely equality in expectation, indeed, under a slightly stronger condition on the form of the distribution $f_0$ and $f_C$ we get the following result, which is proven in \Appendix~\ref{app:proofs}.
\begin{Proposition}\label{Prop:T0TsExEquivSample}
 Assume that 
\begin{align}
\int_0^\infty (\lambda_C(x)+\lambda_0(x))f_C(x)f_0(x)<\infty,\label{eqn:moment1}
\end{align}
then, under the null hypothesis, i.e. $(T_i,\Delta_i)\sim \mu_0$, we have that, as the number of data points tends to infinity,
\begin{align*}
\sup_{\omega\in B_1(\mathcal{H})} \frac{1}n \sum_{i=1}^n (\mathcal{T}_0^{(s)}{\omega})(T_i,\Delta_i)-(\mathcal{T}_0{\omega})(T_i,\Delta_i) \overset{\Pr}{\to} 0.
\end{align*}
\end{Proposition}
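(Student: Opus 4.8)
The plan is to reduce the statement to a law of large numbers for $\mathcal H$-valued random elements, the key step being to identify the difference operator $\mathcal T_0^{(s)}-\mathcal T_0$ in closed form.

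First I would compute, for $\omega\in\mathcal H$, the pointwise difference $(\mathcal T_0^{(s)}\omega)(x,\delta)-(\mathcal T_0\omega)(x,\delta)$. Expanding the derivative in \eqref{eqn:defiT0} via $\tfrac{(\omega S_Cf_0)'}{S_Cf_0}=\omega'+\tfrac{S_C'}{S_C}\omega+\tfrac{f_0'}{f_0}\omega$ and using $S_C'=-f_C$, $\lambda_C=f_C/S_C$, the identity $\tfrac{f_0'}{f_0}=(\log f_0)'=\tfrac{\lambda_0'}{\lambda_0}-\lambda_0$, and $f_0(0)=\lambda_0(0)S_0(0)=\lambda_0(0)$, I expect the $\delta\omega'(x)$, $\tfrac{\lambda_0'(x)}{\lambda_0(x)}\delta\omega(x)$ and boundary terms to cancel, leaving
$$
(\mathcal T_0^{(s)}\omega)(x,\delta)-(\mathcal T_0\omega)(x,\delta)=\delta\,\lambda_C(x)\,\omega(x)-(1-\delta)\,\lambda_0(x)\,\omega(x)=:(D\omega)(x,\delta).
$$
Crucially, $D\omega$ involves $\omega$ only through its values (not its derivative), so by the reproducing property $(D\omega)(x,\delta)=\langle\omega,\xi(x,\delta)\rangle_{\mathcal H}$ with $\xi(x,\delta):=\big(\delta\lambda_C(x)-(1-\delta)\lambda_0(x)\big)K(x,\cdot)\in\mathcal H$. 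Writing $\bar\xi_n:=\tfrac1n\sum_{i=1}^n\xi(T_i,\Delta_i)$, the Cauchy–Schwarz inequality gives
$$
\sup_{\omega\in B_1(\mathcal H)}\frac1n\sum_{i=1}^n\big[(\mathcal T_0^{(s)}\omega)(T_i,\Delta_i)-(\mathcal T_0\omega)(T_i,\Delta_i)\big]=\sup_{\|\omega\|_{\mathcal H}\le1}\langle\omega,\bar\xi_n\rangle_{\mathcal H}=\|\bar\xi_n\|_{\mathcal H},
$$
so it suffices to prove $\|\bar\xi_n\|_{\mathcal H}\overset{\Pr}{\to}0$.

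Next I would assemble the two ingredients for a Hilbert-space weak LLN. Since $\Delta\in\{0,1\}$ we have $\big(\delta\lambda_C(x)-(1-\delta)\lambda_0(x)\big)^2=\delta\lambda_C(x)^2+(1-\delta)\lambda_0(x)^2$, hence $\|\xi(x,\delta)\|_{\mathcal H}^2=\big(\delta\lambda_C(x)^2+(1-\delta)\lambda_0(x)^2\big)K(x,x)$; applying \eqref{eqn:expectedPhi}–\eqref{eqn:expectedPhi0} together with $\lambda_CS_C=f_C$ and $\lambda_0S_0=f_0$ yields
$$
\E_0\|\xi(T,\Delta)\|_{\mathcal H}^2=\int_0^\infty\big(\lambda_C(s)+\lambda_0(s)\big)K(s,s)\,f_0(s)f_C(s)\,ds.
$$
This is where the hypothesis enters: since every function in $\mathcal H$ is bounded, the uniform boundedness principle applied to the evaluation functionals $\omega\mapsto\omega(x)$ forces $\kappa:=\sup_xK(x,x)<\infty$, and then \eqref{eqn:moment1} gives $\E_0\|\xi(T,\Delta)\|_{\mathcal H}^2\le\kappa\int_0^\infty(\lambda_C+\lambda_0)f_0f_C<\infty$. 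In particular $\xi(T,\Delta)$ is Bochner integrable, and for every $\omega$ one has $\langle\omega,\E_0[\xi(T,\Delta)]\rangle_{\mathcal H}=\E_0[(D\omega)(T,\Delta)]=0$, the last equality being immediate from \eqref{eqn:expectedPhi}–\eqref{eqn:expectedPhi0} (it is also the content of Proposition~\ref{Prop:T0TsExEquiv}); hence $\E_0[\xi(T,\Delta)]=0$ in $\mathcal H$.

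Finally, by independence and the mean-zero property, $\E_0\|\bar\xi_n\|_{\mathcal H}^2=\tfrac1n\E_0\|\xi(T,\Delta)\|_{\mathcal H}^2\to0$, and Markov's inequality yields $\|\bar\xi_n\|_{\mathcal H}\overset{\Pr}{\to}0$, which by the display above completes the proof. The only genuine obstacle is the first step — carrying out the algebra of $\mathcal T_0^{(s)}\omega-\mathcal T_0\omega$ carefully enough to see the cancellation of the $\omega'$ and boundary terms; once $D\omega$ is in the clean form above, the remainder is a routine variance bound, with condition \eqref{eqn:moment1} being precisely what makes $\E_0\|\xi\|_{\mathcal H}^2$ finite. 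A minor point to dispatch is the measurability of $(t,\delta)\mapsto\xi(t,\delta)$ as an $\mathcal H$-valued map, which follows from continuity of $K$, $\lambda_0$ and $\lambda_C$.
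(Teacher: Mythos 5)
Your proof is correct, and its first half --- computing the pointwise difference $(\mathcal T_0^{(s)}\omega-\mathcal T_0\omega)(x,\delta)=\omega(x)\big(\delta\lambda_C(x)-(1-\delta)\lambda_0(x)\big)$ (the boundary terms cancelling because $f_0(0)=\lambda_0(0)S_0(0)=\lambda_0(0)$) and rewriting the supremum over $B_1(\mathcal H)$ as the RKHS norm of $\bar\xi_n=\frac1n\sum_i\big(\Delta_i\lambda_C(T_i)-(1-\Delta_i)\lambda_0(T_i)\big)K(T_i,\cdot)$ --- is exactly the paper's reduction. Where you diverge is in how the norm is shown to vanish: the paper invokes the symmetrization lemma (Rademacher variables $W_i$) followed by Jensen's inequality, so that the cross terms drop out via $\E[W_iW_j]=0$, whereas you drop them directly by establishing that $\xi(T,\Delta)$ is Bochner integrable with $\E_0[\xi(T,\Delta)]=0$ (via \eqref{eqn:expectedPhi}--\eqref{eqn:expectedPhi0}, i.e.\ the content of Proposition~\ref{Prop:T0TsExEquiv}) and then using independence to get $\E_0\|\bar\xi_n\|_{\mathcal H}^2=\frac1n\E_0\|\xi(T,\Delta)\|_{\mathcal H}^2$ and Markov. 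Both routes terminate at the same quantity, $\frac1n\E_0\big[K(T,T)\big(\Delta\lambda_C(T)-(1-\Delta)\lambda_0(T)\big)^2\big]$, made finite by \eqref{eqn:moment1} and a bounded kernel; your version is the more elementary one, and it also makes explicit the centering that the symmetrization inequality in the cited form implicitly presupposes. A further small improvement on your side is deriving $\sup_xK(x,x)<\infty$ from the standing assumption that $\mathcal H$ consists of bounded functions via the uniform boundedness principle, where the paper simply asserts $|K(x,y)|\le c_1$.
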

To better understand the survival Stein operator, we interpret the proposed Stein operator by making connections to the Stein operator used in the uncensored case.

A careful computation  gives the following equivalent expression for the expectation of $(\mathcal{T}_0^{(s)}\omega)(T,\Delta)$ for $(T,\Delta)\sim \mu_X$:
\begin{align*}
&\E_X[(\mathcal{T}_0^{(s)}\omega)(T,\Delta)]=\E_X\left[\omega(T)\Delta\left(\log \frac{f_0(T)}{f_X(T)}\right)'\right] \\
&-\E_X\left[\omega(T)(1-\Delta)(\lambda_0-\lambda_X)(T)\right] +\omega(0)(\lambda_0-\lambda_X)(0).
\end{align*}
Here, we can relate the first expectation to uncensored observations: $\Delta=1$; the second expectation to censored observations: $\Delta=0$; and the third term describes a shift due to boundary conditions.

The expectation of the uncensored part is equal to
\begin{align*}
&\int_0^\infty\omega(x)\left(\log \frac{f_0(x)}{f_X(x)}\right)'S_C(x)f_X(x)dx,
\end{align*} 
which is analogous to what we obtain in the uncensored case, with an additional $S_C$ weighting. If we have no censoring, then $S_C \equiv 1$, recovering the expression found by \citet{chwialkowski2016kernel}. On the other hand, the expectation of the censored part is equal to
\begin{align*}
&\int_0^\infty\omega(x)\left(\frac{S_X(x)}{S_0(x)}f_0(x)-f_X(x)\right)f_C(x)dx,
\end{align*}
which measures the discrepancy between $f_0$ and $f_X$ through survival weights, under the measure of censoring $f_C$.  In the absence of censoring, $f_C=0$ a.e., so this term appears due to the censoring variable. Notice that if differences between $f_0$ and $f_X$ occur at times $t$ where $S_C(t)=0$, then no method will detect these differences (the observations at this time are entirely censored).

\subsection{Martingale Stein Operator}

While the previous approach mimics the classic Stein operator, it has similar drawbacks. Similarly to what we observe in the works of \citet{chwialkowski2016kernel} and \citet{liu2016kernelized}, our Stein operator $\mathcal T_0^{(s)}$ requires very strong integrability conditions on the involved distribution functions. In our setting, we find, for example condition c.1 in Section~\ref{sec:gof_theory}, which involves integrals with respect to hazard functions which are known  to satisfy $\int \lambda_0(x)dx = \infty$, leading to a testing procedure with weak theoretical guarantees. While these conditions may hold for some models, it is not hard to find simple examples where they do not hold.

In order to get a more robust test, we exploit a well-known identity in survival analysis, allowing us to deduce a more natural Stein operator. Such an identity is given by
 \begin{align}
  \label{eqn:martingaleidentity1}
     \E_0\left[\Delta \phi(T)-\int_0^{T} \phi(t)\lambda_0(x)dx\right]=0,
 \end{align}
 which holds for any function $\phi$ such that $\E_0(|\phi(T)|)<\infty$ under $\mu_0$ \cite{aalen2008survival}. This equality is derived by using a martingale identity that appears in the derivation of classical estimators in survival analysis (see \Appendix~\ref{app:knwon_identity}).
 
 Assuming $\lambda_0(t)>0$, we replace $\phi = \omega'/\lambda_0$ in  \eqref{eqn:martingaleidentity1} to get
 $$\E_0\left[\Delta\frac{\omega'(T)}{\lambda_0(T)}-(\omega(T)-\omega(0))\right]=0.$$
 Define the martingale Stein Operator $\mathcal T_0^{(m)}:\mathcal H^{(m)} \to \Omega$ as
\begin{align}
 (\mathcal T_0^{(m)}\omega)(x,\delta) = \delta\frac{\omega'(x)}{\lambda_0(x)}-(\omega(x)-\omega(0))   \label{eqn:defiT0M}
\end{align}
where we write  $\mathcal H^{(m)}$ instead of $\mathcal H$ whenever $\mathcal H$ satisfies
 \begin{align}
     \int_{\R^+} \left|\frac{\omega'(x)}{\lambda_0(x)}\right|S_C(x)f_0(x)dx<\infty,\quad \forall  \omega \in \H.\label{eqn:conditionM}
 \end{align}
 From its definition, it is clear that $\E_0[(\mathcal T_0^{(m)} \omega)(T,\Delta)]=0.$
 Note that, by the definition of the hazard functions, condition~\eqref{eqn:conditionM} is equivalent to 
 \begin{align}
     \int_{\R^+} \left|\omega'(x)\right|S_C(x)S_0(x)dx<\infty,\quad \forall  \omega \in \H,
 \end{align}
 which holds true if the kernel is bounded (recall we assume that $\omega' \in \mathcal H$), therefore, compared to $ \mathcal T_0^{(s)} $, the testing procedure associated to $\mathcal T_0^{(m)}$ has very strong theoretical guarantees. Indeed, we observe that condition c.2 in Section~\ref{sec:gof-test}.1 is much simpler to satisfy because, this time, we consider integrals with respect to the inverse of the hazard function.

\textbf{Model-Free Implementation:} Inspired by the test of uniformity via a $F_0$ transformation \cite{fernandez2019kernel}, we transform our data $U_i=F_0(T_i)$ to generate pairs $(U_i,\Delta_i)$. Notice that since $F_0$ is monotone $U_i=F_0(T_i)=\min\{F_0(X_i),F_0(C_i)\}$, thus $\Delta_i$ remains consistent. Under this transformation, testing the null hypothesis is equivalent to test whether $F_0(X_i)$ is distributed as a uniform random variable, thus, in this setting, $\lambda_0=\lambda_{\mathcal{U}}=\frac{1}{1-x}$ and 
$$(\mathcal{T}_0^{(m)} {\omega})(u,\delta)=\delta {\omega'(u)}(1-u) - \omega(u)+\omega(0)$$
for $u=F_0(x)$  (notice that $F_0(0) = 0$). It will be shown in the experiments that this transformation is beneficial in terms of power performance. 
Similarly, we can exploit that $\Lambda_0(X) \sim \textnormal{Exp}(1)$ under the null when the model is described via the cumulative hazard function.

\subsection{Proportional Stein Operator}
In some scenarios, we are interested in the shape of the hazard function up to a multiplicative constant, i.e. $\lambda_0(t) = \gamma\lambda(t)$ where we know $\lambda(t)$ but not the constant $\gamma$. The family indexed by $\gamma$ is called a proportional hazards family and it is one of the key objects of study in Survival Analysis. This object is fundamental because sometimes it is more important to test for qualitative results as ``the hazard rate is growing at a constant speed", rather than obtaining precise values of the hazard function. If we only know $\lambda_X(t)$ up to constant and we can ensure that $\omega(0)\lambda(0) = 0$, then we can define a Stein operator based on unnormalized hazard. 

In order to define our operator, we assume that
\begin{align}
    \int_{\R_+}&|(\omega(x)\lambda_0(x))'|dx<\infty, \quad \text{and} \nonumber\\
    \omega(0)\lambda_0(0)&= \lim_{x\to \infty} \omega(x)\lambda_0(x)=0, \quad \forall  \omega \in \mathcal H.\label{eqn:conditionP}
\end{align}
As usual, we write $\mathcal H^{(p)}$ to indicate that $\mathcal H$ satisfies property \eqref{eqn:conditionP}. Note that for any function $\omega \in \mathcal H^{(p)}$ it holds that
\begin{align}
    \int_0^\infty \frac{(\omega(x)\lambda_0(x))'}{\lambda_0(x)} \lambda_0(x)dx=0. \label{eqn:conditionP1}
\end{align}
The integral above can be estimated  using the Nelson-Aalen estimator \cite{nelson1972theory}, leading to the statistic
$$\frac{1}{n} \sum_{i=1}^n \frac{(\omega(T_i)\lambda_0(T))'}{\lambda_0(T_i)}\frac{\Delta_{i}}{Y(T_i)/n},$$
where  $Y(t) = \sum_{k=1}^n \ind_{\{T_k\geq t\}}$ is the so-called \emph{risk function}, which counts the number of individuals at risk at time $t$. This suggests the following operator 
\begin{align}
   (\widehat{\mathcal T_0}^{(p)} \omega)(x,\delta) = \left( \omega'(x)+\frac{\omega(x)\lambda_0'(x)}{\lambda_0(x)}\right) \frac{\delta}{Y(x)/n}.\label{eqn:defiHT0P}
\end{align}
In the definition above we use the notation $\widehat{\mathcal T_0}^{(p)}$ to indicate that, the function $Y(t)$ depends on all data points, hence $\widehat{\mathcal T_0}^{(p)}$ can be seen as an empirical estimator of a deterministic operator. Indeed, if $(T_i,\Delta_i) \sim \mu_0$, then $\frac{Y(x)}n \to S_C(x)S_0(x)$, which indicates that under the null hypothesis, the operator $\widehat{\mathcal T_0}^{(p)}$ is similar to $\mathcal T_0^{(p)}$, given  by
$$(\mathcal T_0^{(p)} \omega)(x,\delta) =  \left(\omega'(x)+\frac{\omega(x)\lambda_0'(x)}{\lambda_0(x)}\right)\frac{\delta}{S_C(x)S_X(x)}.$$

This operator cannot be directly evaluated since we do not have access to $S_C$. The following proposition establishes the formal relation between $\widehat{\mathcal T_0}^{(p)}$ and $\mathcal T_0^{(p)}$.
\begin{Proposition}\label{Prop:Tp_identity}
Let $(T_i,\Delta_i) \sim \mu_0$, then for every $\omega \in \mathcal H^{(p)}.$
\begin{align}
\frac{1}{n}\sum_{i=1}^n(\widehat{\mathcal{T}}_0^{(p)}\omega)(T_i,\Delta_i)
\overset{\Pr}{\to} \E_0\left[(\mathcal{T}_0^{(p)}\omega)(T_1,\Delta_1)\right]=0.
\end{align}
\end{Proposition}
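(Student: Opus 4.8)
The plan is to recast the statistic in the language of counting processes and to exploit the martingale structure underlying the Nelson--Aalen estimator. Write $g(x)=\omega'(x)+\omega(x)\lambda_0'(x)/\lambda_0(x)=(\omega(x)\lambda_0(x))'/\lambda_0(x)$, and introduce $N(t)=\sum_{i=1}^n\ind_{\{T_i\le t,\,\Delta_i=1\}}$ together with the risk process $Y(t)=\sum_{i=1}^n\ind_{\{T_i\ge t\}}$. Since $dN$ charges only times at which $Y\ge 1$,
\[
\frac1n\sum_{i=1}^n(\widehat{\mathcal T}_0^{(p)}\omega)(T_i,\Delta_i)=\sum_{i=1}^n\frac{g(T_i)\Delta_i}{Y(T_i)}=\int_0^\infty\frac{J(s)g(s)}{Y(s)}\,dN(s),\qquad J(s)=\ind_{\{Y(s)>0\}}.
\]
Under the null, Aalen's multiplicative intensity model (see \Appendix~\ref{app:knwon_identity}) gives that $M(t)=N(t)-\int_0^t Y(s)\lambda_0(s)\,ds$ is a square-integrable martingale with predictable variation $\langle M\rangle_t=\int_0^t Y(s)\lambda_0(s)\,ds$, whence
\[
\int_0^\infty\frac{Jg}{Y}\,dN=\underbrace{\int_0^\infty\frac{Jg}{Y}\,dM}_{=:A_n}\;+\;\underbrace{\int_0^\infty J(s)g(s)\lambda_0(s)\,ds}_{=:B_n}.
\]

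For the drift term, note that $Y(s)/n\to\pi(s):=S_C(s)S_0(s)$ almost surely for each $s$ and that, along a fixed infinite i.i.d.\ draw, $Y(s)$ is nondecreasing in $n$, so $J(s)\uparrow\ind_{\{\pi(s)>0\}}$. Dominated convergence with dominating function $|g(s)\lambda_0(s)|=|(\omega(s)\lambda_0(s))'|$ (integrable by \eqref{eqn:conditionP}), together with the standing assumption that $\pi$ is strictly positive on the support of $f_0$, yields $B_n\to\int_0^\infty g(s)\lambda_0(s)\,ds=\int_0^\infty(\omega(s)\lambda_0(s))'\,ds=\lim_{x\to\infty}\omega(x)\lambda_0(x)-\omega(0)\lambda_0(0)=0$, by the boundary part of \eqref{eqn:conditionP}; this is exactly \eqref{eqn:conditionP1}. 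The same computation, combined with \eqref{eqn:expectedPhi} and $\lambda_0=f_0/S_0$, identifies the target: $\E_0[(\mathcal T_0^{(p)}\omega)(T,\Delta)]=\int_0^\infty g(s)\lambda_0(s)\,ds=0$.

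It remains to show $A_n\overset{\Pr}{\to}0$. I would split $A_n=A_n'+A_n''$ at a truncation level $\tau$ with $\pi(\tau)>0$, $A_n'=\int_0^\tau\frac{Jg}{Y}\,dM$. On $[0,\tau]$ the uniform law of large numbers $\sup_{s\le\tau}|Y(s)/n-\pi(s)|\to 0$ gives $Y(s)/n\ge\pi(\tau)/2$ uniformly with probability tending to one, so the predictable variation of $A_n'$ is at most $\tfrac{2}{n\pi(\tau)}\int_0^\tau g^2\lambda_0\,ds\to 0$ (finite since $g$ is continuous and $\lambda_0$ bounded on $[0,\tau]$), and Lenglart's inequality gives $A_n'\overset{\Pr}{\to}0$. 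For the tail, using nonnegativity and the compensator formula, $\E|A_n''|\le\E\!\int_\tau^\infty\!\tfrac{J|g|}{Y}\,dN+\E\!\int_\tau^\infty\! J|g|\lambda_0\,ds=2\,\E\!\int_\tau^\infty\! J|g|\lambda_0\,ds\le 2\!\int_\tau^\infty\!|(\omega\lambda_0)'|\,ds$, which is arbitrarily small for $\tau$ large by \eqref{eqn:conditionP}; choosing $\tau$ first and then $n$ large gives $A_n\overset{\Pr}{\to}0$, hence $\tfrac1n\sum_i(\widehat{\mathcal T}_0^{(p)}\omega)(T_i,\Delta_i)=A_n+B_n\overset{\Pr}{\to}0=\E_0[(\mathcal T_0^{(p)}\omega)(T_1,\Delta_1)]$. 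The main obstacle is precisely this tail control of the martingale term --- the region where the risk set $Y(t)/n$ degenerates, the classical delicate point in establishing consistency of Nelson--Aalen--type functionals; it is there that the $L^1$-integrability hypothesis in \eqref{eqn:conditionP} on $(\omega\lambda_0)'$ (beyond the boundary condition) does the work, together with positivity of $\pi$ on the support of $f_0$.
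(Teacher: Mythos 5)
Your route is genuinely different from the paper's. The paper first proves that $\frac1n\sum_i\bigl((\widehat{\mathcal T}_0^{(p)}\omega)-(\mathcal T_0^{(p)}\omega)\bigr)(T_i,\Delta_i)\to 0$ by splitting at a level $t_\varepsilon$ and invoking Glivenko--Cantelli on the compact part and Gill's empirical-process bounds ($\sup_{t\le\tau_n}|1-(Y/n)/S_T|=O_p(1)$ and $\sup_{t\le\tau_n}nS_T(t)/Y(t)=O_p(1)$) on the tail, and then applies the law of large numbers to the i.i.d.\ sum of $(\mathcal T_0^{(p)}\omega)(T_i,\Delta_i)$; you instead write the statistic as $\int_0^\infty J g Y^{-1}dN$, use the Aalen/Doob--Meyer decomposition, kill the drift by dominated convergence together with \eqref{eqn:conditionP}, and handle the martingale part by Lenglart on a compact window plus a compensator $L^1$ bound on the tail. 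This avoids Gill's lemmas entirely and identifies the limit $0$ directly through the drift. Your explicitly stated ``standing assumption'' that $S_0S_C>0$ on the support of $f_0$ is not in the paper's hypotheses, but the paper's own proof uses it implicitly (in the cancellation giving $\E_0[(\mathcal T_0^{(p)}\omega)(T,\Delta)]=\int_0^\infty(\omega\lambda_0)'\,dt$ and in its tail limits), so making it explicit is not a discrepancy.

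There is, however, one step that can genuinely fail as written: the compact part. You bound $\langle A_n'\rangle\le \frac{2}{n\pi(\tau)}\int_0^\tau g^2\lambda_0\,ds$ and assert finiteness ``since $g$ is continuous and $\lambda_0$ bounded on $[0,\tau]$''. Condition \eqref{eqn:conditionP} only controls the first-order quantity $\int|(\omega\lambda_0)'|$, and $g=(\omega\lambda_0)'/\lambda_0$ need not be bounded near $0$: the requirement $\omega(0)\lambda_0(0)=0$ is precisely designed to admit hazards with $\lambda_0(0)=0$ (e.g.\ the Weibull shape-$2$ null used for pKSD in the experiments) combined with kernels for which $\omega(0)\neq 0$; in that case $g(x)\sim\omega(0)/x$ near $0$, so $\int_0^\tau g^2\lambda_0\,ds=\infty$, $A_n'$ is not locally square-integrable, and the Lenglart bound is vacuous. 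The gap is repairable with your own tools: excise $[0,\eta]$ and bound $\E\bigl|\int_0^\eta JgY^{-1}dM\bigr|\le 2\int_0^\eta|(\omega\lambda_0)'|\,ds$, which is small by absolute continuity of the integral, exactly as you treated the right tail, and apply Lenglart only on $[\eta,\tau]$, where $g$ is bounded provided $\lambda_0$ is continuous and positive on $(0,\infty)$. With that repair the argument is correct; note for contrast that the paper's compact-part step only ever needs the first-moment quantity $\int_0^{t_\varepsilon}|(\omega\lambda_0)'|\,ds$.
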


\section{Censored-Data Kernel Stein Discrepancy} \label{sec:CKSD}
In this section, we derive censored-data Kernel Stein Discrepancies (c-KSD) using each of our three  Stein operators defined in the previous section. The idea is to compare the largest discrepancy between two distributions $f_X$ and $f_0$ over a class of test functions in the RKHS $\mathcal H$. Since we have access to censored data, we compare $f_X$ and $f_0$ through the measures $\mu_X$ and $\mu_0$, defined in Section~\ref{sec:background}.

We proceed to defined three censored-data kernel Stein discrepancies: the Survival Kernel Stein Discrepancy ($\operatorname{s-KSD}$), the Martingale Kernel Stein Discrepancy ($\operatorname{m-KSD}$), and the Proportional Kernel Stein Discrepancy ($\operatorname{p-KSD}$) based on the respective Stein operators $\mathcal T_0^{(s)}$, $\mathcal T_0^{(m)}$ and $\widehat{ \mathcal T_0}^{(p)}$. In general, for any given Stein operator $\mathcal T_0^{(c)}:\mathcal H^{(c)} \to \Omega$ we define the $\operatorname{c-KSD}$ as
\begin{align*}
\operatorname{c-KSD}(f_X\|f_0) &= \sup_{\omega \in B_1(\mathcal H^{(c)})}\E_{X}[({\mathcal T_0}^{(c)}  \omega)(T,\Delta)].
\end{align*}
Denote by $K^{(c)}$ the reproducing kernel of $\mathcal H^{(c)}$. By using this kernel we can get a close-form expression for $\operatorname{c-KSD}$: For any of the operators $\mathcal T_0^{(c)}$, we define the application of $\mathcal T_0^{(c)}$ on $K^{(c)}(x,\cdot)$ as a function $\R_+ \to \R$ which is defined as  $(\mathcal T_0^{(c)} \omega)(x,\delta)$ 
 but replacing $\omega(x)$ by $K^{(c)}(x,\cdot)$ and $\omega'(x)$ by $\frac{\partial }{\partial x}K^{(c)}(x,\cdot)$.
For example, for $c=m$, we get that $\left[\mathcal (T_0^{(m)}K^{(m)})(x,\delta)\right](\cdot)$ equals
 $$ \frac{\delta}{\lambda_0(x)}\left(\frac{\partial }{\partial x}K^{(m)}(x,\cdot)\right)-(K^{(m)}(x,\cdot)-K^{(m)}(0,\cdot)), $$
 which the reader should compare with equation~\eqref{eqn:defiT0M}.

Recall that for $c \in \{s,m,p\}$, we assumed that if $\omega \in \mathcal H^{(c)}$ then $\omega' \in \mathcal H^{(c)}$, and thus $\xi^{(c)}(x,\delta)(\cdot)  = \left[(\mathcal T^{(c)} K^{(c)})(x,\delta)\right](\cdot) \in \mathcal H^{(c)}$ since all operators involve $\omega$ or $\omega'$. Define the Stein kernel $h^{(c)}:(\R_+\times \{0,1\})^2 \to \R$ by
\begin{align*}
  h^{(c)}((x,\delta),(x',\delta')) = \langle\xi^{(c)}(x,\delta) ,\xi^{(c)}(x',\delta')  \rangle_{\mathcal H^{(c)}}  
\end{align*}
The following proposition gives a closed form for the kernel Stein discrepancies $\operatorname{c-KSD}(f_X\| f_0).$

\begin{Proposition}\label{Prop:cKSDderive}
For $c \in \{s,m,p\}$, and let $(T,\Delta)$ and $(T',\Delta')$ be independent samples from $\mu_X$, and suppose that
\begin{align}
    \E_X\left[\sqrt{h^{(c)}((T,\Delta),(T,\Delta))}\right]<\infty,\label{eqn:condBotchner}
\end{align}
then
\begin{align*}
    \left(\operatorname{c-KSD}(f_X\|f_0)\right)^2= \E_X\left[h^{(c)}((T,\Delta),(T',\Delta')) \right].
\end{align*}
\end{Proposition}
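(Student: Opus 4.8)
The plan is to follow the standard RKHS argument that underlies the closed-form expressions in \cite{liu2016kernelized,chwialkowski2016kernel}, adapted to the censored setting where the test functions act on pairs $(T,\Delta)$. First I would record the reproducing property for the Stein-transformed feature map: by construction $\xi^{(c)}(x,\delta) = \left[(\mathcal T_0^{(c)} K^{(c)})(x,\delta)\right](\cdot) \in \mathcal H^{(c)}$, and since every operator $\mathcal T_0^{(c)}$ acts linearly through $\omega$ and $\omega'$, for any $\omega \in \mathcal H^{(c)}$ one has the identity $(\mathcal T_0^{(c)}\omega)(x,\delta) = \langle \omega, \xi^{(c)}(x,\delta)\rangle_{\mathcal H^{(c)}}$. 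This is just the reproducing property applied termwise, using that $\omega'(x) = \langle \omega, \partial_x K^{(c)}(x,\cdot)\rangle_{\mathcal H^{(c)}}$, which holds under our standing assumption that $\omega' \in \mathcal H^{(c)}$.

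Next I would pass the expectation under $\mu_X$ through the inner product. The condition~\eqref{eqn:condBotchner}, $\E_X[\sqrt{h^{(c)}((T,\Delta),(T,\Delta))}] = \E_X[\|\xi^{(c)}(T,\Delta)\|_{\mathcal H^{(c)}}] < \infty$, is exactly the Bochner-integrability hypothesis needed to define the mean embedding $g^{(c)} := \E_X[\xi^{(c)}(T,\Delta)] \in \mathcal H^{(c)}$ and to justify exchanging $\E_X$ with $\langle \omega, \cdot\rangle_{\mathcal H^{(c)}}$. Hence $\E_X[(\mathcal T_0^{(c)}\omega)(T,\Delta)] = \langle \omega, g^{(c)}\rangle_{\mathcal H^{(c)}}$ for every $\omega \in B_1(\mathcal H^{(c)})$. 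Taking the supremum over the unit ball gives $\operatorname{c-KSD}(f_X\|f_0) = \sup_{\omega \in B_1(\mathcal H^{(c)})}\langle \omega, g^{(c)}\rangle_{\mathcal H^{(c)}} = \|g^{(c)}\|_{\mathcal H^{(c)}}$, the maximiser being $\omega = g^{(c)}/\|g^{(c)}\|_{\mathcal H^{(c)}}$ (or the conclusion is trivial when $g^{(c)} = 0$).

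Finally I would square and expand: $\|g^{(c)}\|_{\mathcal H^{(c)}}^2 = \langle \E_X[\xi^{(c)}(T,\Delta)], \E_X[\xi^{(c)}(T',\Delta')]\rangle_{\mathcal H^{(c)}}$, and by Fubini (again licensed by~\eqref{eqn:condBotchner}, since the integrand is dominated by $\|\xi^{(c)}(T,\Delta)\|\,\|\xi^{(c)}(T',\Delta')\|$ and $(T,\Delta), (T',\Delta')$ are independent) this equals $\E_X[\langle \xi^{(c)}(T,\Delta), \xi^{(c)}(T',\Delta')\rangle_{\mathcal H^{(c)}}] = \E_X[h^{(c)}((T,\Delta),(T',\Delta'))]$, which is the claimed identity.

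The only genuinely delicate point is the interchange of expectation and inner product, i.e. the existence of the Bochner integral $g^{(c)}$; everything else is formal. Here I expect the mild additional care needed is that each operator also carries a boundary term (e.g. $\lambda_0(0)\omega(0)$ for $c=s$, or $\omega(0)$ for $c=m$), but these are again bounded linear functionals of $\omega$ and are absorbed into $\xi^{(c)}(x,\delta)$ via the feature $K^{(c)}(0,\cdot)$, so the same argument applies verbatim once one checks $\xi^{(c)}(x,\delta)$ is the full feature including that term. One should also note that the operators $\widehat{\mathcal T_0}^{(p)}$ versus $\mathcal T_0^{(p)}$ differ, and the proposition as stated uses the (deterministic) $\mathcal T_0^{(c)}$ in the definition of $h^{(c)}$ and of $\operatorname{c-KSD}$; the estimator version is treated separately via Proposition~\ref{Prop:Tp_identity}. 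No further regularity is needed beyond~\eqref{eqn:condBotchner} and the standing assumptions on $\mathcal H^{(c)}$.
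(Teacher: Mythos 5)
Your proposal is correct and follows essentially the same route as the paper: write $(\mathcal T_0^{(c)}\omega)(T,\Delta)=\langle \omega,\xi^{(c)}(T,\Delta)\rangle_{\mathcal H^{(c)}}$, use condition~\eqref{eqn:condBotchner} to define the Bochner mean embedding and interchange expectation with the inner product, identify the supremum over $B_1(\mathcal H^{(c)})$ with the RKHS norm, and expand the squared norm via independence of $(T,\Delta)$ and $(T',\Delta')$ to obtain $\E_X\left[h^{(c)}((T,\Delta),(T',\Delta'))\right]$. Your added remarks on the boundary terms and on using the deterministic operator for $c=p$ are consistent with the paper's treatment.
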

Detailed forms and the derivation for Stein kernels $h^{(c)}((x,\delta),(x',\delta'))$ can be found in \Appendix~\ref{sec:explih}.

\section{Goodness-of-fit Test via $\operatorname{c-KSD}$}\label{sec:gof-test}

In this section, we study goodness-of-fit testing procedures based on $\operatorname{c-KSD}$. We begin by estimating  $\operatorname{c-KSD}^2$ using
\begin{align*}
    \widehat{\operatorname{c-KSD}^2}(f_X \| f_0)
    &= \frac{1}{n^2}\sum_{i=1}^n\sum_{j=1}^n h^{(c)}((T_i,\Delta_i),(T_j,\Delta_j))
\end{align*}
where $(T_i,\Delta_i)$ are independent samples from $\mu_X$. By construction, under the null hypothesis, the estimator above should be close to zero, while under the alternative we expect it to be separated from zero.

\subsection{Theoretical Analysis}\label{sec:gof_theory}
We state some technical conditions that feature our analysis in order to establish  the asymptotic behavior of $\widehat{\text{c-KSD}^2}$.

\subsubsection*{Technical Conditions}
\textbf{a) \emph{Reproducing kernel conditions:}}  We assume that $K$ has continuous second-order  derivatives, and that $K(x,y)$ and $\frac{\partial^2}{\partial x\partial y}K(x,y)$ are bounded and $c_0$-universal kernels.\\
\newline
\textbf{b) \emph{Boundary condition:}} $\lim_{x\to 0+}\sqrt{K(x,x)}\lambda_0(x)<\infty.$\\
\newline
\textbf{c) \emph{Null integrability conditions:}}  Let $(T,\Delta),(T',\Delta')\overset{i.i.d.}{\sim}\mu_0$, and recall that $\E_0=\E_{\mu_0}$. Depending on $c\in\{s,m,p\}$, we assume: 
\begin{itemize}
    \item[1)] $\operatorname{s-KSD}$: 
    \begin{itemize}
        \item[i)]$\E_0[\phi(T,\Delta)^2 |K(T,T)|]<\infty$, and 
        \item[ii)]$\E_0[\phi(T,\Delta)^2\phi(T',\Delta')^2K(T,T')^2]<\infty$,
    \end{itemize}
    where  $\phi(x,\delta) = \delta\frac{\lambda_0'(x)}{\lambda_0(x)}-\lambda_0(x)$.
    \item[2)]  $\operatorname{m-KSD}$: 
        \begin{itemize}
        \item[i)] $\E_0\left[\frac{|K^\star(T,T)|\Delta}{\lambda_0(T)^2}\right]<\infty$, and 
        \item[ii)]$\E_0\left[\frac{K^\star(T,T')^2\Delta\Delta'}{\lambda_0(T)^2\lambda_0(T')^2}\right]<\infty$,
        \end{itemize}
        where $K^\star(x,y) = \frac{\partial^2 }{\partial x \partial y } K(x,y)$.
    \item[3)]  $\operatorname{p-KSD}$: 
            \begin{itemize}
        \item[i)] $\E_0\left[\frac{|K^\star(T,T)|\Delta}{(f_0(T)S_C(T))^2}\right]<\infty$, and
        \item[ii)]
        $\E_0\left[\frac{K^\star(T,T')^2\Delta\Delta'}{(f_0(T)f_0(T')S_C(T)S_C(T'))^2}\right]<\infty$,
    \end{itemize}
    where $K^\star(x,y) = \left(\frac{\partial^2}{\partial x \partial y} K(x,y)\lambda_0(x)\lambda_0(y)\right)$.
\end{itemize}

\textbf{d) \emph{Alternative integrability conditions:}} Let $(T,\Delta)\sim\mu_X$. Then, for each $c\in\{s,m,p\}$ we assume: 
\begin{itemize}
    \item[1)]  $\operatorname{s-KSD}$: 
    \begin{itemize}
        \item[i)] $\E_X[\phi(T,\Delta)^2|K(T,T)|]<\infty$,
    \end{itemize}
    where $\phi(x,\delta)=\delta \frac{\lambda_0'(x)}{\lambda_0(x)}-\lambda_0(x)$.
        \item[2)]  $\operatorname{m-KSD}$: 
                \begin{itemize}
        \item[i)] $\E_X\left[\frac{|K^\star(T,T)|\Delta}{\lambda_0(T)^2}\right]<\infty$,
    \end{itemize}
    where $K^\star(x,y) = \frac{\partial^2}{\partial x \partial y} K(x,y)$.
        \item[3)]  $\operatorname{p-KSD}$:
            \begin{itemize}
        \item[i)] $\E_X\left[\frac{|K^\star(T,T)|\Delta}{S_T(T)^2\lambda_0(T)^2}\right]<\infty$,
 \end{itemize}
 where $K^\star(x,y) = \left(\frac{\partial^2}{\partial x \partial y} K(x,y)\lambda_0(x)\lambda_0(y)\right)$.
\end{itemize} 

The following theorem establishes consistency of our empirical kernel Stein discrepancies to their population versions.

\begin{Theorem}\label{thm:asymptotic_h1}[Asymptotics under the alternative $H_1$]
Let  $c \in \{s,m,p\}$, and suppose that  $f_X$ satisfies conditions a),  b), and the corresponding condition d).
Then it holds 
$$\left(\widehat{\operatorname{c-KSD}}(f_X \| f_0)\right)^2 \overset{\Pr}{\to} \left(\operatorname{c-KSD}(f_X \| f_0)\right)^2.$$
\end{Theorem}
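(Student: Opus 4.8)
The plan is to recognize $\widehat{\operatorname{c-KSD}^2}(f_X\|f_0)=\frac1{n^2}\sum_{i,j}h^{(c)}((T_i,\Delta_i),(T_j,\Delta_j))$ as a $V$-statistic of order two with symmetric kernel $h^{(c)}$, and to reduce the claim to a law of large numbers for $V$-statistics. This requires two ingredients: (i) an integrability bound on $h^{(c)}$ obtained from conditions a), b) and the relevant part of d); and (ii) the standard split of a $V$-statistic into a $U$-statistic plus a vanishing diagonal. For $c\in\{s,m\}$ this is essentially all there is; the proportional case $c=p$ is genuinely different because the operator $\widehat{\mathcal T}_0^{(p)}$ carries a data-dependent normalization $Y(x)/n$, so $\widehat{\operatorname{p-KSD}^2}$ is not a bona fide $V$-statistic and needs an extra bridging argument. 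Equivalently, one may phrase everything through the feature map: writing $h^{(c)}((x,\delta),(x',\delta'))=\langle\xi^{(c)}(x,\delta),\xi^{(c)}(x',\delta')\rangle_{\mathcal{H}^{(c)}}$, we have $\widehat{\operatorname{c-KSD}^2}=\big\|\tfrac1n\sum_i\xi^{(c)}(T_i,\Delta_i)\big\|^2$ and $\operatorname{c-KSD}^2=\big\|\E_X[\xi^{(c)}(T,\Delta)]\big\|^2$, so it suffices to establish the Hilbert-space law of large numbers $\tfrac1n\sum_i\xi^{(c)}(T_i,\Delta_i)\to\E_X[\xi^{(c)}(T,\Delta)]$ in $\mathcal{H}^{(c)}$, then use continuity of $\|\cdot\|^2$.

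First I would carry out the integrability step. Using the reproducing property one computes $\langle\partial_xK(x,\cdot),\partial_xK(x,\cdot)\rangle=K^\star(x,x)$, $\langle K(x,\cdot),K(x,\cdot)\rangle=K(x,x)$, and cross terms of the form $\langle\partial_xK(x,\cdot),K(x,\cdot)\rangle$ which are bounded by $\sqrt{K^\star(x,x)K(x,x)}$. Expanding $\|\xi^{(c)}(x,\delta)\|^2=h^{(c)}((x,\delta),(x,\delta))$ and using boundedness of $K$ and $K^\star$ from a) and $\lim_{x\to0+}\sqrt{K(x,x)}\lambda_0(x)<\infty$ from b), one gets, up to a constant multiple, $h^{(m)}((x,\delta),(x,\delta))\lesssim \tfrac{\delta K^\star(x,x)}{\lambda_0(x)^2}+1$, $h^{(s)}((x,\delta),(x,\delta))\lesssim \delta K^\star(x,x)+\phi(x,\delta)^2K(x,x)+\lambda_0(0)^2K(0,0)$, and analogously the $p$ bound with $1/(S_C S_X)$-type weights. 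Taking $\E_X$ and invoking the corresponding condition d) then yields $\E_X[h^{(c)}((T,\Delta),(T,\Delta))]<\infty$. By Cauchy–Schwarz in $\mathcal{H}^{(c)}$, $|h^{(c)}((x,\delta),(x',\delta'))|\le \sqrt{h^{(c)}((x,\delta),(x,\delta))}\sqrt{h^{(c)}((x',\delta'),(x',\delta'))}$, so $\E_X\sqrt{h^{(c)}((T,\Delta),(T,\Delta))}<\infty$ (hence \eqref{eqn:condBotchner} holds and Proposition~\ref{Prop:cKSDderive} identifies the population limit) and $\E_X|h^{(c)}((T,\Delta),(T',\Delta'))|<\infty$; equivalently $\E_X\|\xi^{(c)}(T,\Delta)\|<\infty$.

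Next, for $c\in\{s,m\}$, decompose $\widehat{\operatorname{c-KSD}^2}=\tfrac{n(n-1)}{n^2}U_n+\tfrac1{n^2}\sum_{i=1}^n h^{(c)}((T_i,\Delta_i),(T_i,\Delta_i))$, where $U_n$ is the order-two $U$-statistic with kernel $h^{(c)}$. Since $\E_X|h^{(c)}|<\infty$, Hoeffding's strong law for $U$-statistics gives $U_n\to\E_X[h^{(c)}((T,\Delta),(T',\Delta'))]=\operatorname{c-KSD}^2$ almost surely, and $\tfrac{n(n-1)}{n^2}\to1$. The diagonal term equals $\tfrac1n\cdot\big(\tfrac1n\sum_i h^{(c)}((T_i,\Delta_i),(T_i,\Delta_i))\big)$; by the ordinary strong law (the summands are i.i.d.\ with finite mean from the integrability step) the parenthesis converges a.s.\ to $\E_X[h^{(c)}((T,\Delta),(T,\Delta))]<\infty$, so the whole term vanishes a.s. Combining, $\widehat{\operatorname{c-KSD}^2}\overset{\Pr}{\to}\operatorname{c-KSD}^2$ for $c\in\{s,m\}$ (indeed almost surely).

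The main obstacle is the proportional case. There the statistic is built from $\widehat{\mathcal T}_0^{(p)}$, whose factor $Y(x)/n$ replaces the deterministic $S_C(x)S_X(x)=S_T(x)$ appearing in $\mathcal T_0^{(p)}$, so Steps above apply directly only to the idealized kernel $h^{(p)}$ formed with $S_T$, whose $V$-statistic converges to $\operatorname{p-KSD}^2$ as before. It remains to show $\tfrac1{n^2}\sum_{i,j}\big|\widehat h^{(p)}((T_i,\Delta_i),(T_j,\Delta_j))-h^{(p)}((T_i,\Delta_i),(T_j,\Delta_j))\big|\overset{\Pr}{\to}0$, which via the feature-map formulation amounts to bounding $\tfrac1n\sum_i\|\widehat\xi^{(p)}(T_i,\Delta_i)-\xi^{(p)}(T_i,\Delta_i)\|$, where $\widehat\xi^{(p)}(x,\delta)-\xi^{(p)}(x,\delta)=\big(\partial_xK(x,\cdot)+\tfrac{\lambda_0'(x)}{\lambda_0(x)}K(x,\cdot)\big)\,\delta\big(\tfrac{n}{Y(x)}-\tfrac1{S_T(x)}\big)$. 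Writing $\big|\tfrac{n}{Y(x)}-\tfrac1{S_T(x)}\big|\le \tfrac1{S_T(x)}\cdot\tfrac{|Y(x)/n-S_T(x)|}{Y(x)/n}$, one would combine the uniform consistency $\sup_t|Y(t)/n-S_T(t)|\overset{\Pr}{\to}0$ (a Glivenko–Cantelli / counting-process fact) with the moment control of condition d.3.i, which bounds the $1/S_T$-weighted norm. The delicate part — and where the survival-analysis machinery is genuinely needed — is handling the tail region where $S_T$ and $Y/n$ are both small, i.e.\ controlling the ratio $S_T(t)/(Y(t)/n)$ uniformly on the support so that the error is $o_p(1)$; once this is done, combining with the idealized-kernel convergence and continuity of $\|\cdot\|^2$ gives $\widehat{\operatorname{p-KSD}^2}\overset{\Pr}{\to}\operatorname{p-KSD}^2$.
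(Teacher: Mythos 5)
Your treatment of $c\in\{s,m\}$ is correct and essentially the paper's own argument: the paper likewise reduces to the law of large numbers for $V$-statistics after verifying $\E_X\left[|h^{(c)}((T,\Delta),(T,\Delta))|\right]<\infty$ and $\E_X\left[|h^{(c)}((T,\Delta),(T',\Delta'))|\right]<\infty$ under conditions a), b), d) (its moment lemma is proved with exactly your diagonal expansions), and your explicit $U$-statistic-plus-diagonal decomposition is just a more hands-on phrasing of that step. Your diagnosis of $c=p$ as the genuinely different case, and the bridging strategy of replacing the random kernel $\widehat h^{(p)}$ (built with $Y(x)/n$) by the idealized kernel $h^{(p)}$ (built with $S_T$) through the feature map, is also the paper's route.

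For $c=p$, however, you stop precisely at the decisive step: you flag the tail region as ``the delicate part'' and then assume it handled (``once this is done''). That step is the substantive content of the proof and cannot be closed by the ingredients you actually list. Plain Glivenko--Cantelli, $\sup_t|Y(t)/n-S_T(t)|\overset{\Pr}{\to}0$, is useless near the right end of the support, where $1/S_T(x)$ is unbounded and $Y(x)/n$ can be of order $1/n$, and condition d.3.i supplies only an integrability weight, not uniform control of $n/Y(x)$ against $1/S_T(x)$. The paper closes the gap as follows: pick $t_\varepsilon$ so that the tail double integral $\int_{t_\varepsilon}^{\infty}\int_{t_\varepsilon}^{\infty}\frac{|K^\star(x,y)|}{\lambda_0(x)\lambda_0(y)S_T(x)S_T(y)}S_C(x)S_C(y)f_X(x)f_X(y)\,dx\,dy\leq\varepsilon$ (possible by condition d.3); on $[0,t_\varepsilon]$, where $S_T$ is bounded away from zero, Glivenko--Cantelli gives $\sup_{t\leq t_\varepsilon}\left|\frac{1}{Y(t)/n}-\frac{1}{S_T(t)}\right|=o_p(1)$; on $(t_\varepsilon,\infty)$ it invokes the counting-process bounds $\sup_{t\leq\tau_n}\left|1-\frac{Y(t)/n}{S_T(t)}\right|=O_p(1)$ (Gill 1980, Theorem 3.2.1) and $\sup_{t\leq\tau_n}\frac{nS_T(t)}{Y(t)}=O_p(1)$ (Gill 1983, Lemma 2.7), with $\tau_n=\max_iT_i$, which turn the tail contribution into $O_p(1)\cdot\varepsilon$; letting $\varepsilon\to0$ finishes. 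Unless you supply or cite this ratio bound up to $\tau_n$ together with the $t_\varepsilon$-truncation, your $c=p$ case remains an outline rather than a proof; the rest of your argument is sound.
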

The previous theorem is not enough to ensure  good behavior under the alternative as we need to be sure that the discrepancy of two different distribution functions $f_X$ and $f_0$ is different from 0 (regardless of censoring). We can prove this for $\operatorname{c-KSD}$ for $c\in \{s,m\}$. This does not hold true for $\operatorname{p-KSD}$ since it is designed to test if the hazard function $\lambda_X$ is proportional to $\lambda_0$, and not for goodness-of-fit testing purposes. Indeed, whenever the hazards are in a proportional relation, $\operatorname{p-KSD}$ is 0.

\begin{Theorem}\label{thm:injective}
Let $c\in\{s,m\}$. Assume $S_C(x)=0$ implies $S_X(x)=0$ and that $K$ is $c_0$-universal. Then, under Conditions $a)$, $b)$ and $d)$, $f_0\neq f_X$ implies $\operatorname{c-KSD}(f_0\|f_X)>0$. 
\end{Theorem}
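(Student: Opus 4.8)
\emph{Proof proposal.} I would prove the equivalent statement that $\operatorname{c-KSD}(f_X\|f_0)=0$ implies $f_X=f_0$, for $c\in\{s,m\}$, in four steps. \emph{Step 1 (from a vanishing discrepancy to a vanishing Stein witness).} Conditions a), b), d) imply the integrability hypothesis \eqref{eqn:condBotchner} of Proposition~\ref{Prop:cKSDderive} (via Cauchy--Schwarz), so $\operatorname{c-KSD}^2(f_X\|f_0)=\E_X[h^{(c)}((T,\Delta),(T',\Delta'))]=\|\mu_c\|_{\mathcal H^{(c)}}^2$, where $\mu_c:=\E_X[\xi^{(c)}(T,\Delta)]$ is the Bochner mean embedding of the Stein features from Section~\ref{sec:CKSD}. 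Hence $\operatorname{c-KSD}(f_X\|f_0)=0$ forces $\mu_c=0$, and by the reproducing property (for $K$ and its derivative) $\E_X[(\mathcal T_0^{(c)}\omega)(T,\Delta)]=\langle\omega,\mu_c\rangle_{\mathcal H^{(c)}}=0$ for every $\omega\in\mathcal H^{(c)}$.

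\emph{Step 2 (the Stein witness as a finite signed measure).} Plugging the explicit forms of $\E_X[(\mathcal T_0^{(c)}\omega)(T,\Delta)]$ from Section~\ref{sec: stein operator}, together with identities~\eqref{eqn:expectedPhi}--\eqref{eqn:expectedPhi0}, I would rewrite the identity of Step~1 as $\int_0^\infty\omega\,d\nu_c=0$ for all $\omega\in\mathcal H$, with $\nu_c$ a finite signed Borel measure on $\R_+$. For $c=s$ one reads off $\nu_s(dx)=\rho^{(s)}(x)\,dx+(\lambda_0(0)-\lambda_X(0))\,\delta_0(dx)$, where $\rho^{(s)}$ is the combination of the uncensored density $(\log(f_0/f_X))'S_Cf_X$ and the censored density $(\tfrac{S_X}{S_0}f_0-f_X)f_C$ computed after~\eqref{eqn:defiT0S}. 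For $c=m$, using the martingale identity~\eqref{eqn:martingaleidentity1} with the true hazard $\lambda_X$ under $\mu_X$ to rewrite $\E_X[\omega(T)]-\omega(0)=\E_X[\Delta\,\omega'(T)/\lambda_X(T)]$, the identity of Step~1 becomes $\int_0^\infty\omega'(x)\,r(x)\,dx=0$ with $r(x)=(1/\lambda_0(x)-1/\lambda_X(x))f_X(x)S_C(x)$; one integration by parts, with boundary contributions at $0$ and $\infty$ vanishing by boundedness of $\omega$, condition b), and the decay afforded by d), gives $\nu_m(dx)=-r'(x)\,dx-r(0)\,\delta_0(dx)$. That $\nu_c$ has finite total variation is exactly what the integrability in d) buys (strongly for $\operatorname{s-KSD}$, almost for free for $\operatorname{m-KSD}$).

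\emph{Step 3 ($c_0$-universality and decoding).} Since $K$ is $c_0$-universal, $\mathcal H$ is dense in $C_0(\R_+)$ in the uniform norm; the functional $\omega\mapsto\int\omega\,d\nu_c$ is continuous on $C_0(\R_+)$ (bounded by $\|\nu_c\|_{\mathrm{TV}}$) and vanishes on a dense subset, hence identically, so $\nu_c=0$ by Riesz representation. For $c=m$ this gives $r\equiv0$, hence $\lambda_0=\lambda_X$ on $\{f_X>0\}$; integrating, $\Lambda_0=\Lambda_X$ on $[0,\tau_X)$ with $\tau_X=\sup\{x:S_X(x)>0\}$ (both vanish at $0$), so $S_0=e^{-\Lambda_0}=e^{-\Lambda_X}=S_X$ and $f_0=f_X$. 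For $c=s$, $\nu_s=0$ yields $\lambda_0(0)=\lambda_X(0)$ and $\rho^{(s)}\equiv0$; here I would use the hypothesis ``$S_C=0\Rightarrow S_X=0$'' and monotonicity of $S_C$ to deduce $S_C>0$ (equivalently $S_T>0$) on $[0,\tau_X)$, divide out this strictly positive weight, and reduce $\rho^{(s)}=0$ to the scalar ODE $\psi'=\lambda_T\,\psi(\psi-1)$ for $\psi=\lambda_0/\lambda_X$, with initial value $\psi(0)=\lambda_0(0)/\lambda_X(0)=1$ coming from the atom; since $\psi\equiv1$ solves it and the right-hand side is locally Lipschitz in $\psi$, uniqueness forces $\psi\equiv1$ on $[0,\tau_X)$, i.e. $\lambda_0=\lambda_X$ on the support, and again $f_0=f_X$.

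\emph{Main obstacle.} The delicate points are Steps~2--3 in the survival case $c=s$: verifying that $\nu_s$ really is a finite signed measure (this is precisely where the comparatively strong integrability d.1.i for $\operatorname{s-KSD}$ is needed, in contrast with the nearly automatic condition for $\operatorname{m-KSD}$), and cleanly passing from the combined uncensored/censored density $\rho^{(s)}$ to an ODE with a usable initial condition. Both steps also need the mild regularity that $f_0,f_X$ are positive on a common interval support, so that one may divide by $S_Tf_X$ and invoke ODE uniqueness; the $c_0$-universality step and the martingale-operator decoding are otherwise routine.
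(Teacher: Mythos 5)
Your Step 1 is fine and matches how the paper implicitly uses Proposition~\ref{Prop:cKSDderive}. The genuine gap is in Step 3 for the survival case $c=s$. Your reduction of $\rho^{(s)}\equiv 0$ to the ODE $\psi'=\lambda_T\,\psi(\psi-1)$ for $\psi=\lambda_0/\lambda_X$ is algebraically correct, but the uniqueness argument needs the initial value $\psi(0)=1$, and the vanishing atom only gives $\lambda_0(0)=\lambda_X(0)$. This pins down $\psi(0)$ only if $f_0(0)=f_X(0)\neq 0$; if both densities vanish at the origin (e.g.\ a Weibull null with shape $>1$, one of the paper's running examples) the atom carries no information about the ratio, there is no initial condition, and the ODE admits the one-parameter family of solutions $1-\lambda_X/\lambda_0=C\,e^{\Lambda_T}$ (substitute $u=1/\psi$), so $\rho^{(s)}\equiv 0$ by itself does not force $\psi\equiv 1$; ruling out $C\neq 0$ requires additional global input that your argument does not supply. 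The paper disposes of the censored term differently: it introduces the analogous Stein operator built from the alternative density $f_X$ (together with the censoring hazard), whose expectation under $\mu_X$ vanishes by the Stein identity, and subtracts it; after this cancellation only the measure $(\log(f_0/f_X))'S_C f_X\,dx+(f_0(0)-f_X(0))\delta_0$ remains, so $c_0$-universality gives $(\log(f_0/f_X))'\equiv 0$ on $\{S_C f_X>0\}$ (here the hypothesis $S_C=0\Rightarrow S_X=0$ enters), hence $f_0\propto f_X$ and $f_0=f_X$ by normalization — no ODE and no initial condition are needed. Your proposal, by keeping both the uncensored and censored contributions in $\nu_s$, lands on a harder identification problem that it does not close.

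For $c=m$ your route also deviates from the paper in a way that costs you unstated hypotheses: integrating by parts to turn $\int_0^\infty \omega'(x)r(x)\,dx$ into $\int \omega\,d\nu_m$ requires $r$ to be absolutely continuous with integrable derivative and $r(x)\to 0$ at infinity, none of which follows from Conditions a), b), d). The paper avoids this entirely: it keeps the functional acting on $\omega'$, writes $\operatorname{m-KSD}^2(f_X\|f_0)=\int_0^\infty\int_0^\infty \alpha(x)K^\star(x,y)\alpha(y)\,dx\,dy$ with $\alpha=r$ and $K^\star=\frac{\partial^2}{\partial x\partial y}K$, and invokes the $c_0$-universality of $K^\star$ — which is exactly what Condition a) assumes — to conclude $\alpha\equiv 0$, hence $\lambda_0=\lambda_X$ on the relevant support and $f_0=f_X$. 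Adopting that argument removes your extra regularity and boundary-term assumptions; as written, your martingale case is repairable, but your survival case has a real hole.
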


Under the null distribution, $f_X = f_0$, we also have that $\widehat{\operatorname{c-KSD}}(f_0 \| f_0)\to 0$, but we can prove an even stronger result that follows from the theory of $V$-statistics.

\begin{Theorem}[Asymptotics under the null $H_0$]\label{thm:h_0}
Let  $c \in \{s,m,p\}$, and suppose that $f_X=f_0$ and that conditions a),  b), and the corresponding condition c) are satisfied. Then 
$$n\left(\widehat{\operatorname{c-KSD}}(f_X \| f_0)\right)^2 \overset{\mathcal D}{\to} r_c+\mathcal{Y}_c.$$
where $r_c$ is a constant and $\mathcal{Y}_c$ is an infinite sum of independent $\chi^2$ random variables.
\end{Theorem}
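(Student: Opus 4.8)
The plan is to recognise $n\,\widehat{\operatorname{c-KSD}^2}(f_0\|f_0)$ as (a negligible perturbation of) a degenerate $V$-statistic of order two and to invoke the classical spectral limit theorem for such objects. Write $Z_i=(T_i,\Delta_i)$, so under $H_0$ the $Z_i$ are i.i.d.\ $\mu_0$ and $\widehat{\operatorname{c-KSD}^2}=\frac1{n^2}\sum_{i,j}h^{(c)}(Z_i,Z_j)$ with $h^{(c)}$ symmetric and positive semidefinite, being an inner product of the features $\xi^{(c)}$. The crucial structural input is \emph{degeneracy under $\mu_0$}: since $\xi^{(c)}(x,\delta)=(\mathcal T_0^{(c)}K^{(c)})(x,\delta)\in\mathcal H^{(c)}$ and, for each fixed $y$, $K^{(c)}(y,\cdot)\in\mathcal H^{(c)}$ (this is where boundedness of $K$ together with conditions a) and b) enter, to verify the defining integrability requirements \eqref{eqn:conditionS}/\eqref{eqn:conditionM}/\eqref{eqn:conditionP}), the Stein identity \eqref{eq:stein_identity} applied to $\omega=K^{(c)}(y,\cdot)$ together with the reproducing property yields
\[
\E_0\big[h^{(c)}(Z,(y,\delta))\big]=\big\langle \E_0[\xi^{(c)}(Z)],\,\xi^{(c)}(y,\delta)\big\rangle_{\mathcal H^{(c)}}=0\qquad\forall (y,\delta),
\]
i.e.\ $\E_0[\xi^{(c)}(Z)]=0$ in $\mathcal H^{(c)}$ (for $c=p$ this uses the identity $\E_0[(\mathcal T_0^{(p)}\omega)(Z)]=0$ from Proposition~\ref{Prop:Tp_identity}).

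Next I would set up the spectral decomposition. Condition c.i) gives $\E_0[h^{(c)}(Z,Z)]=\E_0\|\xi^{(c)}(Z)\|_{\mathcal H^{(c)}}^2<\infty$, and condition c.ii) gives $h^{(c)}\in L^2(\mu_0\otimes\mu_0)$, so the integral operator $S_c g=\E_0[h^{(c)}(\cdot,Z)g(Z)]$ on $L^2(\mu_0)$ is self-adjoint, positive and Hilbert--Schmidt; it has eigenvalues $\lambda_1\ge\lambda_2\ge\cdots\ge 0$ with $\sum_k\lambda_k^2<\infty$ and orthonormal eigenfunctions $\psi_k$ that are $\mu_0$-centred by the degeneracy above. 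Splitting the $V$-statistic into its diagonal and off-diagonal parts,
\[
n\,\widehat{\operatorname{c-KSD}^2}=\frac1n\sum_{i=1}^n h^{(c)}(Z_i,Z_i)\;+\;\frac{n-1}{n}\big(n\,U_n^{(c)}\big),
\]
where $U_n^{(c)}$ is the associated degenerate $U$-statistic. The diagonal term converges a.s.\ to $r_c:=\E_0[h^{(c)}(Z,Z)]$ by the strong law of large numbers (using c.i)), while the classical limit theorem for degenerate second-order $U$-statistics — or, equivalently, the Mercer-type expansion $h^{(c)}(z,z')=\sum_k\lambda_k\psi_k(z)\psi_k(z')$ combined with the multivariate CLT for $\big(n^{-1/2}\sum_i\psi_k(Z_i)\big)_k$ plus a truncation/tightness argument to pass to the infinite sum — gives $n\,U_n^{(c)}\overset{\mathcal D}{\to}\mathcal Y_c:=\sum_{k\ge1}\lambda_k(N_k^2-1)$ with $N_k$ i.i.d.\ $N(0,1)$. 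Slutsky's lemma then yields $n\,\widehat{\operatorname{c-KSD}^2}\overset{\mathcal D}{\to}r_c+\mathcal Y_c$, an infinite weighted sum of independent (shifted) $\chi^2_1$ variables plus a constant, as claimed.

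For $c\in\{s,m\}$ this is essentially the whole argument. The step I expect to be the main obstacle is $c=p$: the computable statistic is built from the \emph{data-dependent} operator $\widehat{\mathcal T_0}^{(p)}$ of \eqref{eqn:defiHT0P}, which carries the factors $n/Y(T_i)$, $n/Y(T_j)$, so it is not literally a $V$-statistic with a fixed kernel. The plan is to replace $Y(\cdot)/n$ by its limit $S_C(\cdot)S_0(\cdot)$, reducing to the deterministic kernel $h^{(p)}$ attached to $\mathcal T_0^{(p)}$, and then to show the error is negligible even after multiplication by $n$. This needs (i) a uniform law of large numbers / Donsker-type bound for the risk process, $\sup_x|Y(x)/n-S_C(x)S_0(x)|=O_p(n^{-1/2})$ on the relevant range, and (ii) careful use of the moment conditions in c.3 — whose denominators are precisely the powers of $f_0(T)S_C(T)$ produced when linearising $1/(Y/n)$ — to control the resulting remainder; the near-diagonal contribution of this remainder, which a crude bound does not make $o_p(n^{-1})$, must be handled by a second-moment/martingale argument that exploits the centring already built into $\mathcal T_0^{(p)}$ (Proposition~\ref{Prop:Tp_identity}). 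Once this reduction to the fixed-kernel $V$-statistic $\frac1{n^2}\sum_{i,j}h^{(p)}(Z_i,Z_j)$ is in place, the spectral argument above applies verbatim, with $\lambda_k$ now the eigenvalues of the integral operator of $h^{(p)}$ under $\mu_0$.
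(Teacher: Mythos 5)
For $c\in\{s,m\}$ your argument is correct and essentially the paper's: under the null the kernels $h^{(c)}$ are degenerate, the moment conditions in c) (via Lemma~\ref{lemma:conditiondkernels2}) justify the classical spectral limit theorem for degenerate $V$-statistics, and the diagonal gives the constant $r_c$.

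The gap is in the case $c=p$. Your plan is to replace $Y(\cdot)/n$ by $S_T(\cdot)=S_C(\cdot)S_0(\cdot)$, reduce to the fixed-kernel $V$-statistic built from $h^{(p)}(z,z')\propto \delta\delta' K^\star(x,x')/(S_T(x)S_T(x'))$, and then apply the spectral argument ``verbatim''. That reduction fails at the $n$ scale, and not for a technical reason that a sharper remainder bound could fix: the substitution error is part of the limit. Indeed, the difference of the feature sums is $\frac1{\sqrt n}\sum_i\Delta_i\frac{(K(T_i,\cdot)\lambda_0(T_i))'}{\lambda_0(T_i)}\left(\frac{n}{Y(T_i)}-\frac{1}{S_T(T_i)}\right)$, and since $\sqrt n\left(Y(\cdot)/n-S_T(\cdot)\right)$ converges weakly to a nondegenerate Gaussian process, this difference converges to a nonzero Gaussian element of $\mathcal H^{(p)}$ correlated with the leading term; it is $O_p(1)$, not $o_p(1)$, after the $\sqrt n$ scaling. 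Consequently the limit of $n\,\widehat{\operatorname{p-KSD}}^2$ is \emph{not} the weighted $\chi^2$ law attached to the integral operator of $h^{(p)}$, and no martingale ``centring'' of $\mathcal T_0^{(p)}$ rescues the claimed reduction. The paper handles this by a genuinely different route: using $\int_0^\infty(\omega(x)\lambda_0(x))'dx=0$ it subtracts the compensator, writing $\sqrt n\,\widehat{\operatorname{p-KSD}}=\sup_{\omega\in B_1(\mathcal H^{(p)})}\frac1{\sqrt n}\int_0^{\tau_n}\frac{(\omega\lambda_0)'}{\lambda_0}\frac{1}{Y/n}\,dM$ up to a boundary term $\sqrt n\int_{\tau_n}^\infty(\omega\lambda_0)'dx$, which is shown to be $o_p(1)$ via Cauchy--Schwarz and $nS_T(\tau_n)=O_p(1)$; only \emph{inside} the $dM$-integral is the replacement of $1/(Y/n)$ by $1/S_T$ legitimate, because there the error enters through the predictable variation and is controlled by Gill's lemmas together with the double martingale-integral moment bounds (Theorem 17 and Proposition 23 of Fern\'andez--Rivera), exactly where condition c.3) is used. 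The resulting limiting $V$-statistic has kernel $J(z,z')=\int\int\frac{K^\star(x,y)}{f_0(x)f_0(y)S_C(x)S_C(y)}\,dM_{z'}(x)\,dM_z(y)$, with $M_i(x)=\Delta_i\ind_{\{T_i\le x\}}-\int_0^x\ind_{\{T_i\ge y\}}\lambda_0(y)dy$, whose degeneracy comes from the martingale property $\E_0[J(Z_i,Z_j)\mid Z_i]=0$; $J$ differs from your $h^{(p)}$ precisely by the jump--compensator and compensator--compensator cross terms, i.e.\ by the Gaussian contribution your substitution step discards. To repair your proof you would need to carry out this martingale decomposition (or an equivalent linearization of $n/Y$ that retains its first-order fluctuation) rather than reduce to the $h^{(p)}$-$V$-statistic.
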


While Theorem \ref{thm:h_0} ensures the existence of a limiting null distribution, which implies that a rejection region for the test is well-defined, in practice it is very hard to approximate the limit distribution and the corresponding rejection regions, for which, we rely on a wild bootstrap approach.

We remark that we can obtain concentrations bounds for the test-statistics under the null hypothesis if we assume that the kernels $h^{(s)}$ and $h^{(m)}$ are bounded, by using standard methods. Obtaining concentration bounds for $h^{(p)}$ is harder as it is a random kernel, depending on all data points.

\subsection{Wild Bootstrap Tests}\label{sec:wild}
To  resample from the null distribution we use the wild bootstrap technique \cite{dehling1994random}. This technique is quite generic and it can be applied to any kernel. 

The Wild Bootstrap estimator is given by
\begin{align}\label{eqn:WBestimator}
\frac{1}{n^2}\sum_{i=1}^n\sum_{j=1}^n W_iW_j h^{(c)}((T_i,\Delta_i),(T_j,\Delta_j)),
\end{align}
where $W_1,\ldots,W_n$ are  independent random variables from a common distribution $\mathcal W$ with $\E(W_1) = 0$ and $\Var(W_1) = 1$. In our experiments we consider $W_i$ sampled from a Rademacher distribution, but any distribution with the properties above is suitable.  \citet{dehling1994random} proved that if the limit distribution exists (in the sense of Theorem~\ref{thm:h_0}), then the wild-bootstrap statistic also converges to the same limit distribution.

The testing procedure for goodness-of-fit is performed as follows: \textbf{1)} Set a type 1 error $\alpha\in (0,1)$. \textbf{2)} Compute  $\widehat{\operatorname{c-KSD}^2}(f_X \| f_0)$
using our $n$ data points. \textbf{3)} Compute $m$-independent copies of the Wild Bootstrap estimator ~\eqref{eqn:WBestimator}. \textbf{4)} Compute the proportion of wild bootstrap samples that are larger than $\widehat{\operatorname{c-KSD}^2}(f_X \| f_0)$; if such a proportion is smaller than $\alpha$ we reject the null hypothesis, otherwise the do not reject it.

\begin{figure*}[ht]
\centering
\includegraphics[width=0.9\textwidth]{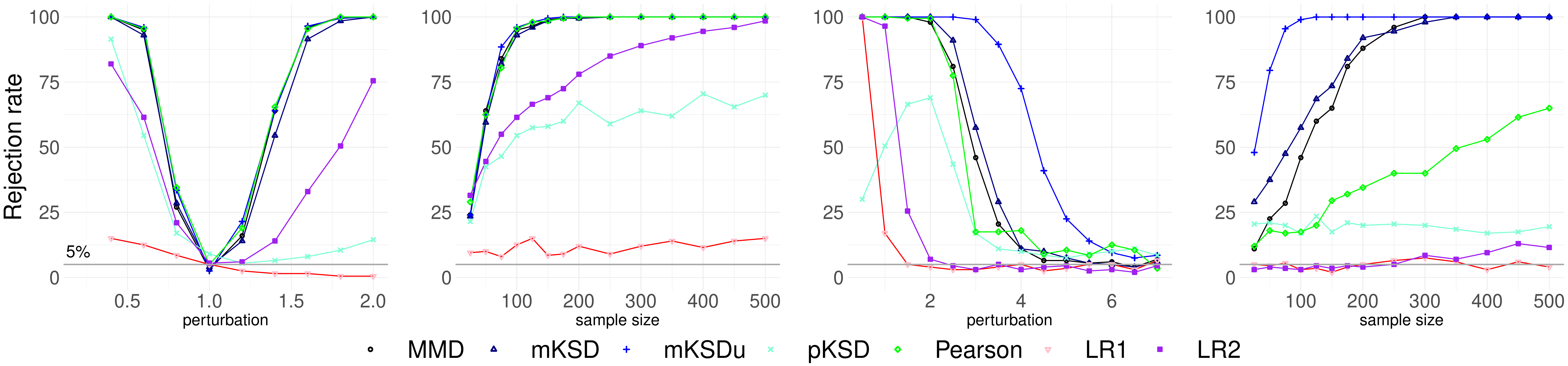}
\vspace{-0.4cm}
\caption{Rejection rate w.r.t. sample size and model perturbation. Left two for Weibull Hazard; Right two for Periodic Hazard. $\alpha=0.01$.
}
        \label{fig:Power_overall}
\end{figure*}

\begin{figure*}[ht]
\centering
    \includegraphics[scale=0.18]{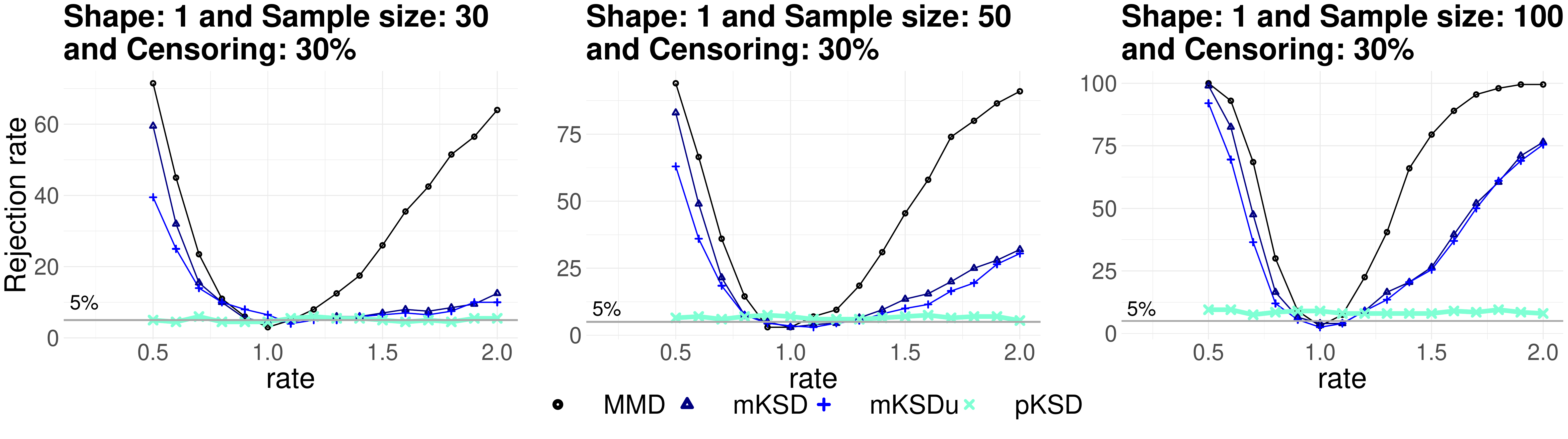}
    \caption{Rejection rate for a proportional class model. As expected, the proportional KSD test does not reject the null for different rates as all the alternatives belong to the same proportional family. }\label{Fig:3}
\end{figure*}



\section{Experiments and  Results}\label{sec:exp}

\paragraph{Proposed approaches:}
In our experiments, we denote by \textbf{mKSD} and by \textbf{pKSD}, the tests based on the martingale and the proportional kernel Stein discrepancies described in Section~\ref{sec: stein operator}, implemented using the  Wild bootstrap approach as described in Section~\ref{sec:wild}. 
In all our experiments we choose the null as an exponential distribution of rate 1, and in this case we can check that sKSD and mKSD coincide. Additionally, we implement \textbf{mKSDu}, which is given by the test \textbf{mKSD} applied to the transformed data $((F_0(T_i),\Delta_i))_{i=1}^n$ to test $H_0:F_0(X)\sim\mathcal{U}(0,1)$. Finally, we use an Gaussian kernel with length-scale chosen by using the median-heuristic, which is the median of all the absolute differences between two different data points. We did not perform any further optimization to improve the performance of the tests.

\paragraph{Competing Approaches:}
\textbf{MMD} denotes the maximum-mean-discrepancy approach proposed by \citet{fernandez2019maximum}, which provides state-of-the-art results,
\textbf{Pearson} denotes the Pearson-type goodness-of-fit test proposed by \citet{akritas1988pearson}, which is quite competitive. 
\textbf{LR1} and \textbf{LR2} denote the  weighted log-rank tests with respective weights functions $w_1(t)=1$ and $w_2(t) = \sum_{i=1}^n \ind_{\{T_i \geq t\}}$, which are classical tests, but not very competitive except for some very simple settings (e.g. testing  $H_0:\lambda_0(t) = 1$ against $\lambda_X(t) = c$, for $c\neq 1$).

\subsection*{Simulated experiments}

\paragraph{Data Setting} We begin by studying our method in a simulated environment where we can control all the possible parameters. We consider two data scenarios.

\textbf{1. Weibull hazard functions:} In our first experiment, we consider the Weibull model, which is commonly used in Survival Analysis \cite{bradburn2003survival}. The Weibull distribution is characterized by the density function $f(x; k,  r) = kr\left(rx\right)^{k-1} \exp\{-(rx)^k\}$, where $k$ and $r$ denote shape and rate parameters, respectively. 
\textbf{2. Periodic hazard functions:} A much more interesting scenario is the so-called periodic hazards, which are used to describe, for example, seasonal diseases such as Influenza.  In this example, we consider the hazard function $\lambda_X(x) = 1 - \cos{(\theta \pi x)}$ studied by \citet{fernandez2019maximum}. Note that when $\theta \to \infty$, then the distribution tends to a exponential of parameter $1$. See Figure \ref{fig:dist_demo} for a comparison between the models.

For both models, we investigate the performance of our test in two setting: \emph{perturbations from the null} and \emph{increasing sample size}, which we proceed to explain. \textbf{Perturbations from the null:} In this experiment, we investigate how the power  changes for perturbations of the null hypothesis. For the Weibull data, we set $H_0: f_0(x) = f(x;1,1)$ and consider Weibull alternatives $f_X(x) = f(x;k,1)$ with $k \in (0,\ldots, 2]$.  Notice that we recover the null hypothesis when $k=1$. Also, we consider a constant $30\%$ of censored observations and a fixed sample size of $n=100$. For the periodic experiment  we set $H_0: f_0(x) = e^{-x}$, which is recovered when we take $\theta$ tending to infinity. In this case, we consider alternatives $\theta \in \{1,2,\ldots, 8\}$. We consider, again, a constant $30\%$ of censoring, and a fixed sample size of $n=100$. \textbf{Increasing sample size:} In this scenario, we investigate how the rejection rate of our test increases as the sample size increases. In the Weibull setting we set the null $H_0: f_0(x) = f(x;1,1)$, the alternative as $f_X(x) = f(x;1.5,1),$ and in the periodic setting, we consider the null $H_0:f_0(x) = e^{-x}$, and generate data from the alternative $\theta = 3$. In both settings we consider 30\% of censored data points

\paragraph{Results} We show our results in Figure \ref{fig:Power_overall}. For the Weibull data (first and second plots), observe that all kernel-based methods, except the pSKD, perform very similar to the Pearson test designed to perform extremely well in these types of setting. For the Periodic data (third and fourth plots), the goodness-of-fit problem is much more challenging, and we see differences in the performances of the methods. We observe that the MMD test of \citet{fernandez2019maximum} has a better performance than the Pearson test, as  was suggested by the experiments in their work. Our test mSKD performs slightly better than the the MMD test, whereas mSKDu outperform all the other methods by a huge margin. We can see that it is the most resistant to the increment in the perturbation parameter (third plot), and, for example, for $\theta = 4$, most methods cannot differentiate between null and alternative with large probability, whereas our method has power of around $75 \%$.

\paragraph{Proportionality}

Our results show that pKSD is not a very powerful test. A possible explanation lies in the fact that, since this method tests against a model class, it must ignore all differences within this class, which affects the power of the test. Despite its lower power, it remains the only test out of the proposed methods that can test if our data was generated by a hazard proportional to $\lambda_0$. In Figure \ref{Fig:3}, we consider a Weibull hazard, given by $\lambda_X(x;k,r) = r^kkx^{k-1}$,  with shape $k=1$ and rate $r\in (0,2)$.  Note that changing the parameter $r$ gives the same hazard up to a constant. Figure \ref{Fig:3} shows that for a family of proportional hazards our method reaches the right type 1 error at low sample sizes, while all the other methods have non-trivial power. We observe, however, that for larger sample sizes, the test has a type-1 error that is slightly elevated over the design level. This may occur as the conditions of Theorem~\ref{thm:h_0} are hard to satisfy in general, and have yet to be proven to hold for this case.  Boostraping methods  with  strong theoretical guarantees under broader conditions are the subject of ongoing research.

\subsection*{Real Data Experiments}

\paragraph{Data Sources}
We perform our tests on the following real datasets to check relevant model assumptions.
\textbf{aml:} \tb Acute \tb Myelogenous \tb Leukemia survival dataset \citep{miller2011survival};
\textbf{cgd:} \tb Chronic  \tb Granulotamous \tb Disease dataset
\citep{fleming2011counting};
\textbf{ovarian:} Ovarian Cancer Survival dataset \citep{edmonson1979different};
\textbf{lung:} North Central Cancer Treatment Group (NCCTG) Lung Cancer dataset \citep{loprinzi1994prospective};
\textbf{stanford:} Stanford Heart Transplant Data \citep{crowley1977covariance};
\textbf{nafld:} Non-alcohol fatty liver disease (NAFLD) \citep{allen2018nonalcoholic}.

\paragraph{Test Results}
We apply our proposed tests on real dataset for the Testing hazard proportionality and Goodness-of-fit  settings. First, we check model class assumption using $\textbf{pKSD}$ to test whether the observed data is from a desired family model without fitting model parameters. We check the exponential model class and the Weibull model with shape=2. As the results shown in Table \ref{tab:model_class}, our tests does not reject the Exponential model, which is coherent with scientific domain knowledge from the literature.\footnote{High-grade serous ovarian carcinoma (HG-SOC) is a major cause of cancer-related death. The growth of HG-SOC acts as an indicator of survival time of ovarian cancer \citep{gu2019computational}. This paper also suggests that HG-SOC follows exponential expansion, which implies exponentially distributed survival time of ovarian patient.} 

For the Goodness-of-fit test setting, we fit a cox proportional hazard model from the covariates provided in the datasets. The cox-proportional hazard function has the form $\lambda_X(x_i) =\lambda_b(x_i)\exp(\beta Y_i) $, where $\lambda_b(x)$ is the base hazard and $Y_i$ is the covariate for subject $i$. The procedure is done via spliting the data into training set and test sets. Fitting the cox proportional-hazard model is applied on the training sets and the test sets are used to perform the goodness-of-fit tests. Results in Table \ref{tab:cox_ph} shows that all the models does not reject the fitted cox proportional hazard models and validate the proportional hazard assumptions for relevant fitted models, which is coherent with scientific experience stated in the literature.\footnote{
\citet{chansky2016survival} suggests that cox proportional hazard model is a reasonable tool among practitioners for \textbf{lung} dataset.
\citep{crowley1977covariance} suggests a fit for cox proportional hazard model for \textbf{stanford} dataset.
\citet{allen2018nonalcoholic} states that cox proportional hazards is often used to study the impact of NAFLD on incident metabolic syndrome or death.}

\begin{table}[]
    \centering
    \begin{tabular}{c|ccc}
    \toprule
    {p-value} &  \textbf{aml}  &  \textbf{cgd} &     \textbf{ovarian} \\
    \hline
    Exponential &  0.585 &  0.460 &  0.681 \\
    Weibull: shape=2 &  0.001 &  0.002 &  0.063 \\
    \bottomrule
    \end{tabular}
    \caption{Real data applications on testing hazard proportionality.}
    \label{tab:model_class}
\end{table}

\begin{table}[]
    \centering
    \begin{tabular}{c|c|c}
    \toprule
    Dataset &  Covarites &  p-value \\
    \hline
    \textbf{lung} & Age &  0.167 \\
    \textbf{stanford} &  T5 mismatch score  &  0.594 \\
    \textbf{nafld} &  Weight and Gender &  0.108 \\
    \bottomrule
    \end{tabular}
    \caption{Real data applications on testing goodness of fit}
    \label{tab:cox_ph}
\end{table}

\appendix
\newpage
\section*{Appendix}
\section{Proofs and Derivations}\label{app:proofs}

\subsection{Proofs of Section 3.1: Survival Stein Operator}
\subsubsection{Proof of Proposition \ref{Prop:T0TsExEquiv}}
Let  $\omega \in \mathcal H^{(s)}$. Then

\begin{align}
\E_0((\mathcal T_0\omega)(T,\Delta) -(\mathcal T_0^s\omega)(T,\Delta))&=\E_0\left( \omega(T)\left[\Delta\left(\frac{f_0'(T)}{f_0(T)}-\lambda_C(x) \right) - \left(\Delta\frac{\lambda_0'(T)}{\lambda_0(T)}-\lambda_0(T)\right)\right]\right)\label{eqn:diffExpect1}
\end{align}

Observe that 
\begin{align*}
\E_0\left(\Delta\omega(T)\lambda_C(T)\right)=\int_0^\infty\omega(x)\frac{f_C(x)}{S_C(x)}S_C(x)f_0(x)dx=\int_0^\infty\omega(x)\frac{f_0(x)}{S_0(x)}S_0(x)f_C(x)dx=\E_0((1-\Delta)\omega(T)\lambda_0(T)),
\end{align*}
therefore, the RHS of Equation~\eqref{eqn:diffExpect1} is equal to
\begin{align*}
\E_0\left( \omega(T)\Delta\left(\frac{f_0'(T)}{f_0'(T)}+\lambda_0(T)-\frac{\lambda_0(T)}{\lambda_0(T)} \right)\right).
\end{align*}
Finally, the last expectation is 0 due to the identity $\frac{f_0'(x)}{f_0(x)} = \frac{\lambda_0'(x)}{\lambda_0(x)}-\lambda_0(x)$, which follows from a simple computation. 

\subsubsection{Proof of Proposition \ref{Prop:T0TsExEquivSample}}
By definition,
\begin{align*}
\sup_{\omega\in B_1(\mathcal{H})} \frac{1}n \sum_{i=1}^n (\mathcal{T}_0^{(s)}\omega)(T_i,\Delta_i)-(\mathcal{T}_0\omega)(T_i,\Delta_i)
&=\sup_{\omega\in B_1(\mathcal{H})} \frac{1}n \sum_{i=1}^n  \omega(T_i)\left(\Delta_i\lambda_C(T_i)-(1-\Delta_i)\lambda_0(T_i)\right)\\
&=\sup_{\omega\in B_1(\mathcal{H})} \left\langle \omega,\frac{1}n \sum_{i=1}^n K(T_i,\cdot)\left(\Delta_i\lambda_C(T_i)-(1-\Delta_i)\lambda_0(T_i)\right)\right\rangle_{\mathcal{H}}\\
&=\left\|\frac{1}n \sum_{i=1}^n K(T_i,\cdot)\left(\Delta_i\lambda_C(T_i)-(1-\Delta_i)\lambda_0(T_i)\right)\right\|_{\mathcal{H}}
\end{align*}
We continue by proving that the previous norm converges to zero in probability. Observe that by the symmetrization lemma \cite[Lemma 6.4.2]{vershynin2019high}, it holds 
\begin{align*}
\E\left[\left\|\frac{1}n \sum_{i=1}^n K(T_i,\cdot)\left(\Delta_i\lambda_C(T_i)-(1-\Delta_i)\lambda_0(T_i)\right)\right\|_{\mathcal{H}}\right]
&\leq 2\E\left[\left\|\frac{1}n \sum_{i=1}^n W_i K(T_i,\cdot)\left(\Delta_i\lambda_C(T_i)-(1-\Delta_i)\lambda_0(T_i)\right)\right\|_{\mathcal{H}}\right]
\end{align*}
 where $W_1,\ldots,W_n$ are i.i.d. Rademacher random variables, independent of the data $(T_i,\Delta_i)_{i=1}^n$. Then, by Jensen's inequality, and by using that $\E(W_i)=0$, we conclude that the previous expression converges to zero in probability, as 
\begin{align*}
\E\left[\left\|\frac{1}n \sum_{i=1}^n W_i K(T_i,\cdot)\left(\Delta_i\lambda_C(T_i)-(1-\Delta_i)\lambda_0(T_i)\right)\right\|_{\mathcal{H}}^2\right]
&=\E\left[\frac{1}{n^2}\sum_{i=1}^n K(T_i,T_i)\left(\Delta_i \lambda_C(T_i)-(1-\Delta_i)\lambda_0(T_i)\right)^2\right] \to0,
\end{align*}
a.s., where the limit result holds due the law of large numbers which can be applied under the Condition in Equation~\eqref{eqn:moment1} and since $|K(x,y)|\leq c_1$, as
\begin{align*}
\E\left[K(T_i,T_i)\left(\Delta_i \lambda_C(T_i)-(1-\Delta_i)\lambda_0(T_i) \right)^2\right]
&\leq c_1\E\left[\left(\Delta_i \lambda_C(T_i)^2+(1-\Delta_i)\lambda_0(T_i)^2 \right)\right]\\
&=c_1\int_0^\infty \left(\lambda_C(x)+\lambda_0(x)\right)f_C(x)f_0(x)dx<\infty.
\end{align*}

\subsection{Proofs of Section 3.3: Proportional Stein Operator}

\subsubsection*{Proof of Propositon  \ref{Prop:Tp_identity}}
We start by claiming that the following equation holds true for every  $\omega\in\mathcal{H}^{(s)}$:
\begin{align}
\frac{1}{n}\sum_{i=1}^n\left((\widehat{\mathcal T}_0^{(p)}\omega)(T_i,\Delta_i)-({\mathcal T}_0^{(p)}\omega)(T_i,\Delta_i)\right)\overset{\Prob}{\to} 0.\label{eqn:uniform}
\end{align}
Then, the main result follows from Equation~\eqref{eqn:uniform}, by using the law of large numbers and that
\begin{align*}
\E_0\left[(\mathcal{T}_0^{(p)}\omega)(T_1,\Delta_1)\right]=\int_0^\infty\frac{(\omega(t)\lambda_0(t))'}{\lambda_0(t)}\frac{1}{S_0(t)S_C(t)}S_C(t)f_0(t)dt
&=\int_0^\infty\frac{(\omega(t)\lambda_0(t))'}{\lambda_0(t)}\lambda_0(t)dt=0,
\end{align*}
which follows from the definition of our operator (see Equation~\eqref{eqn:conditionP1}). 

We finish the proof by proving our claim in Equation~\eqref{eqn:uniform}. Observe that
\begin{align}
\left|\frac{1}{n}\sum_{i=1}^n\left((\widehat{\mathcal T}_0^{(p)}\omega)(T_i,\Delta_i)-({\mathcal T}_0^{(p)}\omega)(T_i,\Delta_i)\right)\right|
&\leq\frac{1}{n}\sum_{i=1}^n\frac{|(\omega(T_i)\lambda_0(T_i))'|}{\lambda_0(T_i)}\left|\frac{\Delta_i}{Y(T_i)/n}-\frac{\Delta_i}{S_T(T_i)}\right|,\label{eqn:sumsplit}
\end{align}
where $S_T(t)=S_C(t)S_0(t)$ holds under the null hypothesis. We proceed to prove that the previous sum tends to $0$ in probability when $n$ grows to infinity. Let $\varepsilon>0$ and define $t_{\varepsilon}>0$ as the infimum of all $t$ such that $\int_{t}^{\infty}\left|(\omega(x)\lambda_0(x))'\right|dx<\varepsilon$. Notice that such $t_{\varepsilon}$ is well-defined since  $\int_{0}^{\infty}\left|(\omega(x)\lambda_0(x))'\right|dx<\infty$. We continue by splitting the sum in Equation~\eqref{eqn:sumsplit} into two regions, $\{T_i\leq t_{\epsilon}\}$ and $\{T_i>t_{\epsilon}\}$, obtaining that Equation~\eqref{eqn:sumsplit} equals
\begin{align}
\frac{1}{n}\sum_{i=1}^n\frac{|(\omega(T_i)\lambda_0(T_i))'|}{\lambda_0(T_i)}\left|\frac{\Delta_i}{Y(T_i)/n}-\frac{\Delta_i}{S_T(T_i)}\right|\ind_{\{T_i\leq t_{\varepsilon}\}}+\frac{1}{n}\sum_{i=1}^n\frac{|(\omega(T_i)\lambda_0(T_i))'|}{\lambda_0(T_i)}\left|\frac{\Delta_i}{Y(T_i)/n}-\frac{\Delta_i}{S_T(T_i)}\right|\ind_{\{T_i>t_{\varepsilon}\}},\label{eqn:randomvjjrj294s}
\end{align}
and we prove that both sums tend to 0 in probability when $n$ grows to infinity.  We start with the first term. Observe that
\begin{align*}
\frac{1}{n}\sum_{i=1}^n\frac{|(\omega(T_i)\lambda_0(T_i))'|}{\lambda_0(T_i)}\left|\frac{\Delta_i}{Y(T_i)/n}-\frac{\Delta_i}{S_T(T_i)}\right|\ind_{\{T_i\leq t_{\varepsilon}\}}&\leq \sup_{t\leq t_{\epsilon}}\left|\frac{1}{Y(t)/n}-\frac{1}{S_T(t)}\right|\frac{1}{n}\sum_{i=1}^n\frac{|(\omega(T_i)\lambda_0(T_i))'|}{\lambda_0(T_i)}\Delta_i\ind_{\{T_i\leq t_{\varepsilon}\}}\\
&= o_p(1),
\end{align*}
where the previous result holds since $\sup_{t\leq t_{\epsilon}}\left|\frac{1}{Y(t)/n}-\frac{1}{S_T(t)}\right|\to 0$ almost surely by the Glivenko-Cantelli Theorem,  and since
\begin{align*}
\frac{1}{n}\sum_{i=1}^n\frac{|(\omega(T_i)\lambda_0(T_i))'|}{\lambda_0(T_i)}\Delta_i1_{\{T_i\leq t_{\epsilon}\}}\to \E\left[\frac{|(\omega(T_1)\lambda_0(T_1))'|}{\lambda_0(T_1)}\Delta_1 \ind_{\{T_1\leq t_{\varepsilon}\}} \right]&=\int_0^{t_{\epsilon}}\frac{|(\omega(t)\lambda_0(t))'|}{\lambda_0(t)}S_C(t)f_0(t)dt\\
&=\int_0^{t_{\epsilon}}\left|(\omega(t)\lambda_0(t))'\right|dt<\infty,
\end{align*}
where the last expression is finite due to Equation~\eqref{eqn:conditionP}.

Next, we deal with the second term in equation~\eqref{eqn:randomvjjrj294s}. Theorem 3.2.1. of  \citet{gill1980censoring} yields $\sup_{t\leq \tau_n}\left|1-\frac{Y(T_i)/n}{S_T(T_i)}\right|=O_p(1)$, where $\tau_n = \max\{T_1,\ldots, T_n\}$, and, Lemma 2.7 of  \citet{gill1983large} yields $\sup_{t\leq \tau_n} nS_T(t)/Y(t)= O_p(1)$ (recall that $S_T(t) = S_0(t)S_C(t)$). From the previous results, we get
\begin{align*}
\frac{1}{n}\sum_{i=1}^n\Delta_i\frac{|(\omega(T_i)\lambda_0(T_i))'|}{\lambda_0(T_i)}\left|\frac{1}{Y(T_i)/n}-\frac{1}{S_T(T_i)}\right|\ind_{\{T_i>t_{\varepsilon}\}}
&=\frac{1}{n}\sum_{i=1}^n\Delta_i\frac{|(\omega(T_i)\lambda_0(T_i))'|}{\lambda_0(T_i)}\frac{1}{Y(T_i)/n}\left|1-\frac{Y(T_i)/n}{S_T(T_i)}\right|\ind_{\{T_i>t_{\varepsilon}\}}\\
&=O_p(1)\frac{1}{n}\sum_{i=1}^n\Delta_i\frac{|(\omega(T_i)\lambda_0(T_i))'|}{\lambda_0(T_i)}\frac{1}{Y(T_i)/n}\ind_{\{T_i>t_{\varepsilon}\}}\\
&=O_p(1)\frac{1}{n}\sum_{i=1}^n\Delta_i\frac{|(\omega(T_i)\lambda_0(T_i))'|}{\lambda_0(T_i)}\frac{1}{S_0(T_i)S_C(T_i)}\ind_{\{T_i>t_{\varepsilon}\}}.
\end{align*}
Now, notice that 
\begin{align*}
\frac{1}{n}\sum_{i=1}^n\Delta_i\frac{|(\omega(T_i)\lambda_0(T_i))'|}{\lambda_0(T_i)}\frac{1}{S_0(T_i)S_C(T_i)}\ind_{\{T_i>t_{\varepsilon}\}}
&\overset{a.s.}{\to} \E_0\left[\Delta_1\frac{|(\omega(T_1)\lambda_0(T_1))'|}{\lambda_0(T_1)}\frac{1}{S_0(T_1)S_C(T_1)}\ind_{\{T_1>t_{\varepsilon}\}} \right]\nonumber\\
&=\int_{t_\varepsilon}^{\infty} \frac{|(\omega(x)\lambda_0(x))'|}{\lambda_0(x)}\frac{f_0(x)S_C(x)}{S_0(x)S_C(x)}dx\nonumber\\
&=\int_{t_\varepsilon}^{\infty} |(\omega(x)\lambda_0(x))'|dx<\varepsilon,
\end{align*}
where the first equality holds by Equation \eqref{eqn:expectedPhi}, and the last inequality comes from the definition of $t_{\varepsilon}$. Since  we can choose $\varepsilon>0$ as small as desired, we conclude the result.

\subsection{Proofs Section 4: Censored-Data Kernel Stein Discrepancy}

\subsubsection{Proof of Proposition \ref{Prop:cKSDderive}}
\begin{proof}
Notice that, by the definition of the random function $\xi^{(c)}(\Delta,T)$, we have that $ (\mathcal T^{(c)}\omega)(T, \Delta)=\langle \omega , \xi^{(c)}(T,\Delta)\rangle_{\mathcal H^{(c)}} $. Also notice that, $\xi^{(c)}(x,\delta) \in \mathcal H^{(c)}$ for each fixed $(x,\delta)$, and that the expectation, $\E_X\left[\xi^{(c)}(T,\Delta)\right]\in \mathcal{H}^{(c)}$ if and only if equation~\eqref{eqn:condBotchner} is satisfied (the previous expectation has to be understood in the Bochner sense, as we are taking expectation of a random function).

Then,
\begin{align*}
\operatorname{c-KSD}(f_X\|f_0)^2=\sup_{\omega\in B_1(\mathcal{H}^{(c)})}\E_X\left[(\mathcal T_0^{(c)}\omega)(T,\Delta)\right]^2&=\sup_{\omega\in B_1(\mathcal{H}^{(c)})}\E_{X}\left[\left\langle \omega , \xi^{(c)}(T,\Delta)\right\rangle_{\mathcal H^{(c)}}\right] ^2\\
&= \sup_{\omega\in B_1(\mathcal{H}^{(c)})}\left\langle \omega , \E_{X}\left[\xi^{(c)}(T,\Delta)\right]\right\rangle^2_{\mathcal H^{(c)}}\\
&= \left\|\E_X\left[\xi^{(c)}(T,\Delta)\right]\right\|^2_{\mathcal H^{(c)}}\\
&= \left \langle\E_X\left[\xi^{(c)}(T,\Delta)\right] ,\E_X\left[\xi^{(c)}(T',\Delta')\right] \right \rangle_{\mathcal H^{(c)}}\nonumber\\
&=  \E_X\left[\left \langle \xi^{(c)}(T,\Delta),\xi^{(c)}(T',\Delta') \right \rangle_{\mathcal H^{(c)}} \right] \\
&= \E_X\left[h^{(c)}((T,\Delta),(T',\Delta'))\right],
\end{align*}
where the third equality is due to the linearity  of expectation and the inner product, the fourth equality follows from the definition of norm (and since we are taking supremum in the unit ball), and the second to last equality is, again, due to the linearity of the expectation and inner product.
\end{proof}
\subsubsection{Explicit computation of $h^{(c)}$}\label{sec:explih}
Denote  $\phi(x,\delta)=\delta \frac{\lambda_0'(x)}{\lambda_0(x)}-\lambda_0(x)$, and $L_1(x,y)= \frac{\partial}{\partial x}K^{(c)}(x,y)$, $L_2(x,y) = \frac{\partial}{\partial y}K^{(c)}(x,y)$ and  $L=\frac{\partial^2}{\partial x \partial y} K^{(c)}(x,y)$. For simplicity of exposition, we will drop the superindex $(c)$ in all cases.

\paragraph{Survival Stein operator} $(c=s)$: 
For this case, we have
\begin{align*}
\xi(x,\delta)=(\mathcal{T}_0K)((x,\delta),\cdot)&=\delta \frac{\partial}{\partial x}K(x,\cdot)+\left(\delta \frac{\lambda_0'(x)}{\lambda_0(x)}-\lambda_0(x)\right)K(x,\cdot)+\lambda_0(0)K(0,\cdot)\\
             &=\delta L_1(x,\cdot)+\phi(x,\delta)K(x,\cdot)+\lambda_0(0)K(0,\cdot).
\end{align*}
Notice that a simple computation shows that $L(x,y)=\left\langle  L_1(x,\cdot),L_1(y,\cdot)\right\rangle_{\mathcal{H}}$, then
\begin{align*}
h^{(s)}((x,\delta),(x',\delta'))
&=\delta\delta'
L(x,x')+\delta\phi(x',\delta')L_1(x,x')+\delta\lambda_0(0)L_1(x,0)\\
 &\quad+\phi(x,\delta)\delta'L_2(x,x')+\phi(x,\delta)\phi(x',\delta')K(x,x')+\phi(x,\delta)\lambda_0(0)K(x,0)\\
 &\quad+\lambda_0(0)\delta'L_2(0,x')+\lambda_0(0)\phi(x',\delta')K(0,x')+\lambda_0(0)^2K(0,0).
\end{align*}

\paragraph{Martingale Stein operator} $(c=m)$: Observe that in this case
\begin{align*}
\xi(x,\delta)&=(\mathcal T_0 K)((s,\delta),\cdot)= \frac{\delta}{\lambda_0(x)}L_1(x,\cdot)- K(x,\cdot) + K(0,\cdot).
\end{align*}
Then, by the  reproducing kernel property
\begin{align*}
h^{(m)}(x,\delta),(x',\delta'))
&=\frac{\delta}{\lambda_0(x)}\frac{\delta'}{\lambda_0(x')}
L(x,x') - \frac{\delta}{\lambda_0(x)}L_1(x,x') + \frac{\delta}{\lambda_0(x)}L_1(x,0)\\
 &\quad - \frac{\delta'}{\lambda_0(x')}L_2(x,x')+ K(x,x') - K(x,0)\\
 &\quad + \frac{\delta'}{\lambda_0(x')}L_2(0,x')- K(0,x') + K(0,0).
\end{align*}

 \paragraph{Proportional Stein operator} $(c=p)$: Notice that, in this case, we use $\widehat{\mathcal T}_0^{(p)}$, given in Equation (16),  to compute $\widehat\xi^{(p)}(x,\delta)=(\widehat{\mathcal{T}}_0^{(p)}K^{(p)})((x,\delta),\cdot)$ since $\mathcal{T}_0^{(p)}$ is not available, as it depends on $S_C$, which is unknown even under the null hypothesis. Then,
\begin{align*}
\widehat\xi(x,\delta)&=(\widehat{\mathcal{T}}_0K)((x,\delta),\cdot)=\left(L_1(x,\cdot)+\frac{\lambda_0'(x)}{\lambda_0(x)}K(x,\cdot)\right)\frac{\delta}{Y(x)/n}.
\end{align*}
 Define $K^\star(x,y) = \left(\frac{\partial^2}{\partial x \partial y} \lambda_0(x)\lambda_0(y)K(x,y)\right)$. Then, by the reproducing kernel property,
\begin{align*}
\widehat h^{(p)}((x,\delta),(x',\delta'))
&= n^2\frac{\delta \delta'}{Y(x)Y(x')}K^\star(x,x').
\end{align*}
Recall that $Y(t) = \sum_{k=1}^n \ind_{\{T_k\geq t\}}$ denotes the risk function, which depends on all the data points, hence we write $\widehat{h}^{(p)}$ to recall the reader that this kernel is a random one.

\subsection{Proofs of Section 5: Goodness-of-fit via c-KSD} 
The following lemmas show that, under Conditions c) and d) (depending on which case), the kernels $h^{(c)}$ have finite first and second moment.  These  moment conditions on the kernel are important to deduce asymptotic results.

\begin{lemma}\label{lemma:conditiondkernels}
Let $(T',\Delta')$ and  $(T,\Delta)$ be independent samples from $\mu_X$, and assume that Condition $d)$ holds. Then, 
\begin{align*}
    \E_X\left[|h^{(c)}((T,\Delta),(T,\Delta))|\right]<\infty,\quad\text{and}\quad     \E_X\left[|h^{(c)}((T,\Delta),(T',\Delta'))|\right]<\infty
\end{align*}
for $c \in \{s,m,p\}$, under the alternative hypothesis.
\end{lemma}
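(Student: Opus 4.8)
The plan is to handle each case $c \in \{s,m,p\}$ separately, using the explicit formulas for $h^{(c)}$ derived in \Appendix~\ref{sec:explih}, and in each case reduce the finiteness of $\E_X[|h^{(c)}|]$ to the single moment bound supplied by Condition d). The main tool throughout is Cauchy--Schwarz in $\mathcal H^{(c)}$: since $h^{(c)}((x,\delta),(x',\delta')) = \langle \xi^{(c)}(x,\delta), \xi^{(c)}(x',\delta')\rangle_{\mathcal H^{(c)}}$, we immediately get
\begin{align*}
|h^{(c)}((T,\Delta),(T',\Delta'))| \le \|\xi^{(c)}(T,\Delta)\|_{\mathcal H^{(c)}}\,\|\xi^{(c)}(T',\Delta')\|_{\mathcal H^{(c)}} = \sqrt{h^{(c)}((T,\Delta),(T,\Delta))}\,\sqrt{h^{(c)}((T',\Delta'),(T',\Delta'))},
\end{align*}
so by independence of $(T,\Delta)$ and $(T',\Delta')$ it suffices to show $\E_X[h^{(c)}((T,\Delta),(T,\Delta))] < \infty$; this also dominates $\E_X[|h^{(c)}((T,\Delta),(T,\Delta))|]$ since the diagonal term is a squared norm, hence nonnegative. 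So the whole lemma collapses to bounding one diagonal expectation per case.

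For $c=s$, evaluating the formula for $h^{(s)}$ on the diagonal gives $h^{(s)}((x,\delta),(x,\delta)) = \|\delta L_1(x,\cdot) + \phi(x,\delta)K(x,\cdot) + \lambda_0(0)K(0,\cdot)\|_{\mathcal H}^2$. Expanding and applying $|\langle a,b\rangle| \le \tfrac12(\|a\|^2+\|b\|^2)$ to the cross terms, together with the reproducing identities $\|K(x,\cdot)\|^2 = K(x,x)$, $\|L_1(x,\cdot)\|^2 = K^\star(x,x)$ (where $K^\star = \partial^2_{xy}K$), and the boundedness of $K$, $K^\star$ from Condition a), one reduces everything to controlling $\E_X[\phi(T,\Delta)^2 |K(T,T)|]$, $\E_X[\Delta\, K^\star(T,T)]$, and boundary constants; the first is exactly Condition d.1.i, and the latter two are finite because $K^\star$ is bounded and $\lambda_0(0)\sqrt{K(0,0)}$-type quantities are controlled by the boundary Condition b). For $c=m$, $h^{(m)}$ on the diagonal is $\|\tfrac{\delta}{\lambda_0(x)}L_1(x,\cdot) - K(x,\cdot) + K(0,\cdot)\|_{\mathcal H}^2$; the same expansion leaves us needing $\E_X[\tfrac{\Delta}{\lambda_0(T)^2}K^\star(T,T)]$ (Condition d.2.i), plus bounded terms $\E_X[K(T,T)]$, $\E_X[K(0,0)]$, and a cross term $\E_X[\tfrac{\Delta}{\lambda_0(T)}L_1(T,0)]$ handled by Cauchy--Schwarz against the already-controlled pieces. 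For $c=p$, using $\widehat h^{(p)}((x,\delta),(x,\delta)) = n^2 \tfrac{\delta}{Y(x)^2}K^\star(x,x)$ with $K^\star(x,y) = \partial^2_{xy}(\lambda_0(x)\lambda_0(y)K(x,y))$: here the subtlety is the random denominator $Y(x)/n$, but one replaces it using that under $\mu_X$, $Y(x)/n \to S_T(x) = S_C(x)S_X(x)$ and the uniform bounds of \citet{gill1980censoring, gill1983large} (as already invoked in the proof of Proposition \ref{Prop:Tp_identity}) give $\sup_{t \le \tau_n} nS_T(t)/Y(t) = O_p(1)$, reducing to $\E_X[\tfrac{\Delta}{S_T(T)^2\lambda_0(T)^2}|K^\star(T,T)|]$ up to the hidden $\lambda_0(T)^2$ inside $K^\star$; this matches Condition d.3.i once we note $K^\star/\lambda_0^2$ is the bounded kernel $\partial^2_{xy}K$ plus lower-order terms in $\lambda_0'/\lambda_0$.

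The step I expect to be the main obstacle is the $c=p$ case, precisely because $\widehat h^{(p)}$ is a \emph{random} kernel whose diagonal contains $(Y(x)/n)^{-2}$, so one cannot simply take expectations termwise; the argument must route through the almost-sure / $O_p$ control of $Y(x)/n$ versus $S_T(x)$ near the right tail, exactly as in the proof of Proposition \ref{Prop:Tp_identity}, and then invoke Condition d.3.i. The $c=s$ and $c=m$ cases are essentially bookkeeping: expand the squared RKHS norm into nine (resp. nine) terms, bound each cross term by AM--GM, and cite Conditions a), b), d) for each resulting expectation. One should also double-check that $\omega' \in \mathcal H$ (assumed for all three RKHSs) is what guarantees $\xi^{(c)}(x,\delta) \in \mathcal H^{(c)}$ so that the inner-product expressions for $h^{(c)}$ are legitimate, but that was already established when $h^{(c)}$ was defined.
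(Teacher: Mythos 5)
Your reduction to the diagonal (Cauchy--Schwarz in $\mathcal H^{(c)}$ plus independence, equivalent to the paper's pointwise bound $|h^{(c)}(z,z')|\le\tfrac12 h^{(c)}(z,z)+\tfrac12 h^{(c)}(z',z')$) and your treatment of $c=s$ and $c=m$ are essentially the paper's argument: expand $\|\xi^{(c)}(T,\Delta)\|_{\mathcal H^{(c)}}^2$, control cross terms, and note that the only nontrivial expectations are $\E_X[\phi(T,\Delta)^2|K(T,T)|]$ and $\E_X[\Delta K^\star(T,T)/\lambda_0(T)^2]$, i.e.\ Conditions d.1.i and d.2.i, with the remaining terms finite by Conditions a) and b). Those two cases are fine.

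The gap is in $c=p$, precisely the step you flag as the ``main obstacle.'' You attack the random kernel $\widehat h^{(p)}$ (with $Y(x)/n$ in the denominators) and propose to control it via $Y(x)/n\to S_T(x)$ and the $O_p(1)$ bounds of \citet{gill1980censoring,gill1983large}. But the lemma, as it is stated and as it is used in the proof of Theorem~\ref{thm:asymptotic_h1}, concerns the \emph{deterministic} limiting kernel built from $\xi^{(p)}(x,\delta)=\delta\,(K(x,\cdot)\lambda_0(x))'/(\lambda_0(x)S_T(x))$, i.e.\ $h^{(p)}((x,\delta),(x',\delta'))=\delta\delta' K^\star(x,x')/(\lambda_0(x)\lambda_0(x')S_T(x)S_T(x'))$; its diagonal expectation is literally Condition d.3.i, so this case is a one-line observation in the paper, with no Gill-type machinery. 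The replacement of $\widehat h^{(p)}$ by $h^{(p)}$ is carried out separately, inside the proof of Theorem~\ref{thm:asymptotic_h1} (and Proposition~\ref{Prop:Tp_identity}), where only an $o_p(1)$ conclusion is needed. Your route cannot deliver the lemma as stated: bounds such as $\sup_{t\le\tau_n}nS_T(t)/Y(t)=O_p(1)$ give stochastic boundedness, not integrability, so they cannot produce a moment bound $\E_X[|\widehat h^{(p)}|]<\infty$; moreover, for a kernel depending on all $n$ data points such a moment statement would not feed into the V-statistic law of large numbers in the way the lemma is meant to be used. The fix is to keep your $s$ and $m$ arguments, identify $h^{(p)}$ with the deterministic kernel and read off Condition d.3.i, and leave the $Y/n$-versus-$S_T$ analysis where it belongs, namely in the proof that the statistic with $\widehat h^{(p)}$ differs from the V-statistic with kernel $h^{(p)}$ by $o_p(1)$.
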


\begin{lemma}\label{lemma:conditiondkernels2}
Let $(T',\Delta')$ and  $(T,\Delta)$ be independent samples from $\mu_0$, and assume that Condition $c)$ holds. Then 
\begin{align*}
    \E_0\left[|h^{(c)}((T,\Delta),(T,\Delta))|\right]<\infty,\quad\text{and}\quad    \E_0\left[h^{(c)}((T,\Delta),(T',\Delta'))^2\right]<\infty
\end{align*}
for $c \in \{s,m,p\}$, under the null hypothesis.
\end{lemma}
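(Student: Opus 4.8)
The plan is to reduce both assertions, for every $c\in\{s,m,p\}$, to the single estimate $\E_0[\|\xi^{(c)}(T,\Delta)\|_{\mathcal H^{(c)}}^2]<\infty$. Since $h^{(c)}((x,\delta),(x',\delta'))=\langle \xi^{(c)}(x,\delta),\xi^{(c)}(x',\delta')\rangle_{\mathcal H^{(c)}}$, the diagonal is $h^{(c)}((x,\delta),(x,\delta))=\|\xi^{(c)}(x,\delta)\|^2\ge 0$, so $\E_0[|h^{(c)}((T,\Delta),(T,\Delta))|]=\E_0[\|\xi^{(c)}(T,\Delta)\|^2]$; and by Cauchy--Schwarz in $\mathcal H^{(c)}$ together with independence of $(T,\Delta)$ and $(T',\Delta')$,
\begin{align*}
\E_0\!\left[h^{(c)}((T,\Delta),(T',\Delta'))^2\right]\le \E_0\!\left[\|\xi^{(c)}(T,\Delta)\|^2\,\|\xi^{(c)}(T',\Delta')\|^2\right]=\Big(\E_0\!\left[\|\xi^{(c)}(T,\Delta)\|^2\right]\Big)^2.
\end{align*}
So everything comes down to the second moment of the Stein feature map under $\mu_0$.

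Next I would expand $\|\xi^{(c)}(x,\delta)\|^2$ from the closed forms in \Appendix~\ref{sec:explih}, using the triangle inequality in $\mathcal H^{(c)}$ together with the reproducing identities $\|K^{(c)}(x,\cdot)\|^2=K^{(c)}(x,x)$ and $\|\partial_xK^{(c)}(x,\cdot)\|^2=\partial_x\partial_yK^{(c)}(x,y)\big|_{y=x}=:K^\star(x,x)$ (licit since Condition a) grants $K$ continuous second-order derivatives), and the identity $\delta^2=\delta$. Up to the fixed constant from $(a+b+c)^2\le 3(a^2+b^2+c^2)$, this gives: for $c=m$, $\|\xi^{(m)}(x,\delta)\|^2\le 3\big(\tfrac{\delta}{\lambda_0(x)^2}K^\star(x,x)+K(x,x)+K(0,0)\big)$; for $c=s$, $\|\xi^{(s)}(x,\delta)\|^2\le 3\big(\delta K^\star(x,x)+\phi(x,\delta)^2K(x,x)+\lambda_0(0)^2K(0,0)\big)$; and for $c=p$, using $\partial_x(\lambda_0(x)K(x,\cdot))=\lambda_0(x)\big(L_1(x,\cdot)+\tfrac{\lambda_0'(x)}{\lambda_0(x)}K(x,\cdot)\big)$, hence $\|\partial_x(\lambda_0(x)K(x,\cdot))\|^2=K^\star(x,x)$ with $K^\star(x,y)=\partial_x\partial_y(\lambda_0(x)\lambda_0(y)K(x,y))$, and $\lambda_0S_0=f_0$ under the null, the single-term identity $\|\xi^{(p)}(x,\delta)\|^2=\tfrac{\delta\,K^\star(x,x)}{(f_0(x)S_C(x))^2}$.

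Taking $\E_0$ of each bound then matches the hypotheses term by term: the terms built only from $K$, $K^\star$ and $\delta$ (such as $\E_0[\Delta K^\star(T,T)]$ and $\E_0[K(T,T)]$) are finite because $K$ and $K^\star$ are bounded by Condition a); the constant $\lambda_0(0)^2K(0,0)$ is finite because Condition b) gives $\sqrt{K(0,0)}\,\lambda_0(0)=\lim_{x\to0+}\sqrt{K(x,x)}\lambda_0(x)<\infty$; and the remaining term is precisely the quantity that part i) of the corresponding Condition c) assumes finite — $\E_0[\phi(T,\Delta)^2|K(T,T)|]$ for $c=s$, $\E_0[\Delta K^\star(T,T)/\lambda_0(T)^2]$ for $c=m$, and $\E_0[\Delta K^\star(T,T)/(f_0(T)S_C(T))^2]$ for $c=p$. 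This yields $\E_0[\|\xi^{(c)}(T,\Delta)\|^2]<\infty$, and the lemma follows from the reduction above.

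The step I expect to be the main obstacle is the proportional case, because the statistic is actually built from the \emph{random} kernel $\widehat h^{(p)}$, which carries $n^2/(Y(x)Y(x'))$ instead of $1/(S_C(x)S_X(x))^2$; bounding $\E_0$ of a kernel that depends on the whole sample is genuinely more delicate. Handling it requires replacing $n/Y(x)$ by $1/S_T(x)=1/(S_0(x)S_C(x))$ uniformly, which is exactly where the bounds $\sup_{t\le\tau_n}nS_T(t)/Y(t)=O_p(1)$ and $\sup_{t\le\tau_n}\big|1-Y(t)/(nS_T(t))\big|=O_p(1)$ of \citet{gill1980censoring,gill1983large} enter, just as in the proof of Proposition~\ref{Prop:Tp_identity}, after which the analysis collapses onto the deterministic $h^{(p)}$ treated above. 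A lesser, purely bookkeeping difficulty is the proliferation of cross terms in the fully expanded $h^{(s)}$ and $h^{(m)}$, which the triangle-inequality bound on $\|\xi^{(c)}\|$ conveniently sidesteps.
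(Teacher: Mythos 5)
Your proposal is correct and follows essentially the same route as the paper, which proves the companion Lemma~\ref{lemma:conditiondkernels} (reduce everything to $\E[\|\xi^{(c)}(T,\Delta)\|_{\mathcal H^{(c)}}^2]$, expand via the triangle inequality and reproducing identities, and match the terms to Conditions a), b) and the relevant integrability condition, with the proportional case read off from the deterministic kernel $h^{(p)}$) and then declares the present lemma ``essentially the same.'' Your explicit Cauchy--Schwarz-plus-independence step, which bounds $\E_0[h^{(c)}((T,\Delta),(T',\Delta'))^2]$ by $\big(\E_0[\|\xi^{(c)}(T,\Delta)\|^2]\big)^2$ and hence uses only part i) of Condition c), is a clean way of filling in the detail the paper leaves implicit; the worry about the random kernel $\widehat h^{(p)}$ is not needed for this lemma, since the moment bounds concern the deterministic limit kernel and the $Y(\cdot)/n$ replacement is handled in the theorems.
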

We just proof Lemma~\ref{lemma:conditiondkernels} since the proof of Lemma~\ref{lemma:conditiondkernels2} is essentially the same.

\begin{proof}[Proof of Lemma~\ref{lemma:conditiondkernels}]
First of all, note that for any kernel (positive-definite function), it holds
\begin{align*}
h^{(c)}((x,\delta),(x',\delta'))\leq \frac{1}{2}h^{(c)}((x,\delta),(x,\delta))+\frac{1}{2}h^{(c)}((x',\delta'),(x',\delta')),
\end{align*}
hence, it is enough to only prove the first part of the lemma.

\item \textbf{Survival Stein operator} $(c=s)$:
Recall $\xi^{(s)}(x,\delta)=\delta L_1(x,\cdot)+\phi(x,\delta)K(x,\cdot)+\lambda_0(0)K(0,\cdot)$, where $L_1(x,y)=\frac{\partial}{\partial x}K(x,y)$ $\phi(x,\delta)=\delta \frac{\lambda_0'(x)}{\lambda_0(x)}-\lambda_0(x)$, then
\begin{align*}
\E_X\left[|h^{(s)}((T,\Delta),(T,\Delta))|\right]
&=\E_X\left[\left\|\xi^{(s)}(T,\Delta)\right\|_{\mathcal{H}^{(s)}}^2\right]\\
&\leq 4\E_X\left[\left\|\Delta L_1(T,\cdot)\right\|_{\mathcal{H}^{(s)}}^2+\left\|\phi(T,\Delta)K(T,\cdot)\right\|_{\mathcal{H}^{(s)}}^2\right] +4\left\|\lambda_0(0)K(0,\cdot)\right\|_{\mathcal{H}^{(s)}}^2\\
&\leq 4\E_X\left[\left\|\Delta L_1(T,\cdot)\right\|_{\mathcal{H}^{(s)}}^2\right]+4\E_X\left[\left\|\phi(T,\Delta)K(T,\cdot)\right\|_{\mathcal{H}^{(s)}}^2\right]+4\lambda_0(0)^2K(0,0).
\end{align*}

The first and third term in the previous equation are finite under the technical Conditions a) and b). Thus, we only need to check
\begin{align*}
\E_X\left[\left\|\phi(T,\Delta)K(T,\cdot)\right\|_{\mathcal{H}^{(s)}}^2\right]=\E_X\left[\phi(T,\Delta)^2|K(T,T)|\right]<\infty,
\end{align*}
which is guaranteed by Condition d).

\item \textbf{Martingale Stein operator} $(c=m)$: Recall that $\xi^{(m)}(x,\delta)
=\phi(x,\delta)L_1(x,\cdot)- K(x,\cdot) + K(0,\cdot)$, where $L_1(x,y)=\frac{\partial}{\partial x}K(x,y)$ and $\phi(x,\delta)=\frac{\delta}{\lambda_0(x)}$. Then
\begin{align*}
\E_X\left[|h^{(m)}((T,\Delta),(T,\Delta))|\right]
&=\E_X\left[\left\|\xi^{(m)}(T,\Delta)\right\|_{\mathcal{H}^{(m)}}^2\right]\\
&\leq 4\E_X\left[\left\|\phi(T,\Delta)L_1(T,\cdot)\right\|_{\mathcal{H}^{(s)}}^2\right]+4\E\left[\left\|K(T,\cdot)\right\|_{\mathcal{H}^{(s)}}^2\right]+4\left\|K(0,\cdot)\right\|_{\mathcal{H}^{(s)}}^2.
\end{align*}
Observe that the second and third term are finite under Condition a). Additionally, define $L(x,y)=\frac{\partial^2}{\partial x\partial y}K(x,y)$ and notice that
\begin{align*}
\E_X\left[\left\|\phi(T,\Delta)L_1(T,\cdot)\right\|_{\mathcal{H}^{(s)}}^2\right]
&=\E_X\left[\phi(T,\Delta)^2L(T,T)\right]=\E_X\left[\frac{\Delta}{\lambda_0(T)^2}L(T,T)\right]<\infty
\end{align*}
holds under Condition d) (Notice that $L=K^\star$ in Condition d.2)).

\item \textbf{Proportional Stein operator ($c=p$)}. This case follows directly from Condition d.3).
\end{proof}

\subsubsection{Proof of Theorem \ref{thm:asymptotic_h1}}
We distinguish between two cases: first, when $h^{(c)}$ is a deterministic kernel (that is $c \in \{s, m\}$), and second, when $\widehat h^{(c)}$ is a random kernel, meaning $c = p$. 

\paragraph{Deterministic kernel $(c\in\{s,m\})$:}
For the first case, we have
\begin{align*}
\widehat{\operatorname{c-KSD}}^2(f_X||f_0)=\frac{1}{n^2}\sum_{i=1}^n\sum_{j=1}^n h^{(c)}((T_i,\Delta_i),(T_j,\Delta_j)),
\end{align*}
which is a V-statistic of order 2. Thus, by using the law of large numbers for V-statistics, we deduce
\begin{align*}
\widehat{\operatorname{c-KSD}}^2(f_X||f_0)\overset{a.s.}{\to}\E_X\left( h^{(c)}((T,\Delta),(T',\Delta'))\right)=\operatorname{c-KSD}^2(f_X||f_0),
\end{align*}
as n grows to infinity. Notice that the previous limit result requires the following conditions: $\E_X\left(|h^{(c)}((T,\Delta),(T,\Delta))|\right)<\infty$ and $\E_X\left(|h^{(c)}((T,\Delta),(T',\Delta'))|\right)<\infty$, which are satisfied under Condition d) by Lemma~\ref{lemma:conditiondkernels}. 

\paragraph{Random kernel $(c=p)$:}
For the second case, recall that
\begin{align}
 \widehat{\operatorname{p-KSD}}^2(f_X||f_0) = \sum_{i=1}^n \sum_{j=1}^n \widehat h^{(p)}((T_i,\Delta_i),(T_j,\Delta_j)),   \label{eqn:randomrhjgh193s}
\end{align}
where $\widehat{h}^{(p)}$ is a random kernel. Our first step will be to assume that we can replace the random kernel  $\widehat h^{(p)}$, given by $\widehat h^{(p)}((x,\delta),(x',\delta')) = n^2\frac{\delta \delta' K^\star(x,x')}{Y(x)Y(x')}$, by its limit
 $h^{(p)}((x,\delta),(x',\delta')) =  \frac{\delta \delta'K^\star(x,x')}{S_T(x)S_T(x')}$,
where $K^\star(x,y) = \left(\frac{\partial^2}{\partial x \partial y} K(x,y)\lambda_0(x)\lambda_0(y)\right)$. We claim that
\begin{align}
    \frac{1}{n^2}\sum_{i=1}^n\sum_{j=1}^n\widehat{h}^{(p)}((T_i,\Delta_i),(T_j,\Delta_j))=    \frac{1}{n^2}\sum_{i=1}^n\sum_{j=1}^n{h}^{(p)}((T_i,\Delta_i),(T_j,\Delta_j))+o_p(1),\label{eqn:prove}
\end{align}
and then we have that 
\begin{align*}
\widehat{\operatorname{p-KSD}}^2(f_X||f_0)&=\frac{1}{n^2}\sum_{i=1}^n\sum_{j=1}^n \widehat h^{(p)}((T_i,\Delta_i),(T_j,\Delta_j))\\
&=\frac{1}{n^2}\sum_{i=1}^n\sum_{j=1}^n h^{(p)}((T_i,\Delta_i),(T_j,\Delta_j))+o_p(1)\\
&=\E_X(h^{(p)}((T,\Delta),(T',\Delta')))+o_p(1)=\operatorname{p-KSD}^2(f_X||f_0)+o_p(1),
\end{align*}
where the third equality is due to the standard law of large numbers for V statistics, and by Condition d.3) and Lemma \ref{lemma:conditiondkernels}. 

We finish the proof by proving the claim made in Equation~\eqref{eqn:prove}. Recall that
\begin{align*}
\frac{1}{n^2}\sum_{i=1}^n\sum_{j=1}^n \widehat h^{(p)}((T_i,\Delta_i),(T_j,\Delta_j))=\left\|\frac{1}{n}\sum_{i=1}^n\widehat{\xi}^{(p)}(T_i,\Delta_i)\right\|_{\mathcal{H}^{(p)}}^2,
\end{align*}
and
\begin{align}
\frac{1}{n^2}\sum_{i=1}^n\sum_{j=1}^n  h^{(p)}((T_i,\Delta_i),(T_j,\Delta_j))=\left\|\frac{1}{n}\sum_{i=1}^n{\xi}^{(p)}(T_i,\Delta_i)\right\|_{\mathcal{H}^{(p)}}^2,\label{eqn:hfun}
\end{align}
where $\widehat\xi^{(p)}(x,\delta)=n\frac{\left(K(x,\cdot)\lambda_0(x)\right)'}{\lambda_0(x)}\frac{\delta}{Y(x)}$ and $\xi^{(p)}(x,\delta)=\frac{\left(K(x,\cdot)\lambda_0(x)\right)'}{\lambda_0(x)}\frac{\delta}{S_T(x)}.$ 
Then, by the triangular inequality, and by taking square (notice that $\|b\|-\|a-b\|\leq \|a\|\leq \|b\|+ \|a-b\|$), the claim in Equation \eqref{eqn:prove} follows from proving:
\begin{itemize}
\item[i)] $\left\|\frac{1}{n}\sum_{i=1}^n{\widehat \xi}^{(p)}(T_i,\Delta_i)-{\xi}^{(p)}(T_i,\Delta_i)\right\|_{\mathcal{H}^{(p)}}=o_p(1)$, and

\item[ii)]$\left\|\frac{1}{n}\sum_{i=1}^n{\xi}^{(p)}(T_i,\Delta_i)\right\|_{\mathcal{H}^{(p)}}=O_p(1).$
\end{itemize}
Notice that item ii) holds trivially by Equation \eqref{eqn:hfun}, and by the law of large numbers for V-statistics, which can be applied due to Lemma 9, under Condition d). We finish by proving the result in item i). Following the same steps used in Equation~\eqref{eqn:sumsplit}, we have that
\begin{align}
\left\|\frac{1}{n}\sum_{i=1}^n{\widehat \xi}^{(p)}(T_i,\Delta_i)-{\xi}^{(p)}(T_i,\Delta_i)\right\|_{\mathcal{H}^{(p)}}
&\quad=\left\|\frac{1}{n}\sum_{i=1}^n\frac{\left(K(T_i,\cdot)\lambda_0(T_i)\right)'}{\lambda_0(T_i)}\left(\frac{\Delta_i}{Y(T_i)/n}-\frac{\Delta_i}{S_T(T_i)}\right)\right\|_{\mathcal{H}^{(p)}}\nonumber\\
&\quad=\sup_{\omega\in B_1(\mathcal{H}^{(p)})}\frac{1}{n}\sum_{i=1}^n \frac{(\omega(T_i)\lambda_0(T_i))'}{\lambda_0(T_i)}\left(\frac{\Delta_i}{Y(T_i)/n}-\frac{\Delta_i}{S_T(T_i)}\right)\nonumber\\
&\quad\leq \sup_{\omega\in B_1(\mathcal{H}^{(p)})}\frac{1}{n}\sum_{i=1}^n \frac{(\omega(T_i)\lambda_0(T_i))'}{\lambda_0(T_i)}\left(\frac{\Delta_i}{Y(T_i)/n}-\frac{\Delta_i}{S_T(T_i)}\right)\ind_{\{T_i\leq t_{\varepsilon}\}}\label{eqn:random1928dnvdz}\\
&\quad\quad+ \sup_{\omega\in B_1(\mathcal{H}^{(p)})}\frac{1}{n}\sum_{i=1}^n \frac{(\omega(T_i)\lambda_0(T_i))'}{\lambda_0(T_i)}\left(\frac{\Delta_i}{Y(T_i)/n}-\frac{\Delta_i}{S_T(T_i)}\right)\ind_{\{T_i> t_{\varepsilon}\}},\label{eqn:randomxureyqzx}
\end{align}
where $\varepsilon>0$ and $t_{\varepsilon}>0$, and $t_{\epsilon}$ is the infimum over all $t>0$ such that
\begin{align*}
\int_{t}^{\infty}\int_{t}^{\infty}\frac{|K^\star(t,s)|}{\lambda_0(t)\lambda_0(s)S_T(t)S_T(s)}S_C(t)S_C(s)f_X(t)f_X(s)dtds\leq \varepsilon.    
\end{align*}
Notice that such a  $t_{\epsilon}$ is well-defined by Lemma~\ref{lemma:conditiondkernels} and Condition d.3). For the term in Equation \eqref{eqn:random1928dnvdz}, observe that
\begin{align}
&\left(\sup_{\omega\in B_1(\mathcal{H}^{(p)})}\frac{1}{n}\sum_{i=1}^n \frac{(\omega(T_i)\lambda_0(T_i))'}{\lambda_0(T_i)}\left(\frac{\Delta_i}{Y(T_i)/n}-\frac{\Delta_i}{S_T(T_i)}\right)\ind_{\{T_i\leq t_{\varepsilon}\}}\right)^2\label{eqn:thm6part1repeat} \\
&\leq\sup_{t\leq t_{\varepsilon}}\left(\frac{1}{Y(t)/n}-\frac{1}{S_T(t)}\right)^2\frac{1}{n^2}\sum_{i=1}^n\sum_{j=1}^n \Delta_i\Delta_j \frac{K^\star(T_i,T_j)}{\lambda_0(T_i)\lambda_0(T_j)}\ind_{\{T_i\leq t_{\varepsilon}\}}\ind_{\{T_j\leq t_{\varepsilon}\}}\nonumber\\
&=o_p(1)
\end{align}
where the last line holds since $\sup_{t\leq t_{\epsilon}}\left|\frac{1}{Y(t)/n}-\frac{1}{S_T(t)}\right|=o_p(1)$ a.s., by an application of Glivenko-Cantelli, and since the double sum converges to 
$$\E\left(\Delta_1\Delta_2 \frac{K^\star(T_1,T_2)}{\lambda_0(T_1)\lambda_0(T_2)}\ind_{\{T_1\leq t_{\varepsilon}\}}\ind_{\{T_2\leq t_{\varepsilon}\}}\right),$$ 
which is finite by Lemma~\ref{lemma:conditiondkernels} and Condition d.3).

Finally, we prove that the term in Equation~\eqref{eqn:randomxureyqzx} is $o_p(1)$. Define  $R(t) = \left|\frac{S_T(t)}{Y(t)/n}-1\right|$.  \citet{gill1983large} proved that $\sup_{t\leq \tau_n} R(t) = O_p(1)$ where $\tau_n = \max\{T_1,\ldots, T_n\}$. By using this result, the term in Equation \eqref{eqn:randomxureyqzx} satisfies

\begin{align*}
& \left(\sup_{\omega\in B_1(\mathcal{H}^{(p)})}\frac{1}{n}\sum_{i=1}^n \frac{(\omega(T_i)\lambda_0(T_i))'}{\lambda_0(T_i)}\left(\frac{\Delta_i}{Y(T_i)/n}-\frac{\Delta_i}{S_T(T_i)}\right)\ind_{\{T_i> t_{\varepsilon}\}}\right)^2 \nonumber\\
&\leq\frac{1}{n^2}\sum_{i=1}^n \sum_{j=1}^n \frac{\Delta_i \Delta_j|K^\star(T_i,T_j)|}{\lambda_0(T_i)\lambda_0(T_j)S_T(T_i)S_T(T_j)}R(T_i)R(T_j)\ind_{\{T_i > t_{\varepsilon}\}}\ind_{\{T_j > t_{\varepsilon}\}}\\
&=O_p(1)\frac{1}{n^2}\sum_{i=1}^n \sum_{j=1}^n  \frac{\Delta_i \Delta_j|K^\star(T_i,T_j)|}{\lambda_0(T_i)\lambda_0(T_j)S_T(T_i)S_T(T_j)}\ind_{\{T_i > t_{\varepsilon}\}}\ind_{\{T_j > t_{\varepsilon}\}}\nonumber\\
&=O_p(1) \int_{t_{\varepsilon}}^{\infty}\int_{t_{\varepsilon}}^{\infty}\frac{|K^\star(t,s)|}{\lambda_0(t)\lambda_0(s)S_T(t)S_T(s)}S_C(t)S_C(s)f_X(t)f_X(s)dtds\\
&=O_p(1)\varepsilon,\nonumber
\end{align*}
where in the second line we used that $\sup_{t\leq \tau_n}R(t) = O_p(1)$, and in the fourth line we used the law of large numbers, and the definition of $t_{\varepsilon}$. Since $\varepsilon$ is arbitrary, we conclude that equation \eqref{eqn:randomxureyqzx} tends to 0 in probability.

\subsubsection{Proof of Theorem \ref{thm:injective}}
\paragraph{Survival Stein operator (c=s):}
We proceed by contradiction. Assume that $f_X\neq f_0$ but $\operatorname{c-KSD}(f_X \| f_0)=\sup_{\omega\in B_1(\mathcal{H}^{(s)})}\E_X((\mathcal{T}_0^{(s)}\omega)(T,\Delta))=0$.
Recall that
\begin{align*}
&\E_X((\mathcal{T}_0^{(s)}\omega)(T,\Delta))\\
&=\E_X((\mathcal{T}_0\omega)(T,\Delta))\\
&=\E_X\left[\Delta\omega'(T)+\Delta\omega(T)\frac{f_0'(T)}{f_0(T)}-\Delta\omega(T)\lambda_C(T)\right]+\omega(0)f_0(0).
\end{align*}
Similarly, define 
\begin{align*}
(\mathcal{T}_X\omega)(x,\delta)&=\delta\omega'(x)+\delta\omega(x)\frac{f_X'(x)}{f_X(x)}-\delta\omega(x)\lambda_C(x)+\omega(0)f_X(0),
\end{align*}
and notice that $\E_X((\mathcal{T}_X\omega)(T,\Delta))=0$ by the Stein's identity. Then
\begin{align*}
\E_X\left((\mathcal{T}^{(s)}_0\omega)(T,\Delta)\right)&=\E_X\left((\mathcal{T}_0\omega)(T,\Delta)\right)\\
&=\E_X\left((\mathcal{T}_0 \omega)(T,\Delta)-(\mathcal{T}_X \omega)(T,\Delta)\right)\\
&=\E_X\left(\Delta\omega(T)\left(\frac{f_0'(T)}{f_0(T)}-\frac{f_X'(T)}{f_X(T)}\right)+\omega(0)(f_0(0)-f_X(0))\right)\\
&=\E_X\left(\Delta\omega(T)\left(\log \frac{f_0(T)}{f_X(T)}\right)'\right)+\omega(0)(f_0(0)-f_X(0)),
\end{align*}
and thus, we have
\begin{align*}
0=\operatorname{s-KSD}(f_X \| f_0)&=\sup_{\omega\in B_1(\mathcal{H}^{(s)})}\E_X((\mathcal{T}_0^{(s)}\omega)(T,\Delta))\\
&=\sup_{\omega\in B_1(\mathcal{H}^{(s)})}\E_X\left(\Delta\omega(T)\left(\log \frac{f_0(T)}{f_X(T)}\right)'\right)+\omega(0)(f_0(0)-f_X(0))\\
&=\sup_{\omega\in B_1(\mathcal{H}^{(s)})}\left\langle\omega,\int_{0}^\infty K(x,\cdot)d\nu(x)\right\rangle=\left\|\int_{0}^\infty K(x,\cdot)d\nu(x)\right\|_{\mathcal{H}^{(s)}},
\end{align*}
where $d\nu(x)=\left(\log\frac{f_0(x)}{f_X(x)}\right)'S_C(x)f_X(x)dx+(f_0(x)-f_X(x))\delta_0(x)$, and where we identify $\int_0^\infty K(x,\cdot)d\nu(x)$ as the mean kernel embedding of the measure $\nu$.  We shall assume that the above embedding is well-defined, otherwise we have $\operatorname{s-KSD}(f_X \| f_0)\neq 0$. Since the kernel is $c_0$-universal, the previous set of equations implies $\nu$ is the zero measure, which implies that $f_0(0)=f_X(0)$, and 
\begin{align}
\left(\log \frac{f_0(x)}{f_X(x)}\right)'=0,\label{eqn:consistency1}
\end{align}
 as long as $f_X(x)>0$ implies $S_C(x)f_X(x)>0$ (which does, since we assume $S_C(x)=0$ implies $S_X(x)=\int_x^\infty f_X(x)dx=0$). Equation \eqref{eqn:consistency1} yields $f_0\propto f_X$ and $f_X=f_0$ since both, $f_0$ and $f_X$, are probability density functions. This finalizes our proof.

\paragraph{Martingale Stein operator (c=m):} Define 
\begin{align*}
(\mathcal{T}_X^{(m)}\omega)(x,\delta)&=\omega'(x)\frac{\delta}{\lambda_X(x)}-(\omega(x)-\omega(0)),
\end{align*}
and notice that $\E_X((\mathcal{T}_X^{(m)}\omega)(T,\Delta))=0$ follows from the martingale identity.
Observe that
\begin{align*}
\operatorname{m-KSD}(f_X \| f_0)&=\sup_{\omega\in B_1(\mathcal{H}^{(m)})}\E_X((\mathcal{T}_0^{(m)}\omega)(T,\Delta))\\
&=\sup_{\omega\in B_1(\mathcal{H}^{(m)})}\E_X((\mathcal{T}_0^{(m)}\omega)(T,\Delta))-\E_X((\mathcal{T}_X^{(m)}\omega)(T,\Delta))\\
&=\sup_{\omega\in B_1(\mathcal{H}^{(m)})}\E_X\left(\omega'(T)\Delta\left(\frac{1}{\lambda_0(T)}-\frac{1}{\lambda_X(T)}\right)\right)\nonumber\\
&= \sup_{\omega\in B_1(\mathcal{H}^{(m)})}  \int_0^{\infty} \omega'(x)\left(\frac{1}{\lambda_0(x)}-\frac{1}{\lambda_X(x)} \right)f_X(x)S_C(x)dx.
\end{align*}
Denote $\alpha(x) = \left(\frac{1}{\lambda_0(x)}-\frac{1}{\lambda_X(x)} \right)f_X(x)S_C(x)$, and, as usual, $K^\star(x,y) = \frac{\partial^2}{\partial x \partial y} K(x,y)$. Then, 
\begin{align*}
\operatorname{m-KSD}^2(f_X \| f_0)
&= \int_0^{\infty}\int_0^{\infty} \alpha(x)K^\star(x,y)\alpha(y)dxdy.
\end{align*}
Since $K^\star$ is $c_0$-universal by Condition a), the previous term is equal to $0$ if and only if $\alpha(x) = 0$ for all $x>0$. Now, $\alpha(x) = 0$ if and only if $\frac{1}{\lambda_0(x)}-\frac{1}{\lambda_X(x)}=0$, which holds if and only if $f_0(x) = f_X(x)$ for all $x >0$. 

\subsubsection{Proof of Theorem \ref{thm:h_0}}
\paragraph{Deterministic kernels $(c\in\{s,m\})$:}
For $c\in\{s,m\}$ which are associated to a deterministic kernel function $h^{(c)}((T,\Delta),(T',\Delta'))$, the result follows from the classical theory of V-statistics since $h^{(c)}$ are degenerate kernels, and under the following moment conditions:
\begin{itemize}
\item[i)] $\E_0(|h^{(c)}((T,\Delta),(T,\Delta))|)<\infty$, and 
\item[ii)] $\E_0(h^{(c)}((T,\Delta),(T',\Delta'))^2)<\infty$,
\end{itemize}
which are satisfied due to Lemma~\ref{lemma:conditiondkernels2}.

\paragraph{Random kernel $(c\in\{p\})$:}
Observe that
\begin{align*}
\sqrt{n}\widehat{\operatorname{c-KSD}}(f_X \| f_0)&=\sup_{\omega\in B_1(\mathcal{H}^{(p)})}\frac{1}{\sqrt{n}}\sum_{i=1}^n\frac{(\omega(T_i)\lambda_0(T_i))'}{\lambda_0(T_i)}\frac{\Delta_i}{Y(T_i)/n}\\
&=\sup_{\omega\in B_1(\mathcal{H}^{(p)})}\frac{1}{\sqrt{n}}\int_{0}^{\tau_n}\frac{(\omega(x)\lambda_0(x))'}{\lambda_0(x)}\frac{1}{Y(x)/n}dN(x),
\end{align*}
where $dN(x)=\sum_{i=1}^n\Delta_i\delta_{T_i}(x)$.  By hypothesis, $\int_0^{\infty}(\omega(x)\lambda_0(x))'dx=0$ for all $\omega\in\mathcal{H}^{(p)}$, then
\begin{align*}
\sqrt{n}\widehat{\operatorname{c-KSD}}(f_X \| f_0)
&=\sup_{\omega\in B_1(\mathcal{H}^{(p)})}\frac{1}{\sqrt{n}}\int_{0}^{\tau_n}\frac{(\omega(x)\lambda_0(x))'}{\lambda_0(x)}\frac{1}{Y(x)/n}dN(x)-\sqrt{n}\int_0^{\infty}(\omega(x)\lambda_0(x))'dx\\
&=\sup_{\omega\in B_1(\mathcal{H}^{(p)})}\frac{1}{\sqrt{n}}\int_{0}^{\tau_n}\frac{(\omega(x)\lambda_0(x))'}{\lambda_0(x)}\frac{1}{Y(x)/n}dM(x)-\sqrt{n}\int_{\tau_n}^{\infty}(\omega(x)\lambda_0(x))'dx
\end{align*}
where $dM(x)=dN(x)-Y(x)\lambda_0(x)dx$. Therefore we conclude that $\sqrt{n}\widehat{\operatorname{c-KSD}}(f_X \| f_0)\in [a-b,a+b]$, where 
\begin{align*}
a&= \sup_{\omega\in B_1(\mathcal{H}^{(p)})}\frac{1}{\sqrt{n}}\int_{0}^{\tau_n}\frac{(\omega(x)\lambda_0(x))'}{\lambda_0(x)}\frac{1}{Y(x)/n}dM(x), \text{ and}\\
b&=\sup_{\omega\in B_1(\mathcal{H}^{(p)})}\sqrt{n}\int_{\tau_n}^{\infty}(\omega(x)\lambda_0(x))'dx
\end{align*}
  
We will prove that $b=o_p(1)$. Let $K^\star(x,y) = \left(\frac{\partial ^2}{\partial x \partial y} \lambda_0(x)\lambda_0(y)K(x,y) \right)$, then
\begin{align*}
\left(\sup_{\omega\in B_1(\mathcal{H}^{(p)})}\sqrt{n}\int_{\tau_n}^{\infty}(\omega(x)\lambda_0(x))'dx\right)^2
&=n \int_{\tau_n}^{\infty}\int_{\tau_n}^{\infty} \frac{K^\star(x,y)}{f_T(x)f_T(y)}f_T(x)f_T(y)dxdy\\
& \leq n S_T(\tau_n)^{1/2}\left(\int_{\tau_n}^{\infty}\left(\int_{\tau_n}^{\infty} \frac{K^\star(x,y)}{f_T(x)f_T(y)}f_T(x)dx\right)^2f_T(y)dy\right)^{1/2}\\
& \leq n S_T(\tau_n)\left(\int_{\tau_n}^{\infty}\int_{\tau_n}^{\infty} \frac{K^\star(x,y)^2}{f_T(x)^2f_T(y)^2}f_T(x)f_T(y)dxdy\right)^{1/2},\\
\end{align*}
where the two inequalities above follow from the Cauchy-Schwarz inequality, by the fact that  $n S_T(\tau_n)=O_p(1)$ \citep{Yang1994}, and  the previous double integral converges to 0 by Condition c.3), since $\tau_n=\max\{T_1,\ldots,T_n\}\to\infty$. From the previous result, we deduce
\begin{align*}
\sqrt{n}\widehat{\operatorname{c-KSD}}(f_X \| f_0)&=\sup_{\omega\in B_1(\mathcal{H}^{(p)})}\frac{1}{\sqrt{n}}\int_{0}^{\tau_n}\frac{(\omega(x)\lambda_0(x))'}{\lambda_0(x)}\frac{1}{Y(x)/n}dM(x)+o_p(1).
\end{align*}
The previous step is important in our analysis as it allows us to write $\sqrt{n}\widehat{\operatorname{c-KSD}}(f_X \| f_0)$ in terms of $M(x)$. Our next step is to prove that we can replace the term $Y(x)/n$, in the previous equation, by $S_T(x)$. Observe
\begin{align*}
&\sqrt{n}\widehat{\operatorname{c-KSD}}(f_X \| f_0)\\
&\quad=\sup_{\omega\in B_1(\mathcal{H}^{(p)})}\frac{1}{\sqrt{n}}\int_{0}^{\tau_n}\frac{(\omega(x)\lambda_0(x))'}{\lambda_0(x)}\left(\frac{1}{Y(x)/n}-\frac{1}{S_T(x)}+\frac{1}{S_T(x)}\right)dM(x)+o_p(1)\\
&\quad=\sup_{\omega\in B_1(\mathcal{H}^{(p)})}\frac{1}{\sqrt{n}}\int_{0}^{\tau_n}\frac{(\omega(x)\lambda_0(x))'}{\lambda_0(x)}\frac{1}{S_T(x)}dM(x)\\
&\quad\quad\pm \sup_{\omega\in B_1(\mathcal{H}^{(p)})}\frac{1}{\sqrt{n}}\int_{0}^{\tau_n}\frac{(\omega(x)\lambda_0(x))'}{\lambda_0(x)}\left(\frac{1}{Y(x)/n}-\frac{1}{S_T(x)}\right)dM(x)+o_p(1).
\end{align*}
The $\pm$ notation above denotes  lower, given by $-$, and upper, given by $+$, bounds for $\sqrt{n}\widehat{\operatorname{c-KSD}}(f_X \| f_0)$. Finally, by taking square, the result is deduced by proving 
\begin{align*}
\sup_{\omega\in B_1(\mathcal{H}^{(p)})}\frac{1}{\sqrt{n}}\int_{0}^{\tau_n}\frac{(\omega(x)\lambda_0(x))'}{\lambda_0(x)}\left(\frac{1}{Y(x)/n}-\frac{1}{S_T(x)}\right)dM(x)=o_p(1),    
\end{align*}
and 
\begin{align*}
\sup_{\omega\in B_1(\mathcal{H}^{(p)})}\frac{1}{\sqrt{n}}\int_{0}^{\tau_n}\frac{(\omega(x)\lambda_0(x))'}{\lambda_0(x)}\frac{1}{S_T(x)}dM(x)=O_p(1). 
\end{align*}
The second equation won't be verified as, at the end of this proof, we will show that such a quantity converges in distribution to some random variable, thus it will be bounded in probability. For the first equation, notice that
\begin{align*}
&\left(\sup_{\omega\in B_1(\mathcal{H}^{(p)})}\frac{1}{\sqrt{n}}\int_{0}^{\tau_n}\frac{(\omega(x)\lambda_0(x))'}{\lambda_0(x)}\left(\frac{1}{Y(x)/n}-\frac{1}{S_T(x)}\right)dM(x)\right)^2\\
&=\frac{1}{n}\int_0^{\tau_n}\int_0^{\tau_n}\frac{K^\star(x,y)}{\lambda_0(x)\lambda_0(y)}\left(\frac{1}{Y(x)/n}-\frac{1}{S_T(x)}\right)\left(\frac{1}{Y(y)/n}-\frac{1}{S_T(y)}\right)dM(x)dM(y),
\end{align*}
is a double integral with respect to the $M(x)$. Then, by  Theorem 17 of \citet{fernandez2019reproducing}, it is enough to verify $$\frac{1}{n}\int_0^{\tau_n}\frac{K^{\star}(x,x)}{\lambda_0(x)^2}\left(\frac{1}{Y(x)/n}-\frac{1}{S_T(x)}\right)^2Y(x)\lambda_0(x)dx=o_p(1).$$ Observe that
\begin{align*}
\frac{1}{n}\int_0^{\tau_n}\frac{K^{\star}(x,x)}{\lambda_0(x)^2}\left(\frac{1}{Y(x)/n}-\frac{1}{S_T(x)}\right)^2Y(x)\lambda_0(x)dx 
&=\int_0^{\tau_n}\frac{K^{\star}(x,x)}{\lambda_0(x)^2}\left(1-\frac{Y(x)/n}{S_T(x)}\right)^2\frac{1}{Y(x)/n}\lambda_0(x)dx\\
&=O_p(1)\int_0^{\tau}\frac{K^{\star}(x,x)}{\lambda_0(x)^2}\left(1-\frac{Y(x)/n}{S_T(x)}\right)^2\frac{1}{S_T(x)}\lambda_0(x)dx\\
&=o_p(1),
\end{align*}
where the second equality follows from $n/Y(x)=O_p(1)1/S_T(x)$ uniformly for all $x\leq \tau_n$ \citep{gill1983large}, and the last equality is due to dominated convergence in sets of probability as high as desired, as $\left(1-\frac{Y(x)/n}{S_T(x)}\right)\to 0$ for all $x<\infty$ from the Glivenko Cantelli Theorem, and 
\begin{align*}
\frac{K^{\star}(x,x)}{\lambda_0(x)^2}\left(1-\frac{Y(x)/n}{S_T(x)}\right)^2\frac{1}{S_T(x)}\lambda_0(x)&=O_p(1)\frac{K^{\star}(x,x)}{f_0(x)^2 S_C(x)}f_0(x),
\end{align*}
which is integrable by Condition c.3).

Putting everything together, we have shown that
\begin{align*}
\sqrt{n}\widehat{\operatorname{c-KSD}}^2(f_X \| f_0)
&\quad=\left(\sup_{\omega\in B_1(\mathcal{H}^{(p)})}\frac{1}{\sqrt{n}}\int_{0}^{\tau_n}\frac{(\omega(x)\lambda_0(x))'}{\lambda_0(x)}\frac{1}{S_T(x)}dM(x)\right)^2+o_p(1)\\
&\quad=\frac{1}{n}\int_0^{\tau_n}\int_0^{\tau_n}\frac{K^\star(x,y)}{f_0(x)f_0(y)S_C(x)S_C(y)}dM(x)dM(y)+o_p(1)\\
&\quad=\frac{1}{n}\sum_{i=1}^n\sum_{j=1}^n\int_0^{X_i}\int_0^{X_j}\frac{K^\star(x,y)}{f_0(x)f_0(y)S_C(x)S_C(y)}dM_j(x)dM_i(y)+o_p(1)\\
&\quad=\frac{1}{n}\sum_{i=1}^n\sum_{j=1}^nJ((T_i,\Delta_i),(T_j,\Delta_j))+o_p(1),
\end{align*}
where $M_i(x)=N_i(x)-\int_0^{x}\ind_{\{T_i\geq y\}}\lambda_0(y)dy=\Delta_i\ind_{\{T_i\leq x\}}-\int_0^{x}\ind_{\{T_i\geq y\}}\lambda_0(y)dy$. Notice that the process $M_i(x)$ only depends on the $i$-th observation $(T_i,\Delta_i)$. Notice that the previous expression is approximately a V-statistic with kernel given by $J((T_i,\Delta_i),(T_j,\Delta_j))=\int_0^{T_i}\int_0^{T_j}\frac{K^\star(x,y)}{f_0(x)f_0(y)S_C(x)S_C(y)}dM_j(x)dM_i(y)$. By proposition 23 of \citet{fernandez2019reproducing}, we have that  $\E(J((T_i,\Delta_i),(T_j,\Delta_j))|T_i,\Delta_i)=0$, thus $J$ is a degenerate V-statistic kernel. 

By the classical theory of V-statistics, 
\begin{align*}
  \frac{1}{n}\sum_{i=1}^n\sum_{j=1}^nJ((T_i,\Delta_i),(T_j,\Delta_j))\overset{\mathcal{D}}{\to} r_p+\mathcal{Y}_p, 
\end{align*}
where $r_p$ is a constant and $\mathcal{Y}_p$ is a (potentially) infinite sum of independent $\chi^2$ random variables, as long as the following moment conditions are satisfied: 
\begin{align*}
\text{i) }\E_0(|J((T_1,\Delta_1),(T_1,\Delta_1))|)<\infty,\quad \text{and}\quad \text{ii) }\E_0(J((T_1,\Delta_1),(T_2,\Delta_2))^2)<\infty.
\end{align*}
Again, by Proposition 23 of   \citet{fernandez2019reproducing}, checking those moment conditions is equivalent to verify:
\begin{align*}
\text{i) }\E_0\left[\frac{K^\star(T,T)\Delta}{(f_0(T)S_C(T))^2}\right]<\infty\quad \text{and}\quad \text{ii) }\E_0\left[\frac{K^\star(T,T')^2\Delta\Delta'}{(f_0(T)f_0(T')S_C(T)S_C(T'))^2}\right]<\infty,
\end{align*}
which are exactly the conditions assumed in Condition c.3).

\section{Known Identities}\label{app:knwon_identity}

In Section 3.2, to derive the martingale Stein operator, we use the following identity
\begin{align*}
     \E_0\left[\Delta \phi(T)-\int_0^{T} \phi(t)\lambda_0(t)dt\right]=0,
 \end{align*}
which holds under the null hypothesis, where $\lambda_0$ is the hazard function under the null. 

Let $N_i(x)$ and $Y_i(x)$ be the individual counting and risk processes, defined by by $N_i(x)=\Delta_i\ind_{\{T_i\leq x\}}$ and $Y_i(x)=\ind_{\{T_i\geq x\}}$, respectively.  Then, the individual zero-mean martingale for the i-th individual corresponds to $M_i(x)=N_i(x)-\int_0^xY_i(y)\lambda_0(y)dy$, where $\E_0(M_i(x))=0$ for all $x$. 

Additionally, let $\phi:\R_+\to\R$ such that $\E_0\left|\int_0^x \phi(y) dM_i(y)\right|<\infty$ for all $x$, then $\int_0^x \phi(y) dM_i(y)$ is a zero-mean $(\mathcal{F}_x)$-martingale (See Chapter 2 of \citep{aalen2008survival}). The, taking expectation, we have 
\begin{align*}
\E_0 \left[\int_0^\infty \phi(x)dM_i(x)\right] = \E_0\left[\int_0^\infty \phi(x)(dN_i(x)-Y_i(x)\lambda_0(x)dx) \right] = \E_0\left[\Delta\phi(T) - \int_0^{T}\phi(x)\lambda_0(x)dx\right]=0.
\end{align*}

\section{Additional Experiments}\label{sec:add_exp}

\subsection{Weibull experiments: small deviations from the null}

\subsubsection*{Sample size: 30, and censoring percentages of $30\%$, $50\%$ and $70\%$}
\begin{figure}[H]
    \centering
    \includegraphics[scale=0.21]{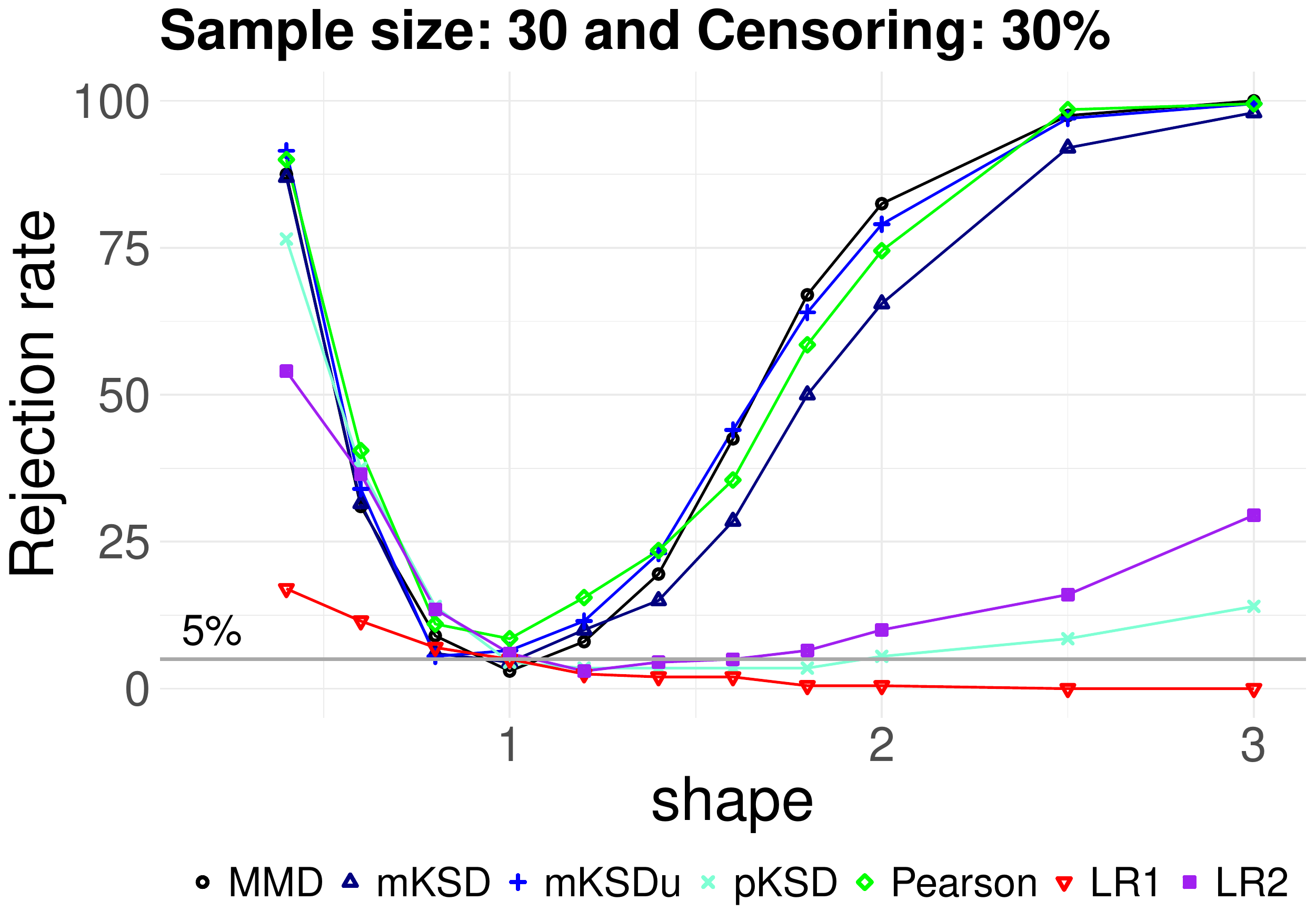}
    \includegraphics[scale=0.21]{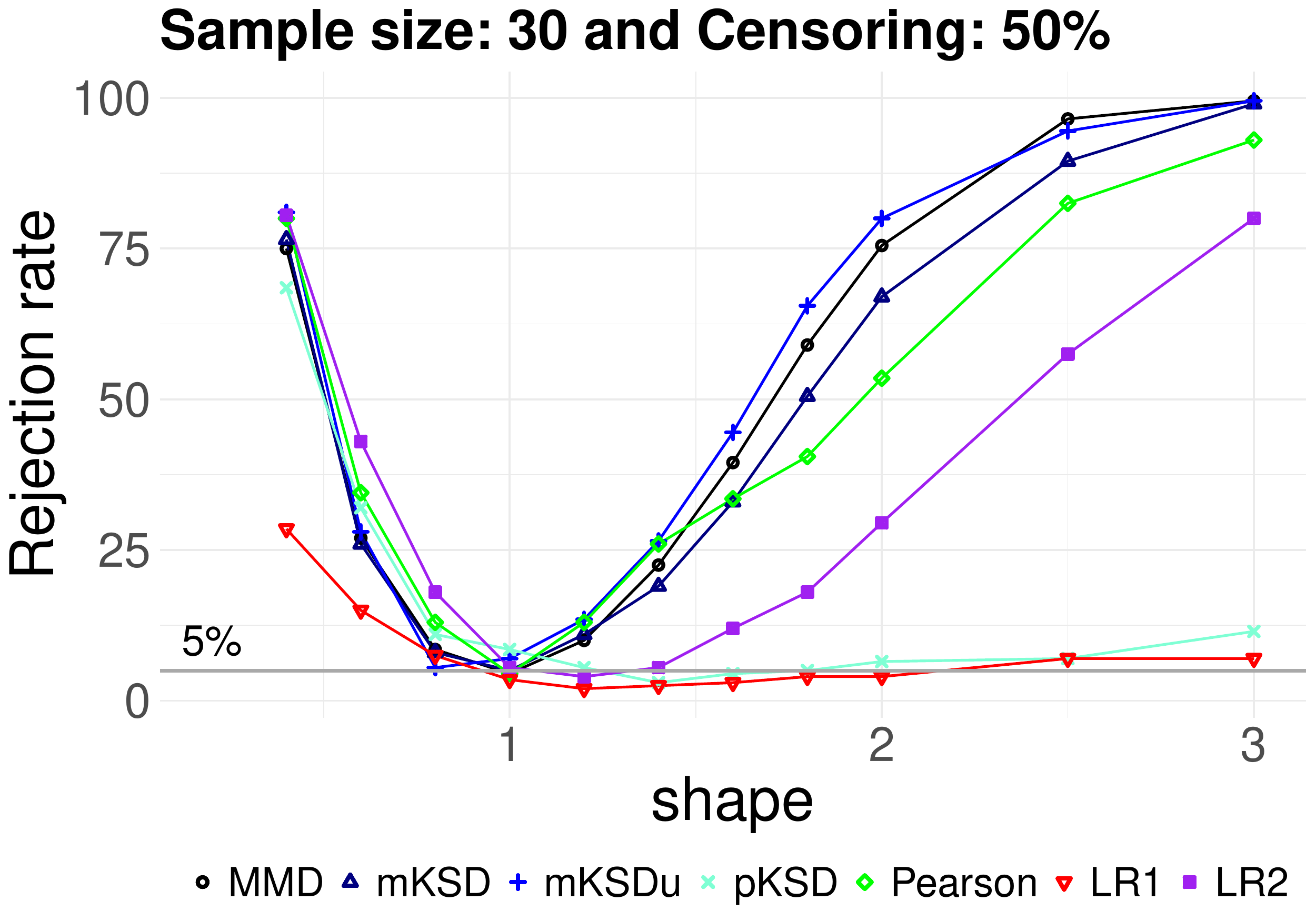}
    \includegraphics[scale=0.21]{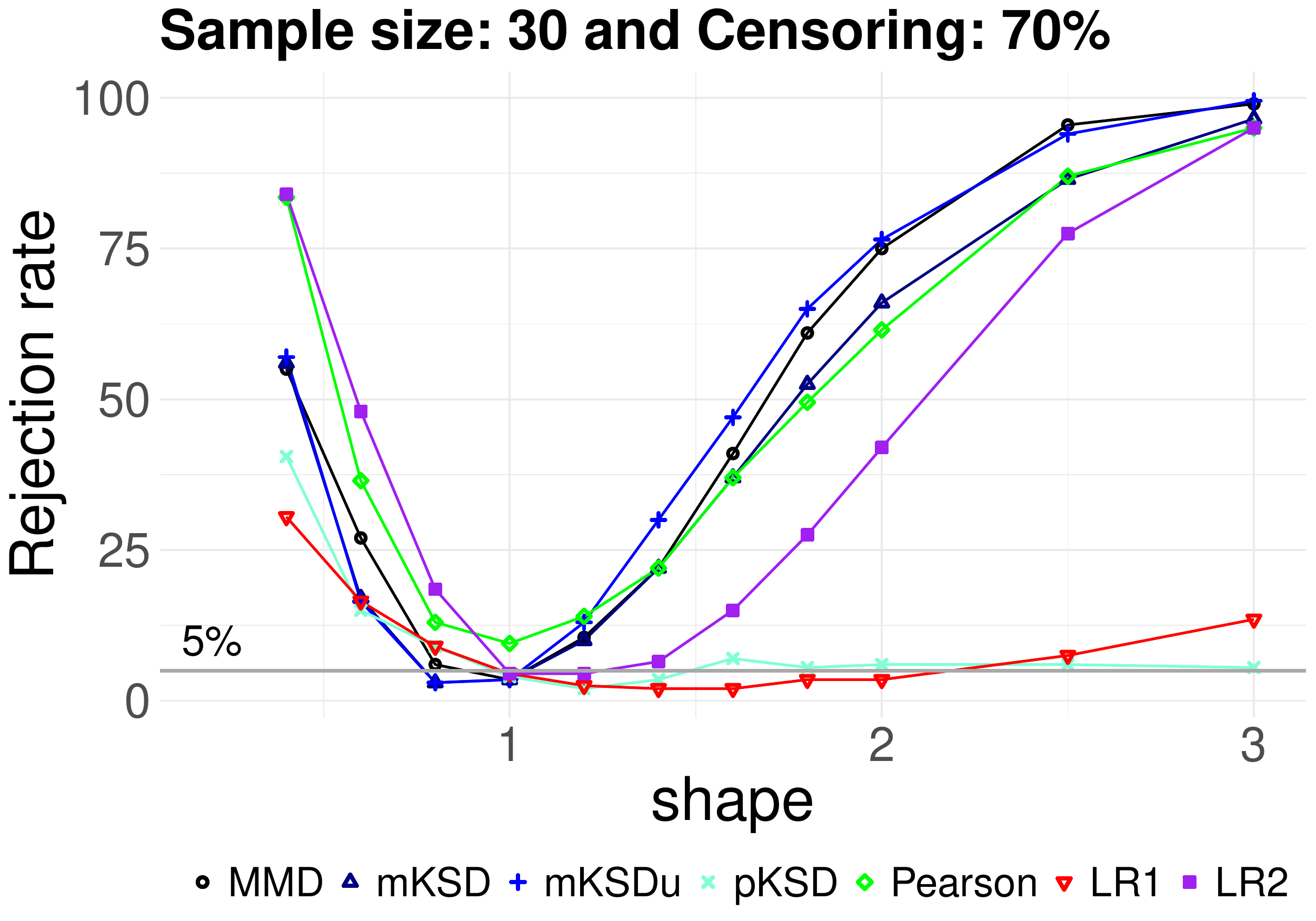}
\end{figure}
\subsubsection*{Sample size: 50, and censoring percentages of $30\%$, $50\%$ and $70\%$}
\begin{figure}[H]
    \centering
    \includegraphics[scale=0.21]{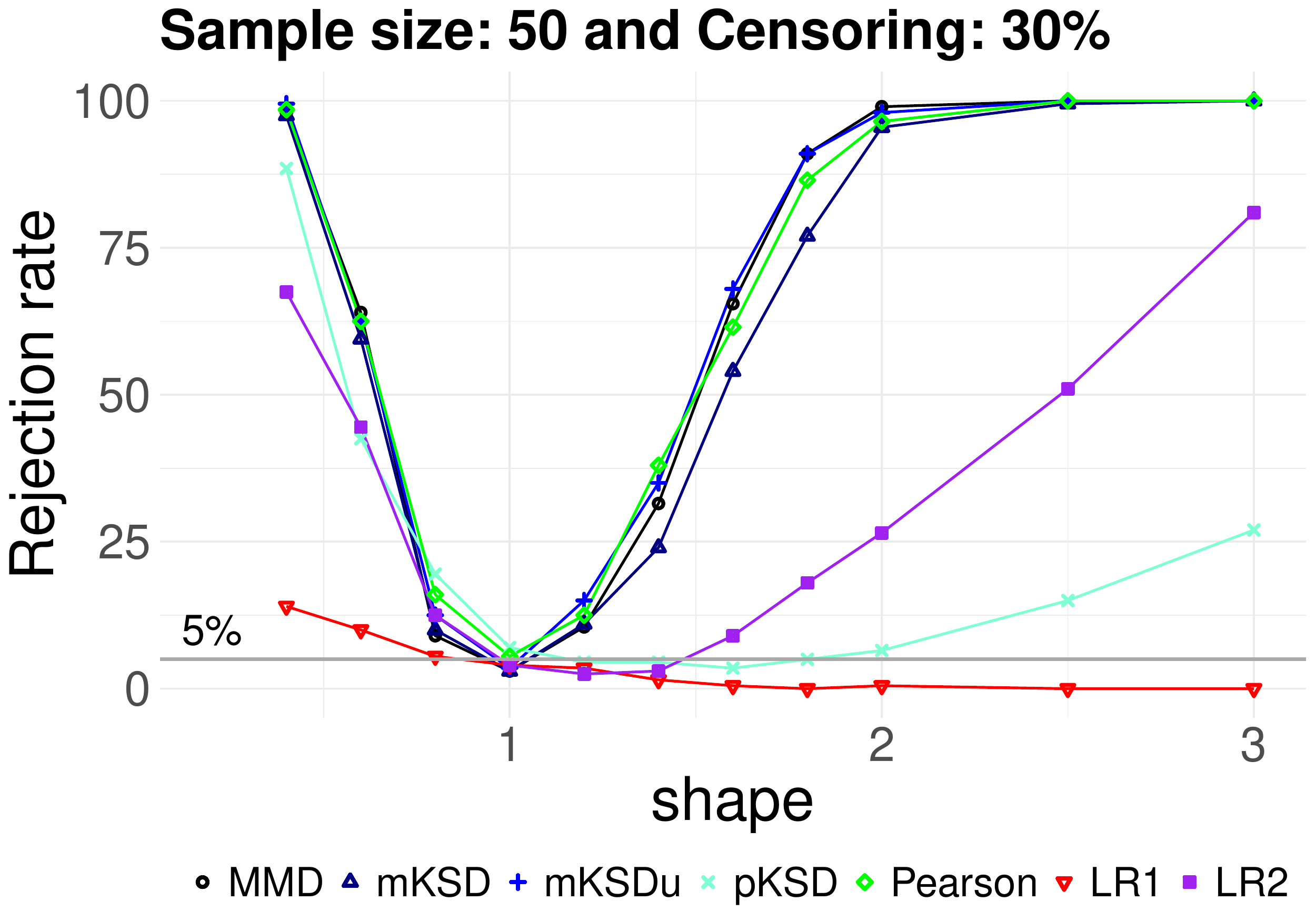}
    \includegraphics[scale=0.21]{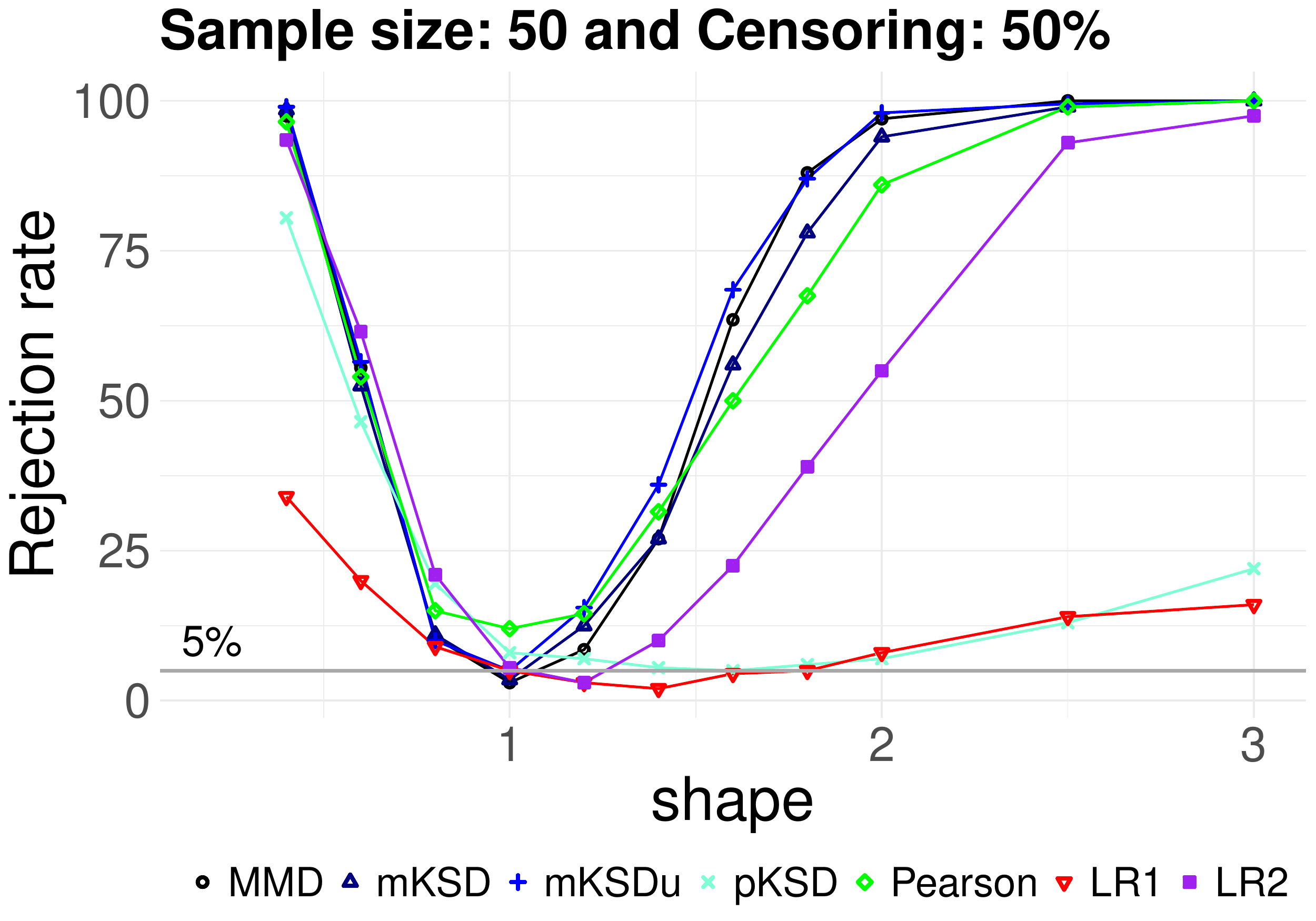}
    \includegraphics[scale=0.21]{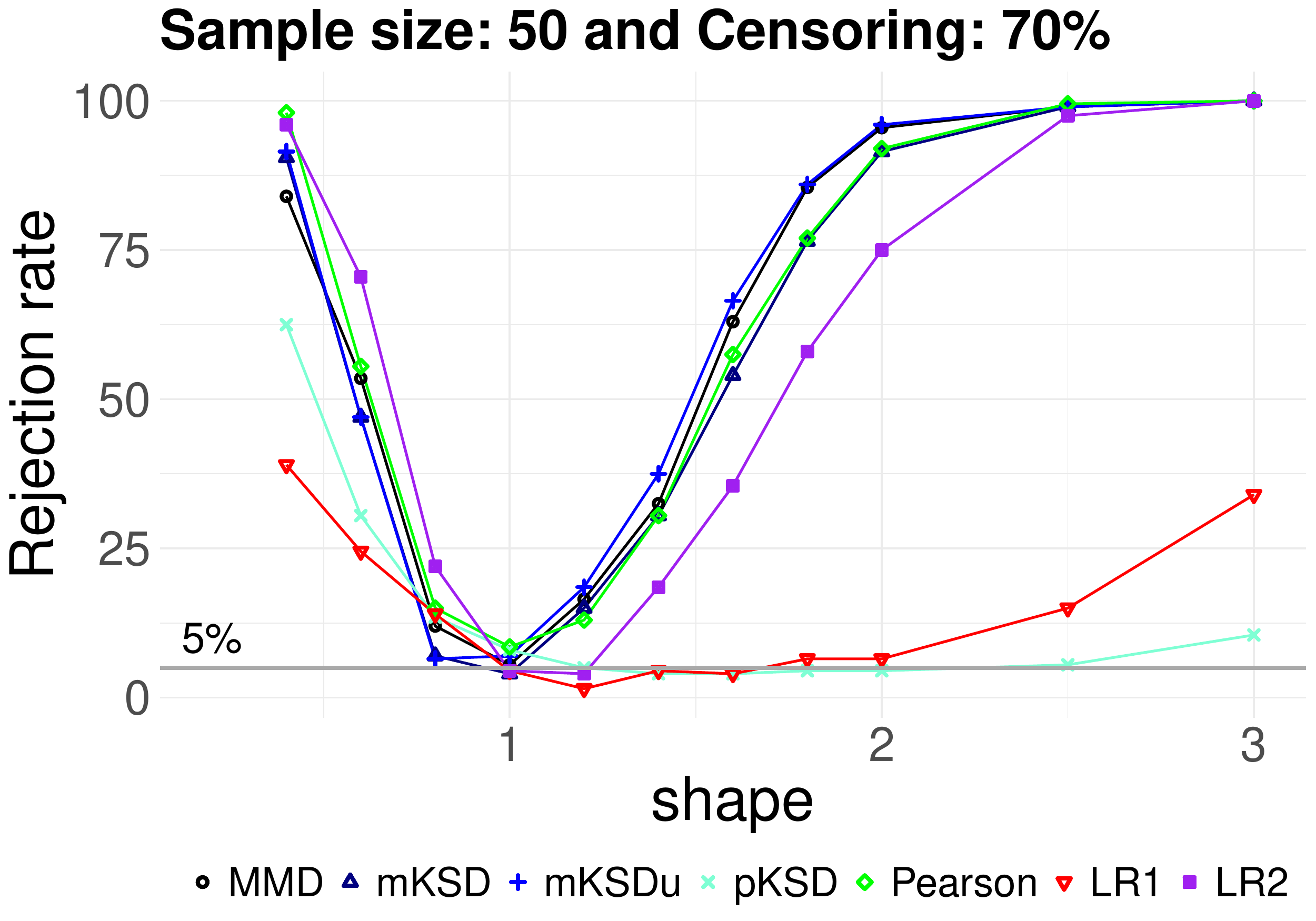}
\end{figure}
\subsubsection*{Sample size: 100, and censoring percentages of $30\%$, $50\%$ and $70\%$}
\begin{figure}[H]
    \centering
    \includegraphics[scale=0.21]{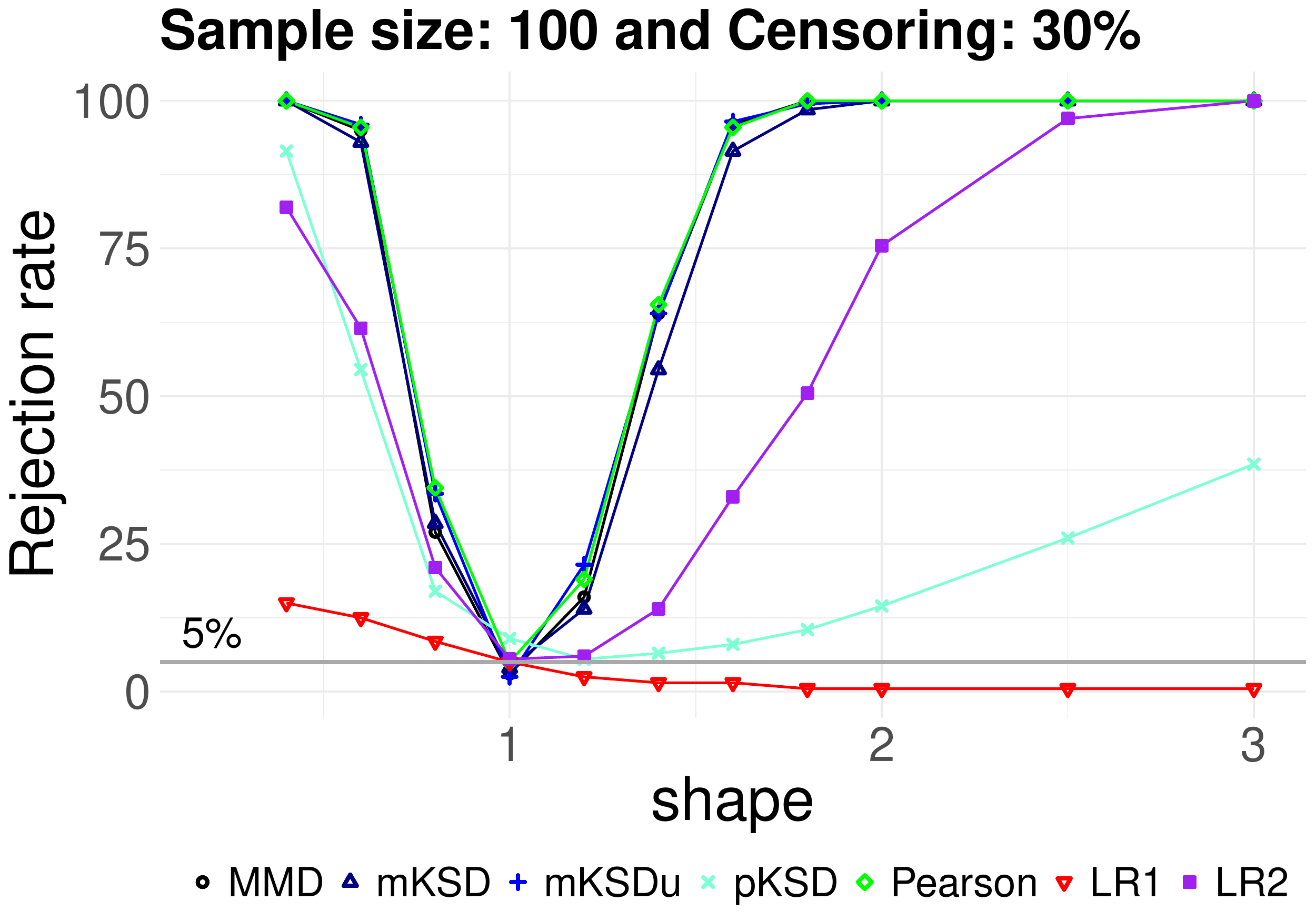}
    \includegraphics[scale=0.21]{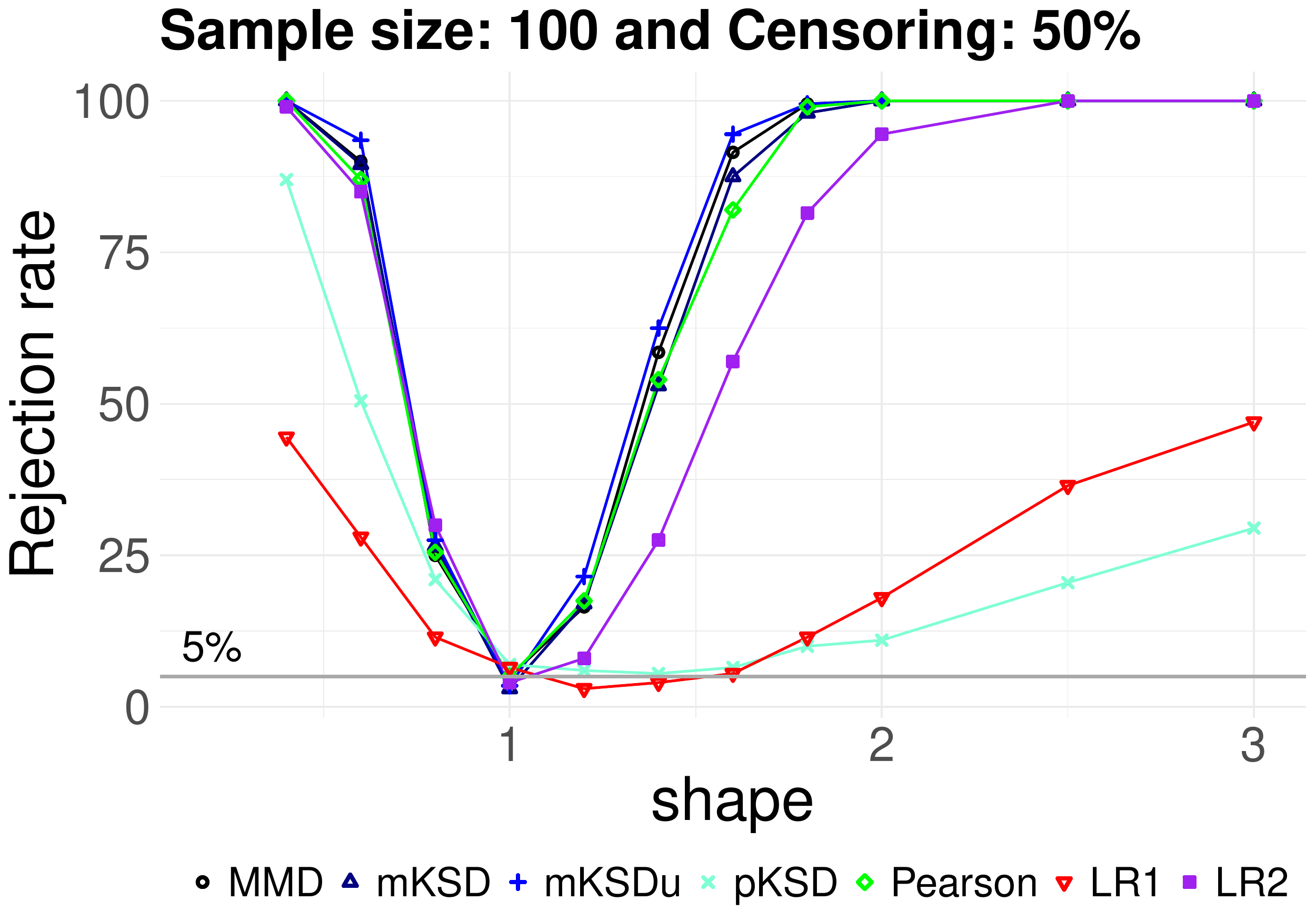}
    \includegraphics[scale=0.21]{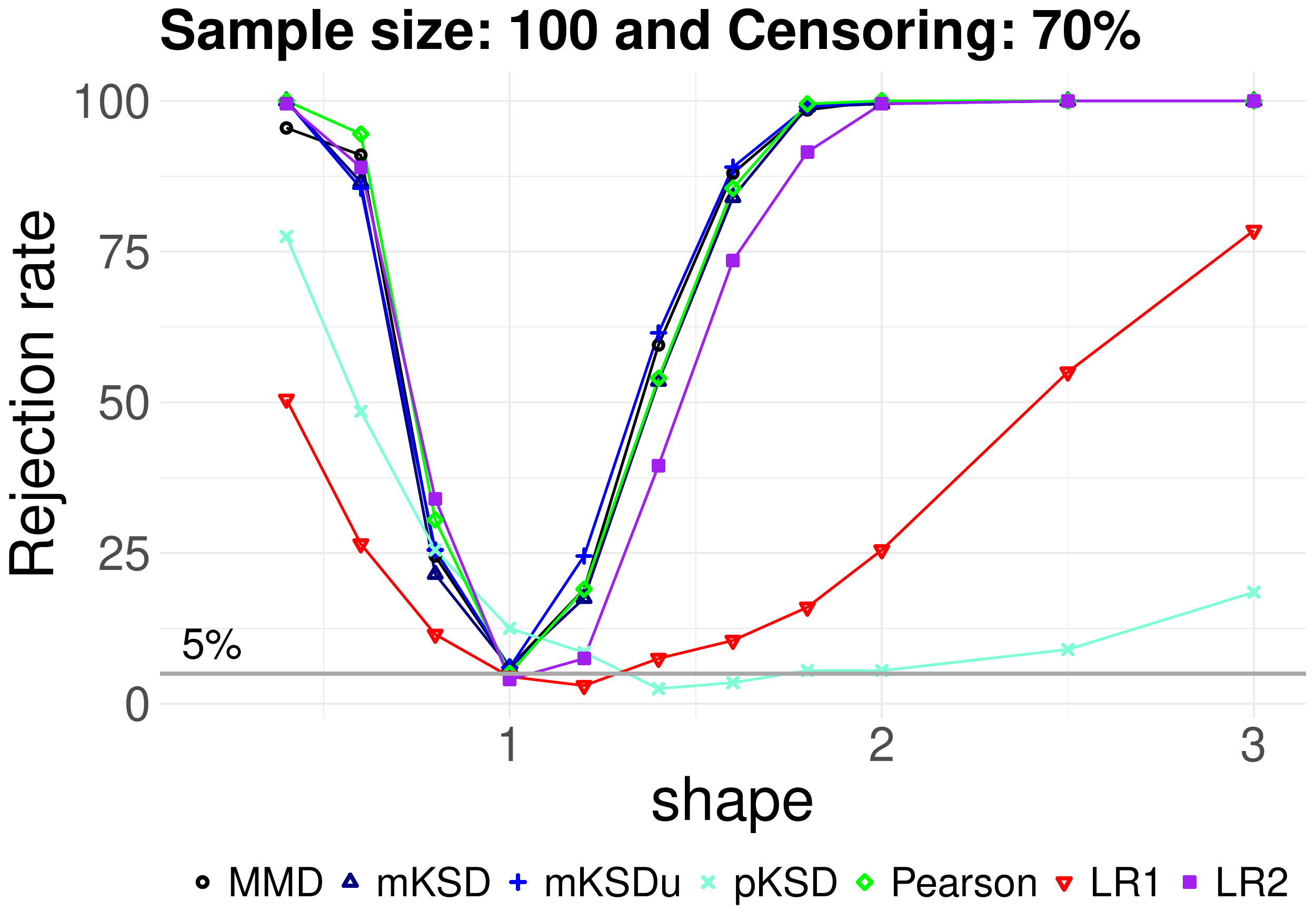}
\end{figure}
\subsubsection*{Sample size: 200, and censoring percentages of $30\%$, $50\%$ and $70\%$}
\begin{figure}[H]
    \centering
    \includegraphics[scale=0.21]{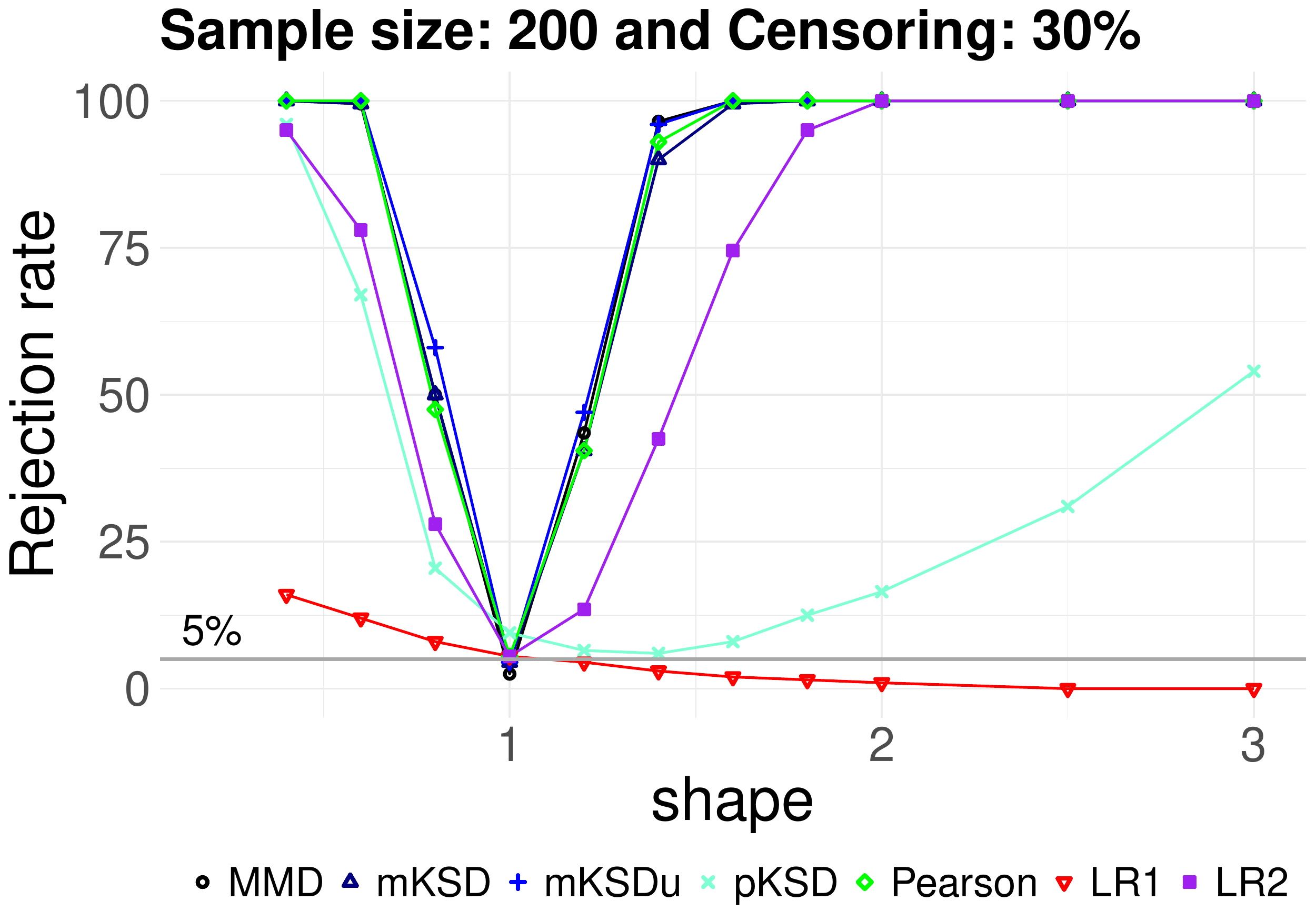}
    \includegraphics[scale=0.21]{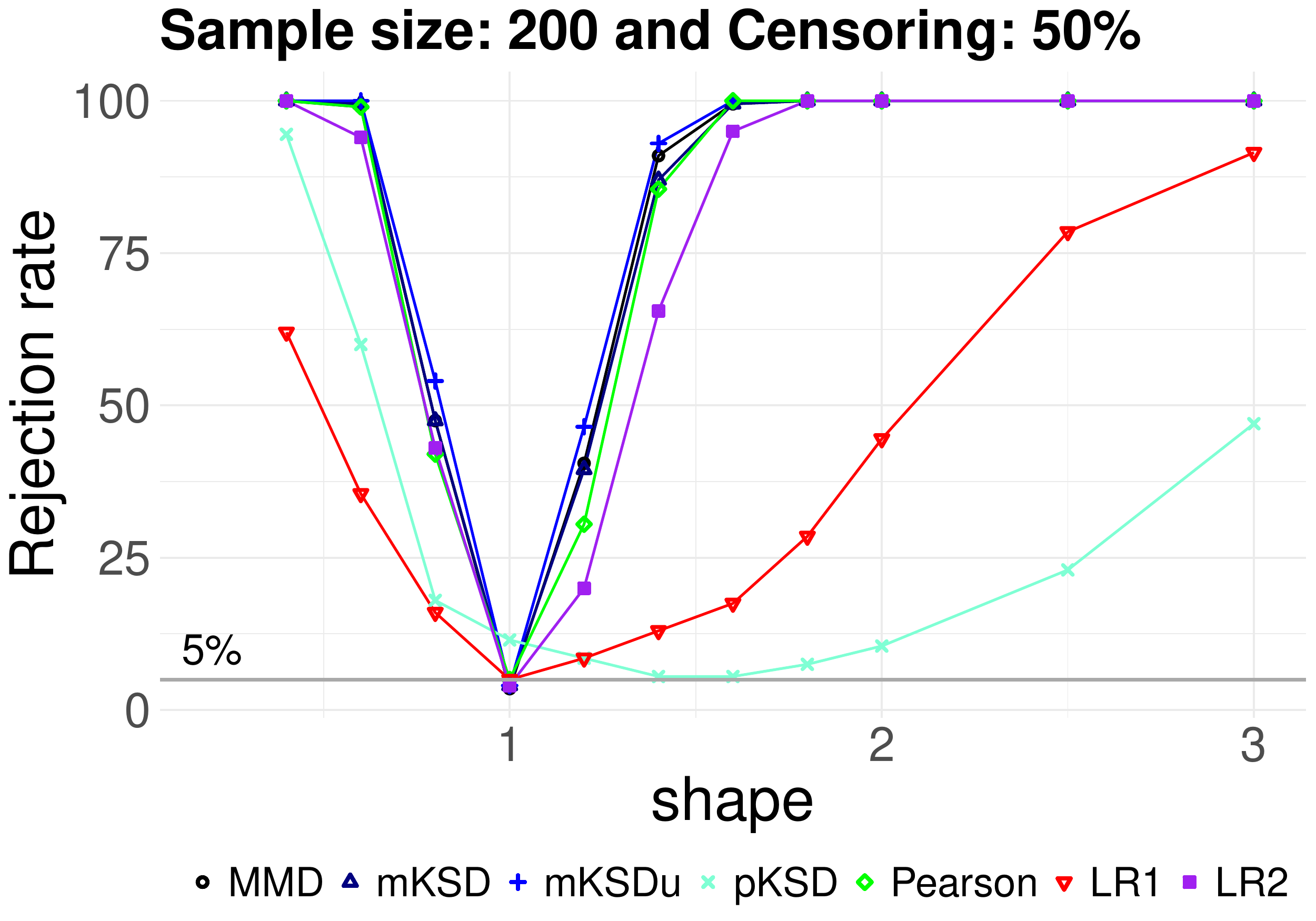}
    \includegraphics[scale=0.21]{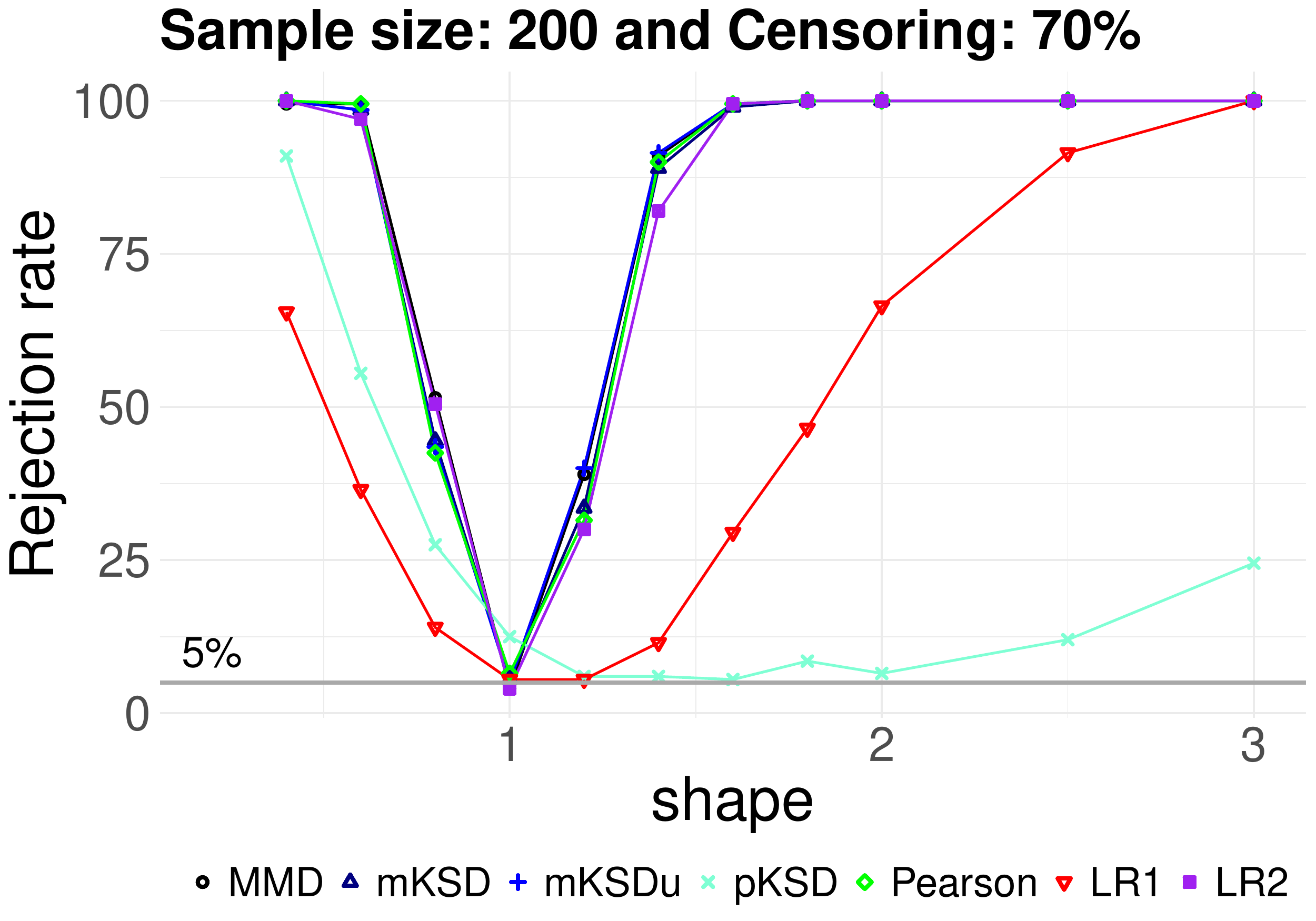}
\end{figure}

\subsection{Weibull experiments: increasing sample size}
\subsubsection*{Shape: $0.6$ and censoring percentages of $30\%$ and $70\%$}
\begin{figure}[H]
    \centering
    \includegraphics[scale=0.3]{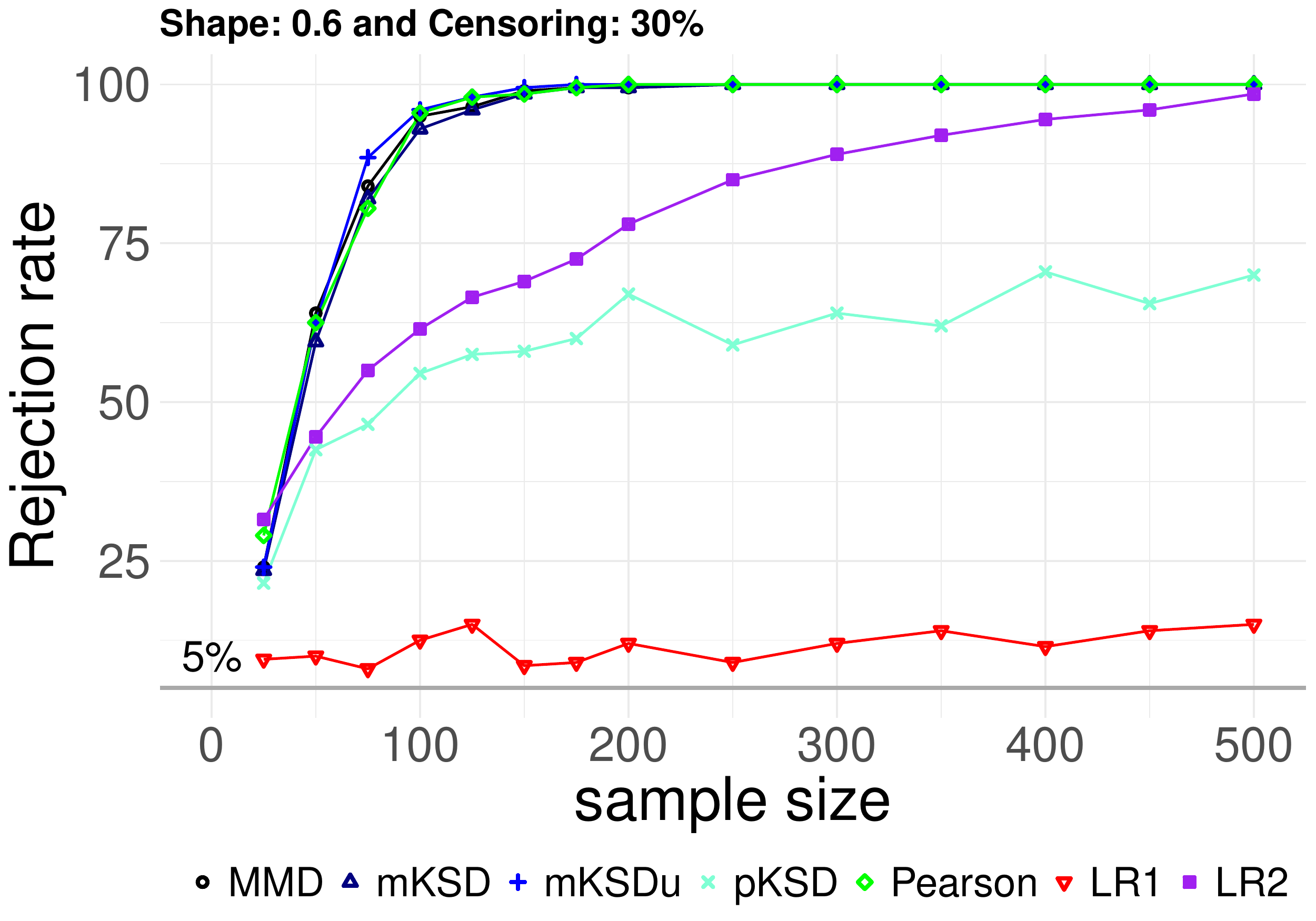}
    \includegraphics[scale=0.3]{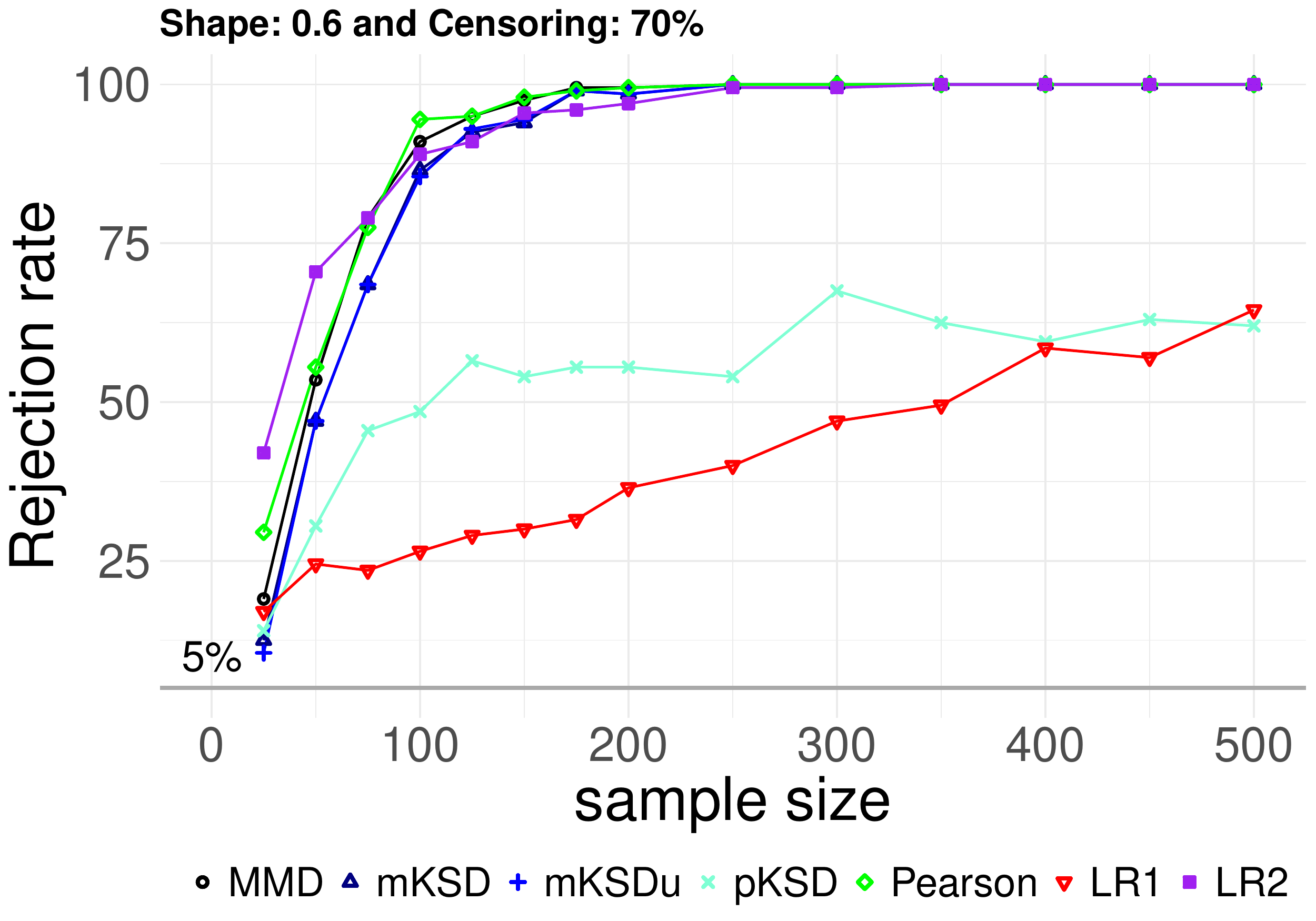}
\end{figure}

\subsubsection*{Shape: $1.4$ and censoring percentages of $30\%$ and $70\%$}
\begin{figure}[H]
    \centering
    \includegraphics[scale=0.3]{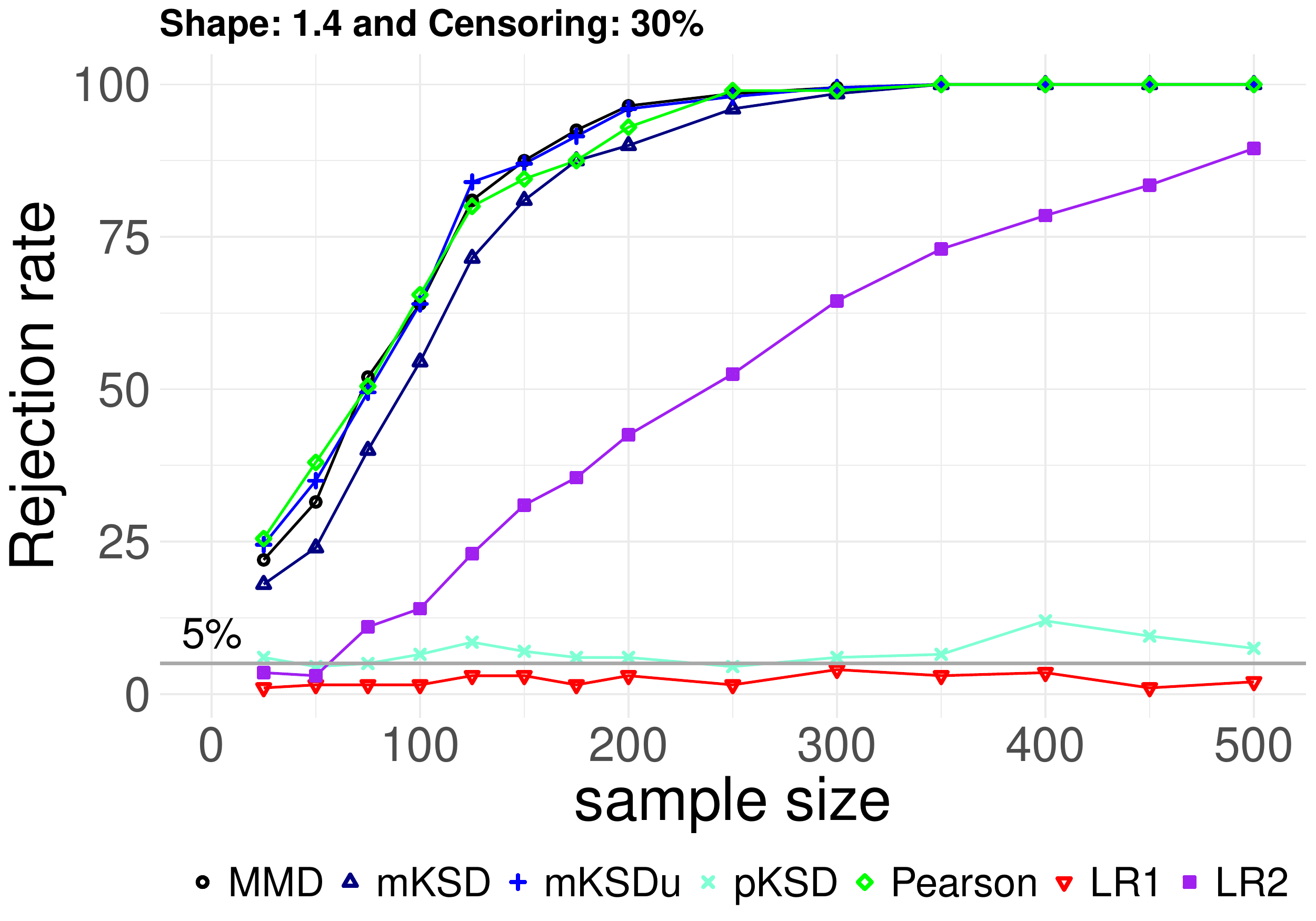}
    \includegraphics[scale=0.3]{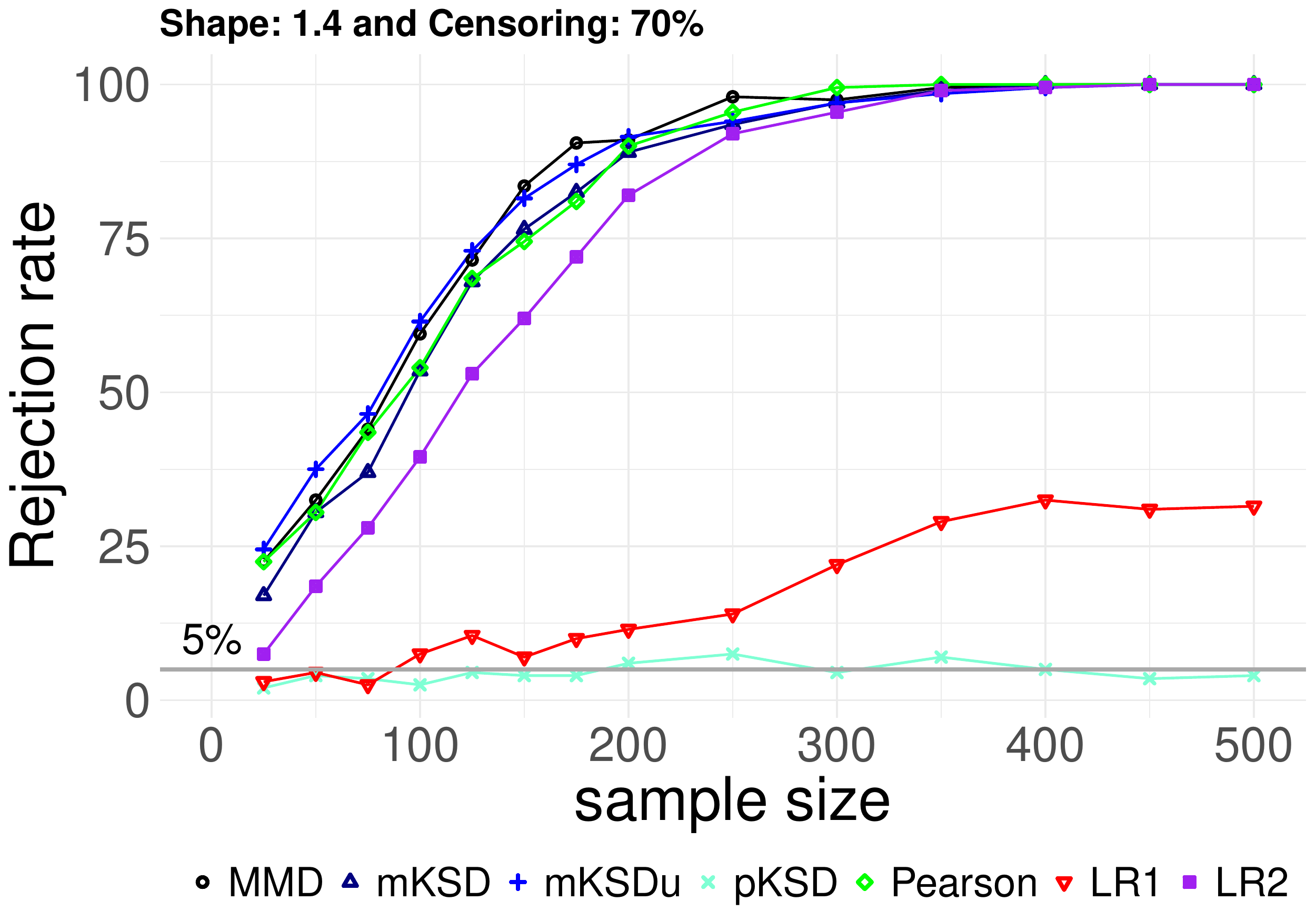}
\end{figure}

\subsection{Periodic experiments: small deviations from the null}
\subsubsection*{Sample size: 30, and censoring percentages of $30\%$, $50\%$ and $70\%$}
\begin{figure}[H]
    \centering
    \includegraphics[scale=0.21]{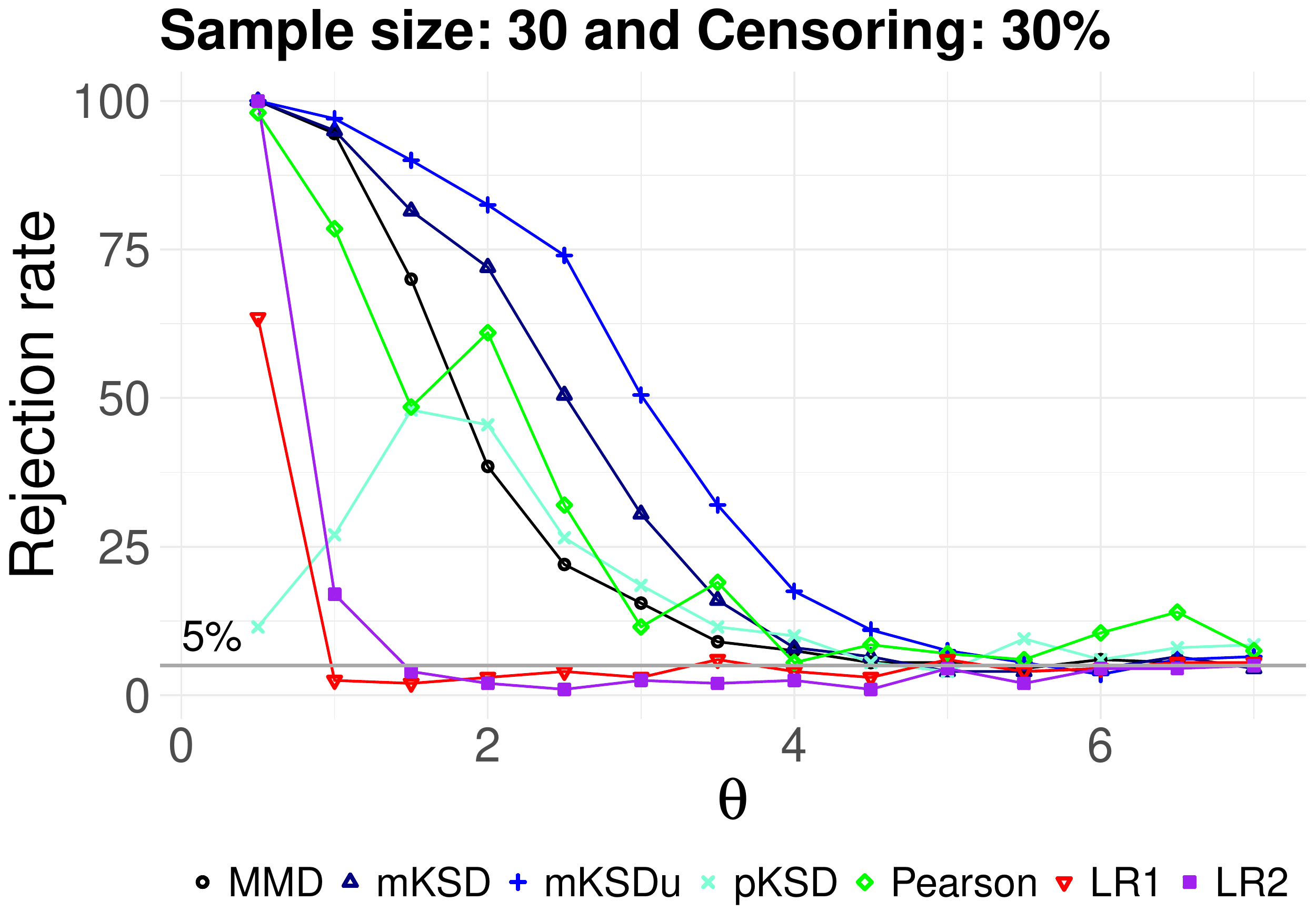}
    \includegraphics[scale=0.21]{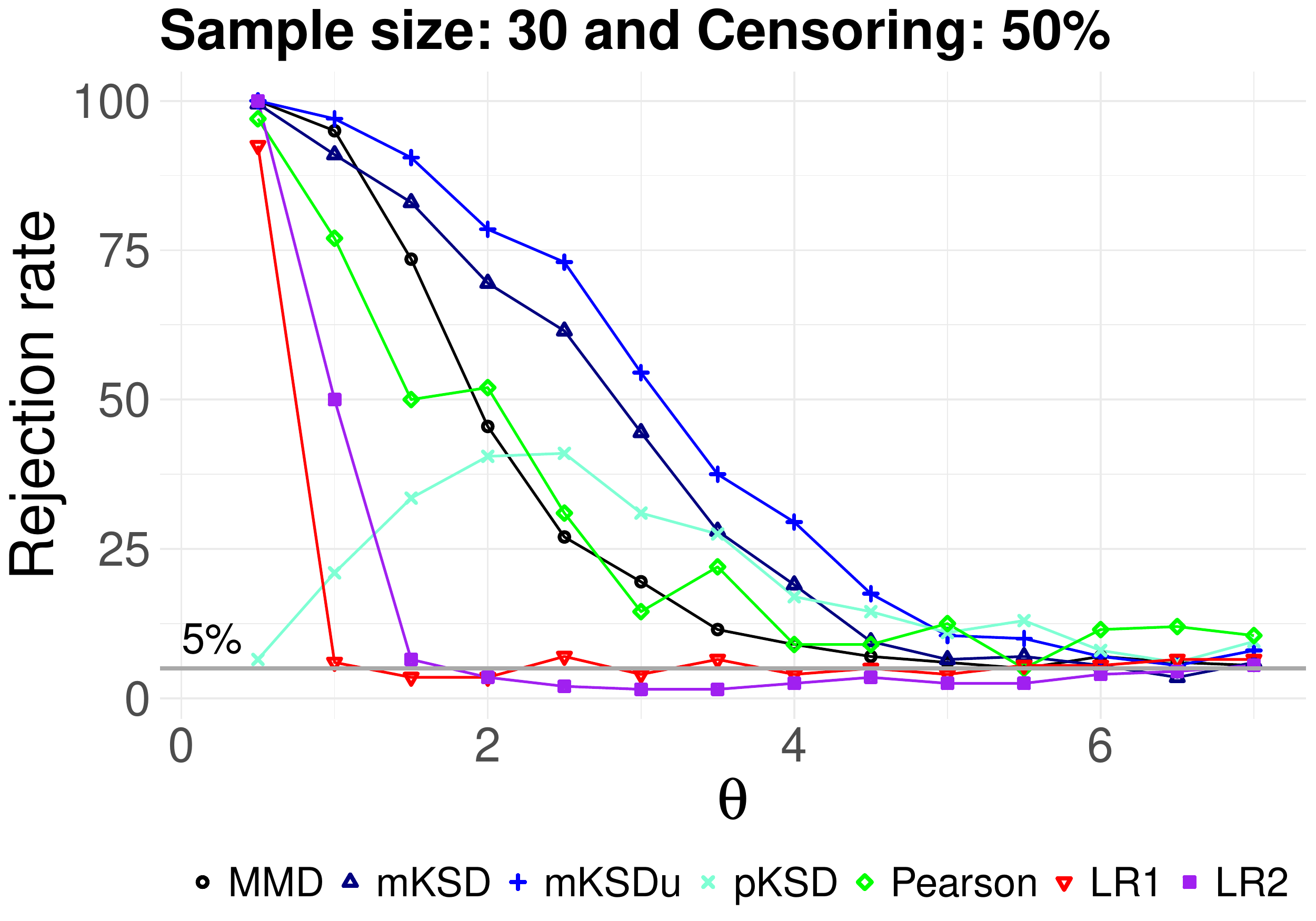}
    \includegraphics[scale=0.21]{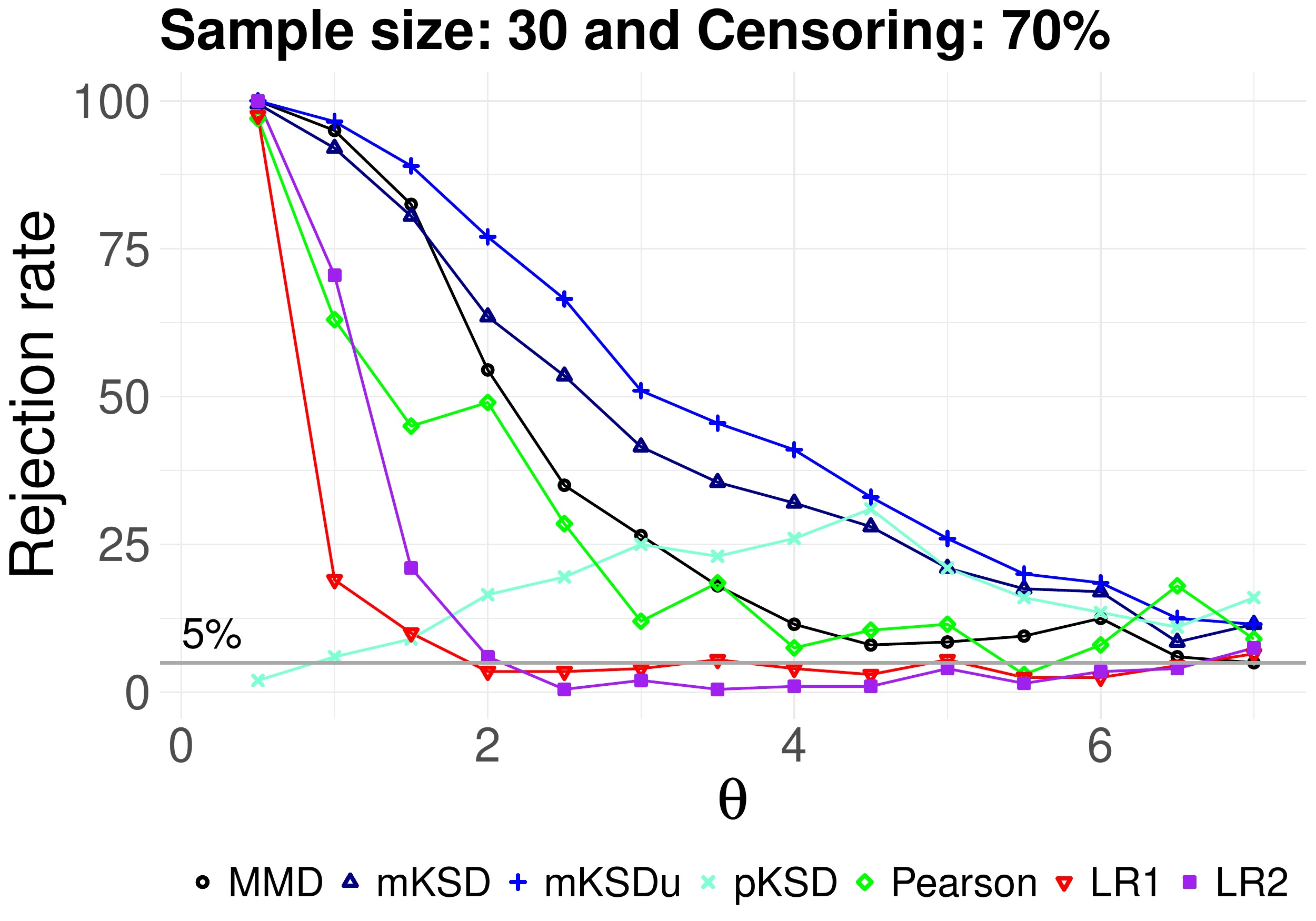}
\end{figure}
\subsubsection*{Sample size: 50, and censoring percentages of $30\%$, $50\%$ and $70\%$}
\begin{figure}[H]
    \centering
    \includegraphics[scale=0.21]{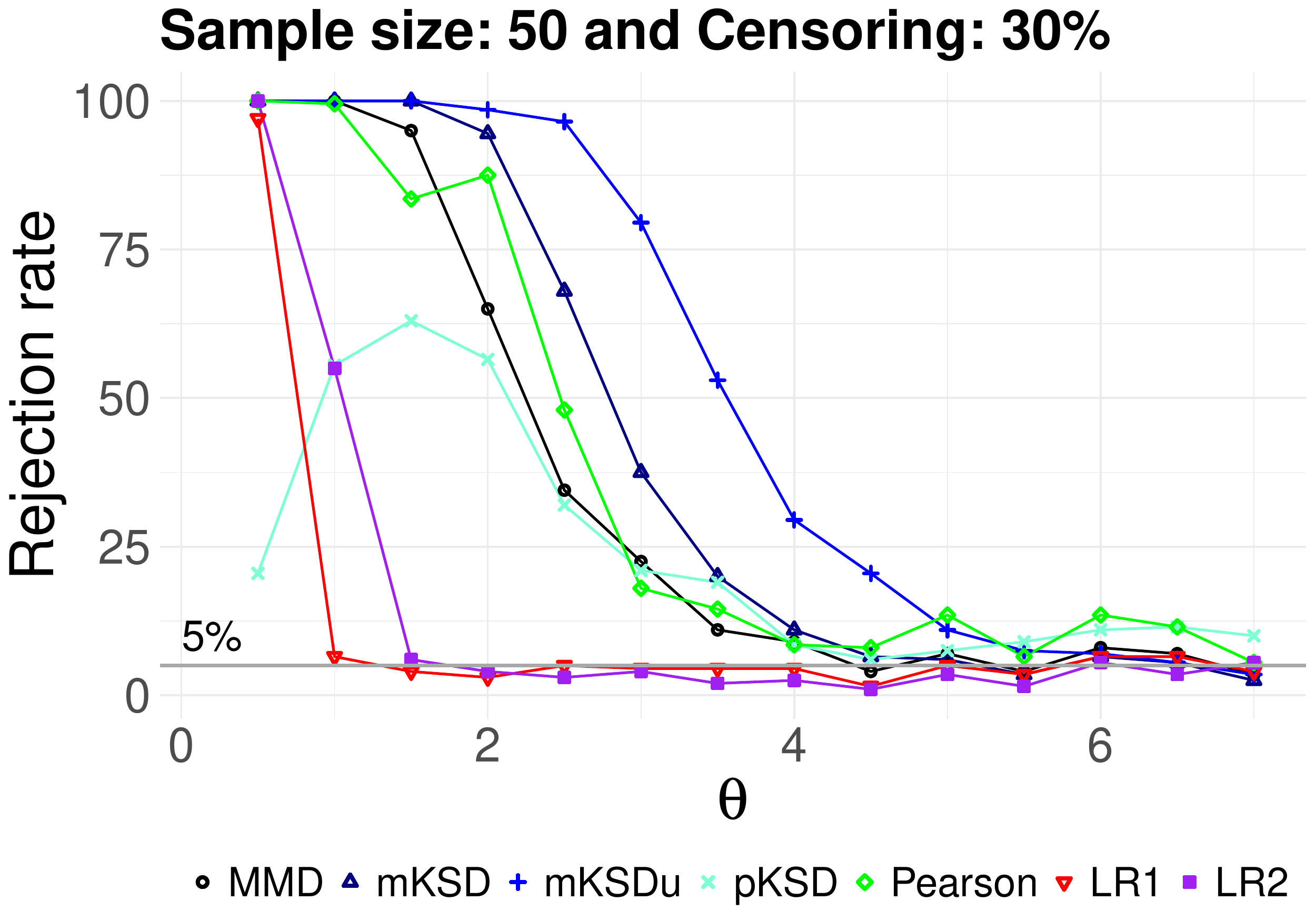}
    \includegraphics[scale=0.21]{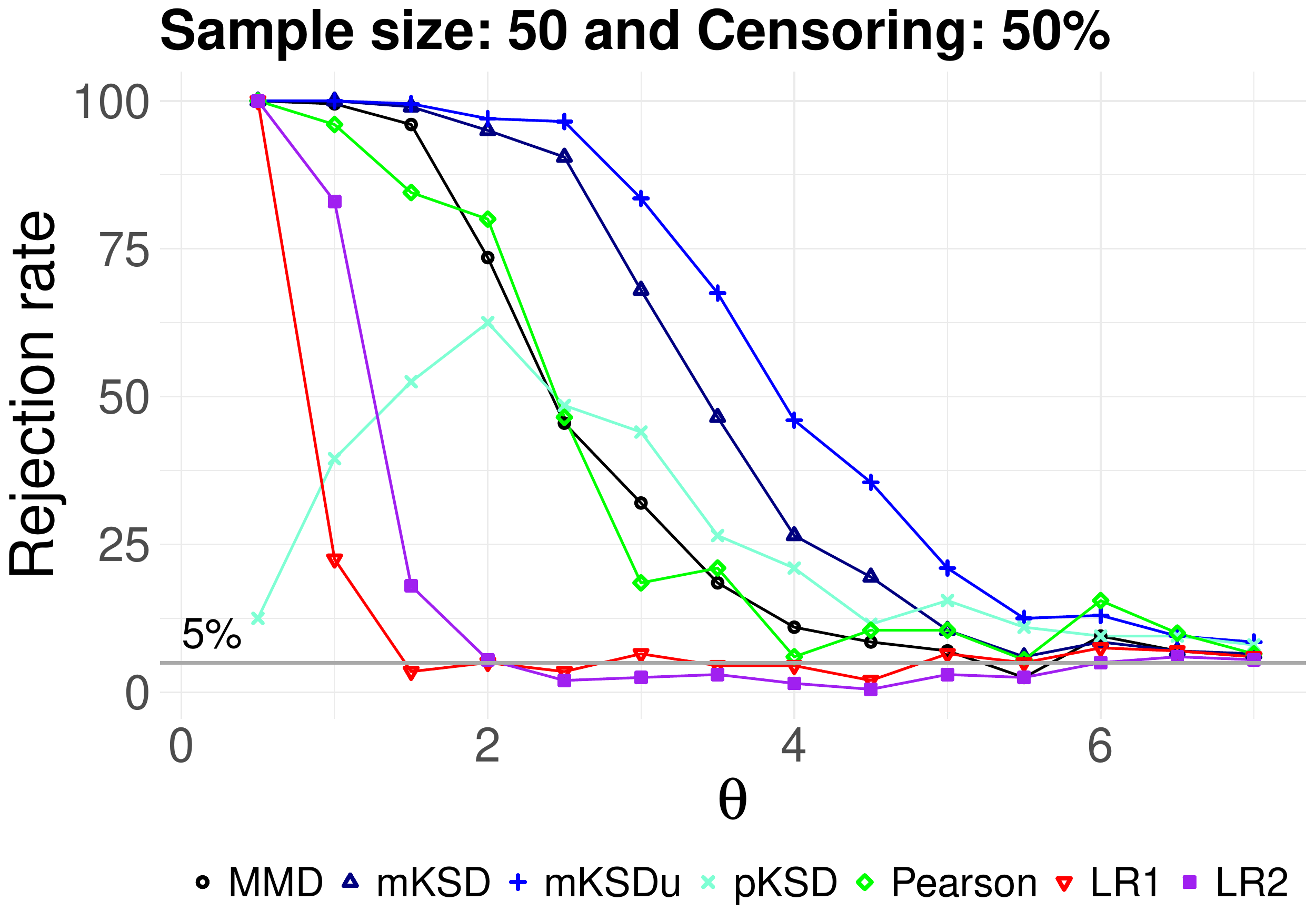}
    \includegraphics[scale=0.21]{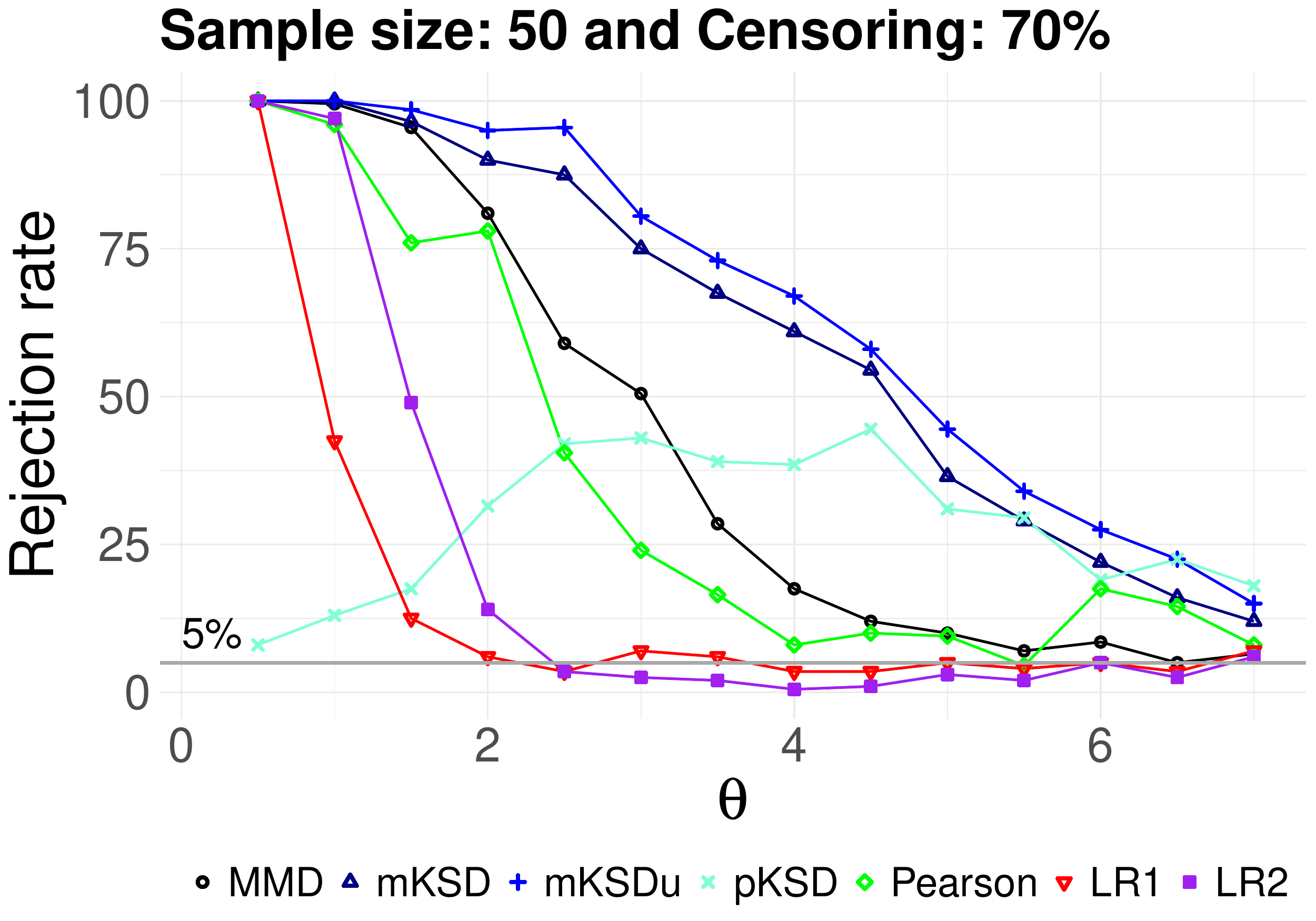}
\end{figure}
\subsubsection*{Sample size: 100, and censoring percentages of $30\%$, $50\%$ and $70\%$}
\begin{figure}[H]
    \centering
    \includegraphics[scale=0.21]{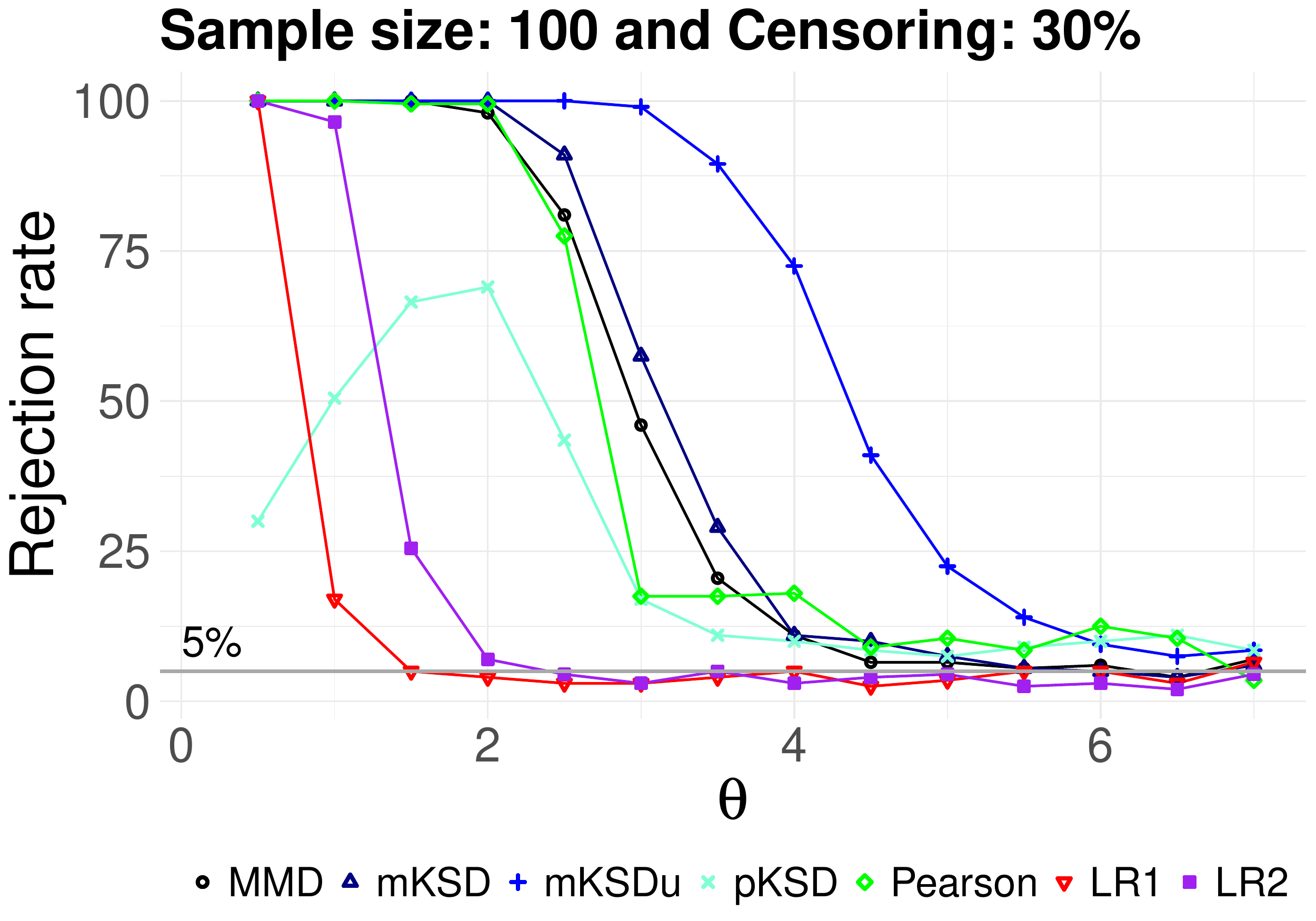}
    \includegraphics[scale=0.21]{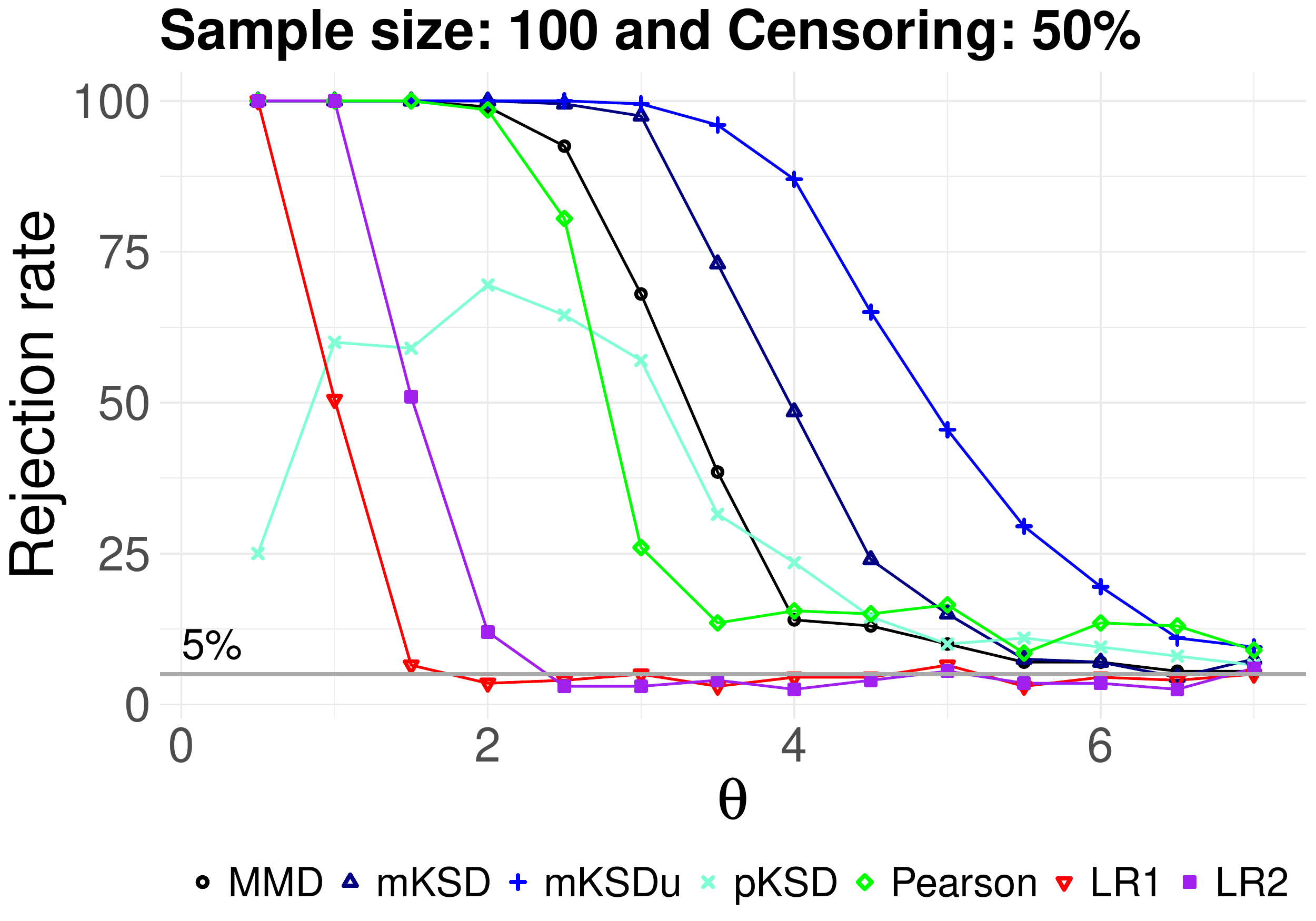}
    \includegraphics[scale=0.21]{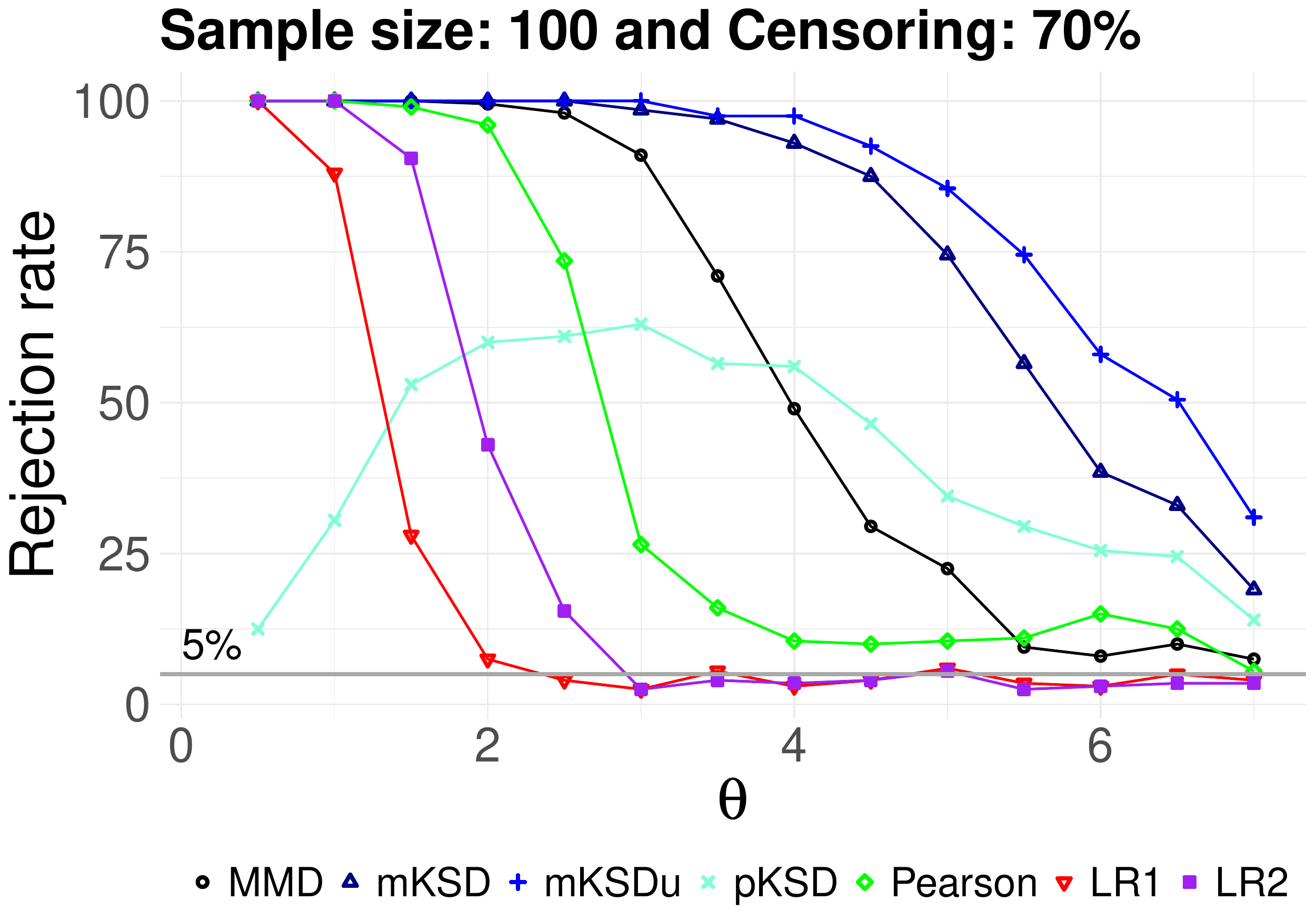}
\end{figure}
\subsubsection*{Sample size: 200, and censoring percentages of $30\%$, $50\%$ and $70\%$}
\begin{figure}[H]
    \centering
    \includegraphics[scale=0.21]{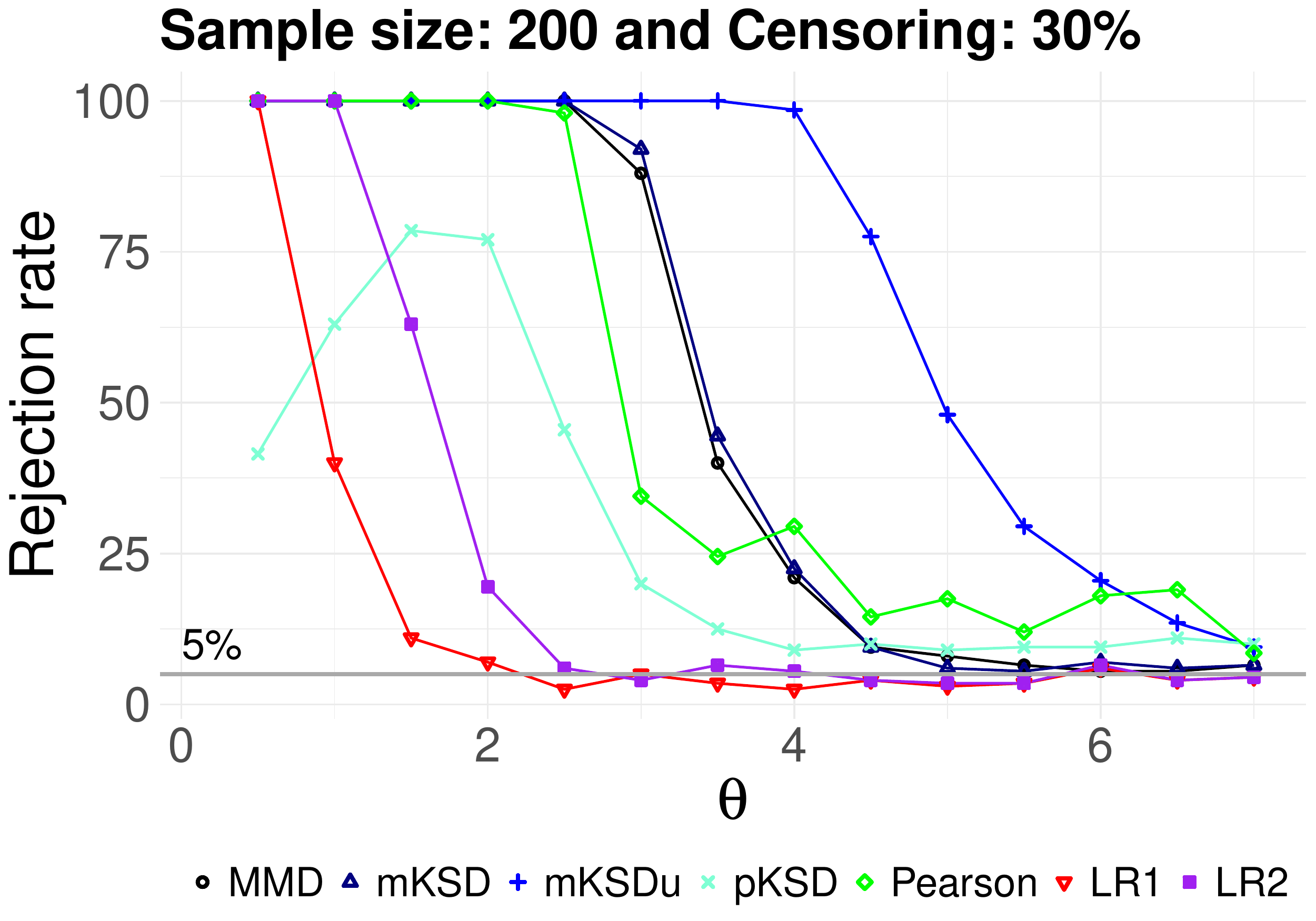}
    \includegraphics[scale=0.21]{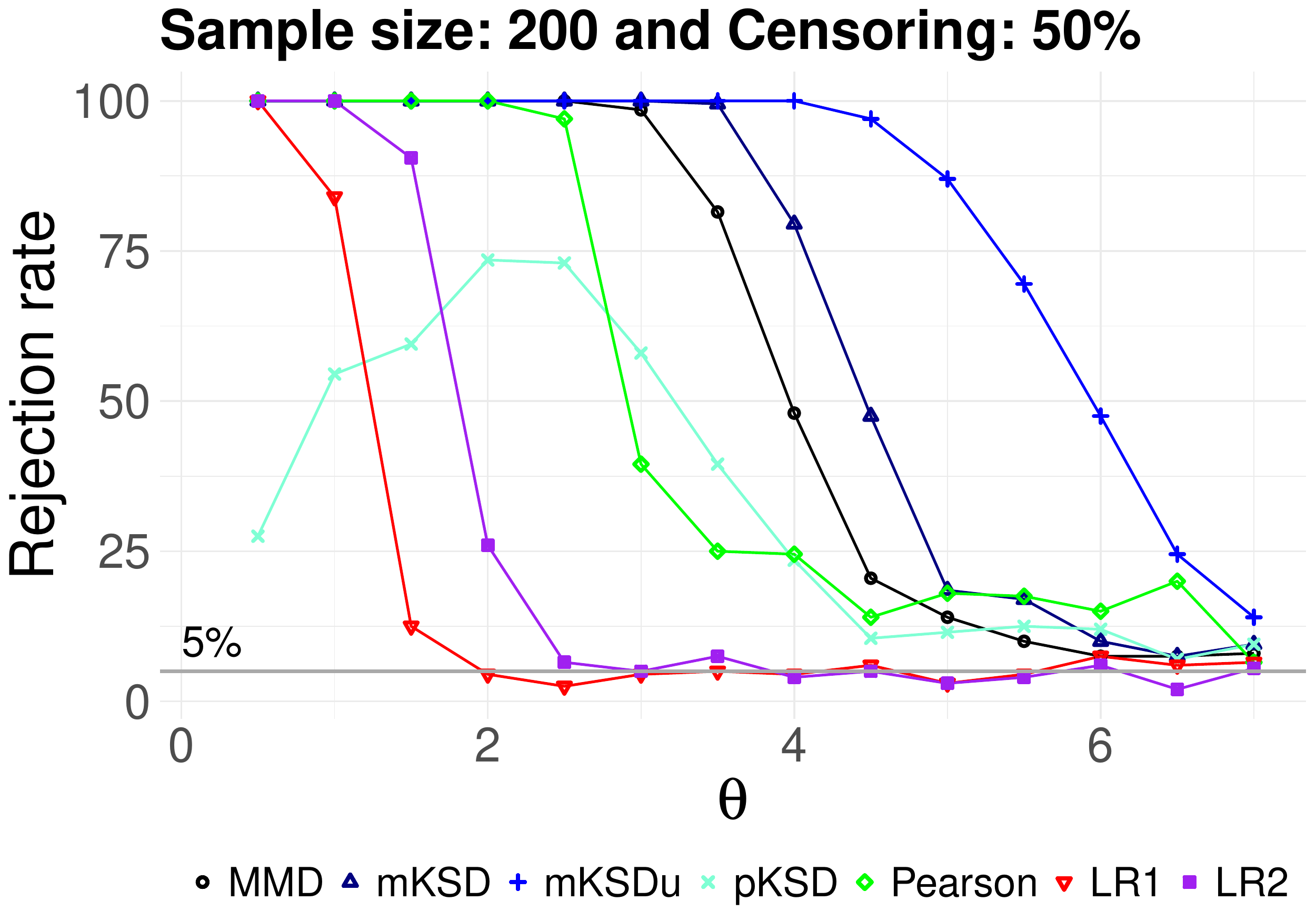}
    \includegraphics[scale=0.21]{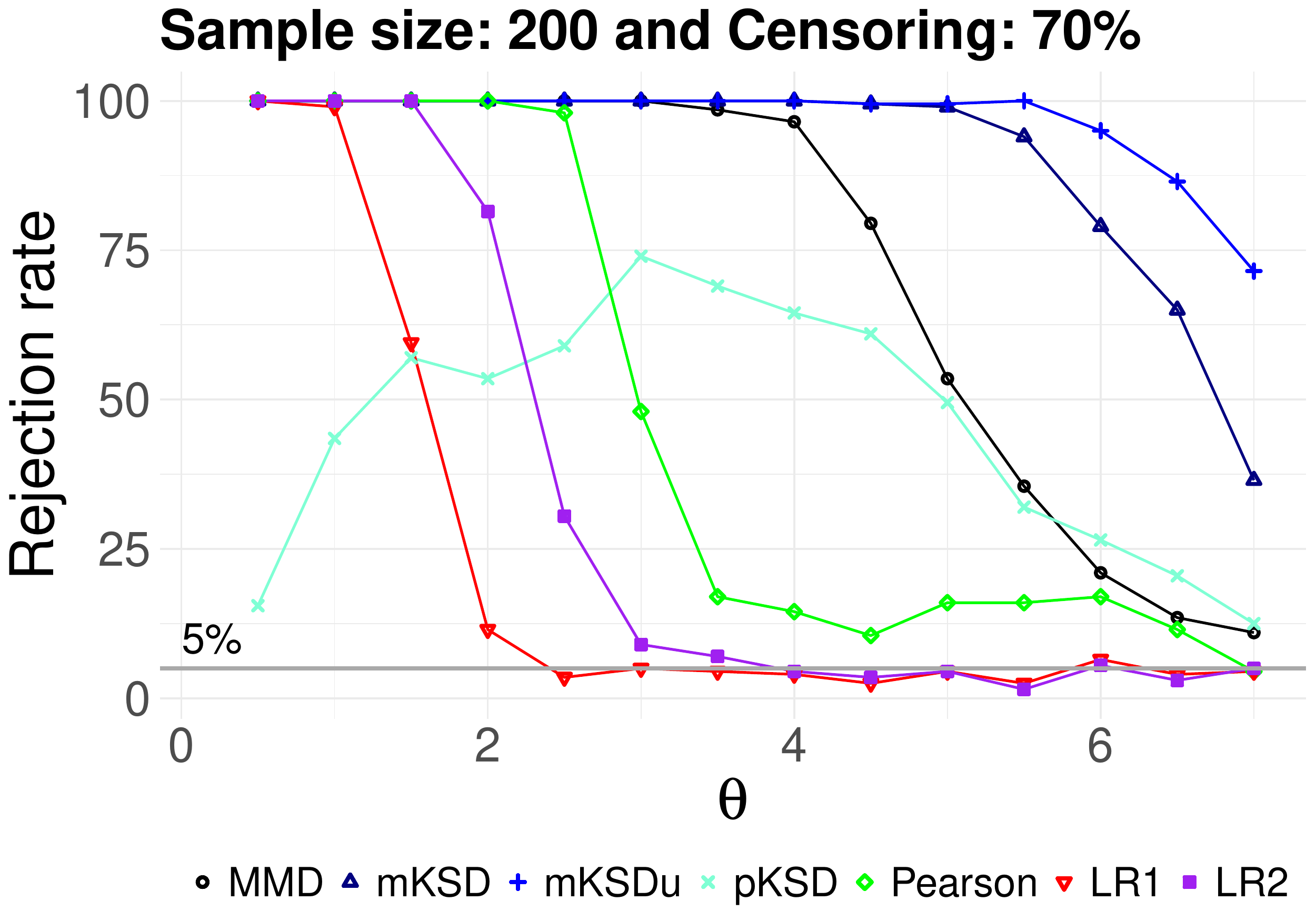}
\end{figure}

\subsection{Periodic experiments: increasing sample size}

\subsubsection*{Frequency: $3$ and censoring percentages of $30\%$ and $70\%$}
\begin{figure}[H]
    \centering
    \includegraphics[scale=0.3]{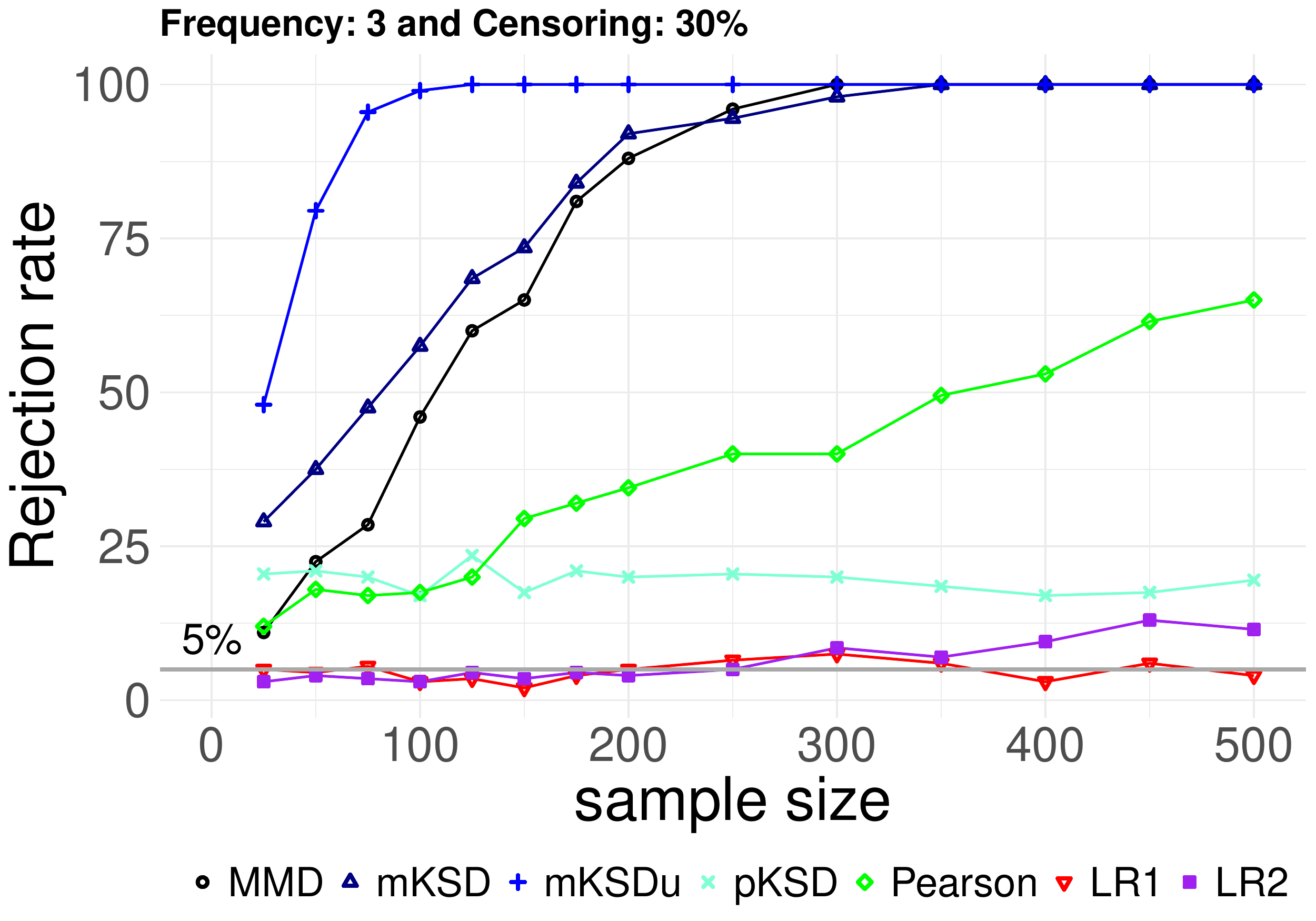}
    \includegraphics[scale=0.3]{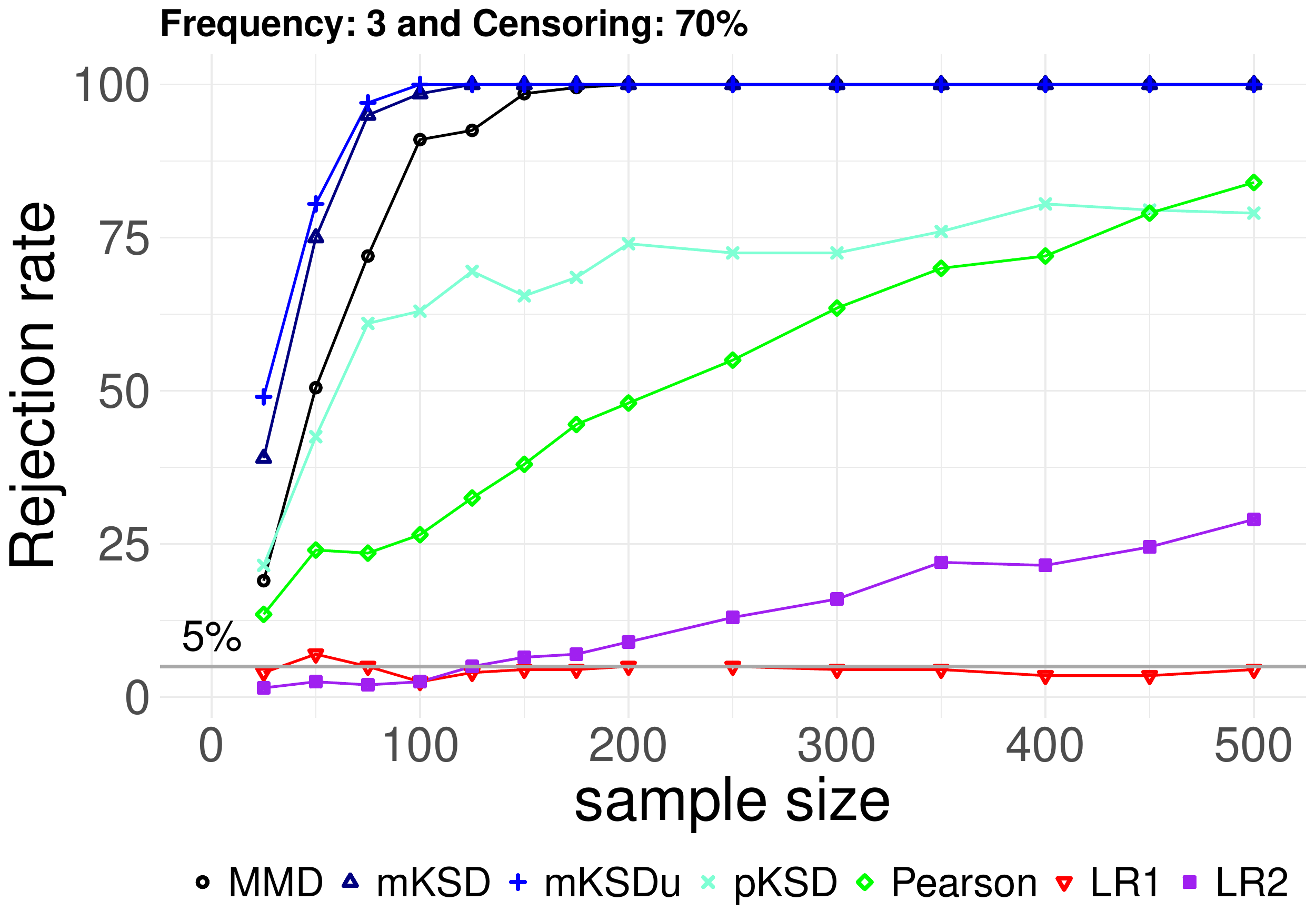}
\end{figure}

\subsubsection*{Frequency: $6$ and censoring percentages of $30\%$ and $70\%$}
\begin{figure}[H]
    \centering
    \includegraphics[scale=0.3]{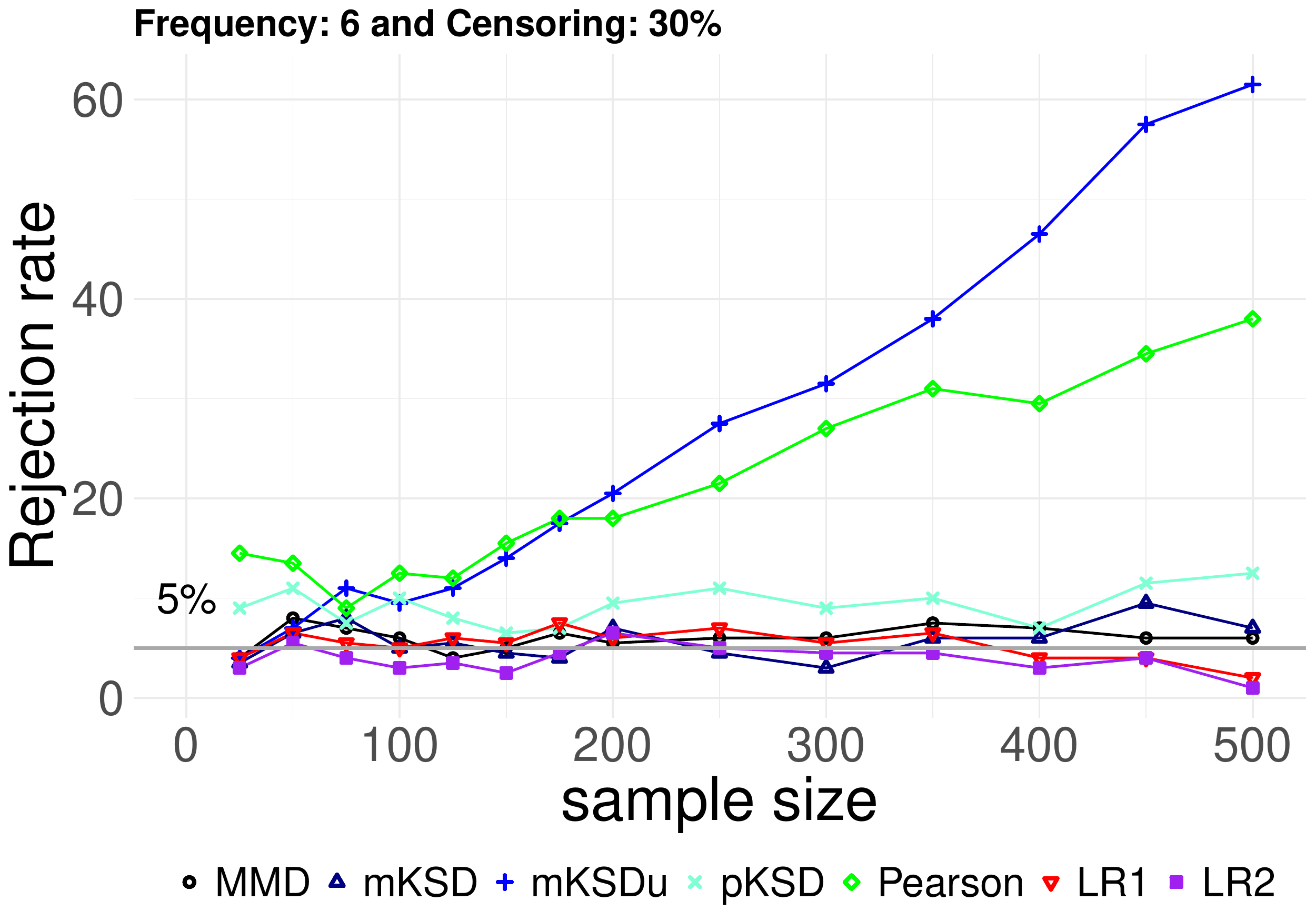}
    \includegraphics[scale=0.3]{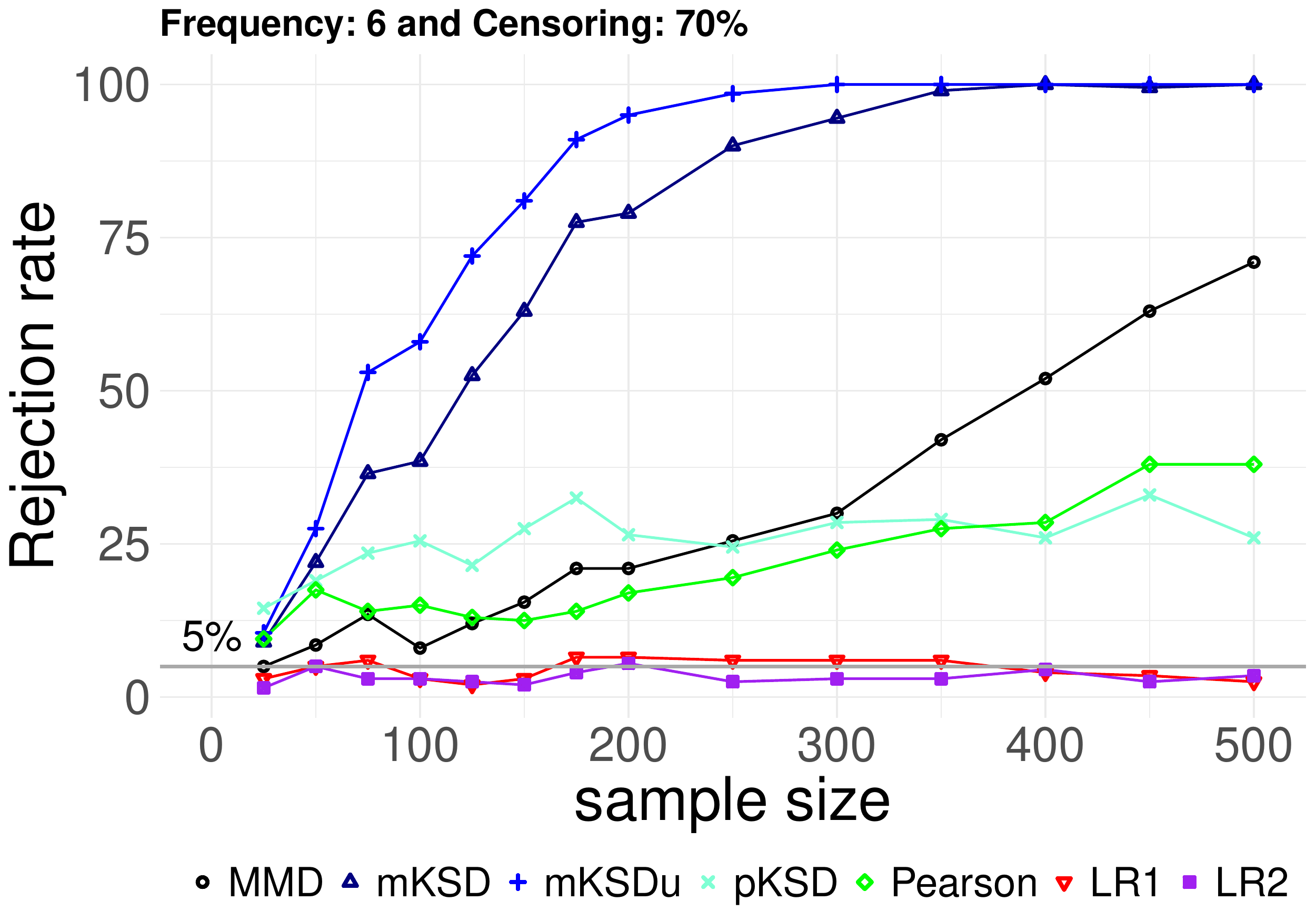}
\end{figure}

\bibliography{stein}

\begin{thebibliography}{}

\bibitem[Aalen et~al., 2008]{aalen2008survival}
Aalen, O., Borgan, O., and Gjessing, H. (2008).
\newblock {\em Survival and event history analysis: a process point of view}.
\newblock Springer Science \& Business Media.

\bibitem[Akritas, 1988]{akritas1988pearson}
Akritas, M.~G. (1988).
\newblock Pearson-type goodness-of-fit tests: the univariate case.
\newblock {\em Journal of the American Statistical Association},
  83(401):222--230.

\bibitem[Allen et~al., 2018]{allen2018nonalcoholic}
Allen, A.~M., Therneau, T.~M., Larson, J.~J., Coward, A., Somers, V.~K., and
  Kamath, P.~S. (2018).
\newblock Nonalcoholic fatty liver disease incidence and impact on metabolic
  burden and death: a 20 year-community study.
\newblock {\em Hepatology}, 67(5):1726--1736.

\bibitem[Andersen et~al., 2012]{andersen2012competing}
Andersen, P.~K., Geskus, R.~B., de~Witte, T., and Putter, H. (2012).
\newblock Competing risks in epidemiology: possibilities and pitfalls.
\newblock {\em International journal of epidemiology}, 41(3):861--870.

\bibitem[Barbour and Chen, 2005]{barbour2005introduction}
Barbour, A.~D. and Chen, L. H.~Y. (2005).
\newblock {\em An introduction to Stein's method}, volume~4.
\newblock World Scientific.

\bibitem[Biswas et~al., 2007]{biswas2007statistical}
Biswas, A., Datta, S., Fine, J.~P., and Segal, M.~R. (2007).
\newblock {\em Statistical advances in the biomedical science}.
\newblock Wiley Online Library.

\bibitem[Bradburn et~al., 2003]{bradburn2003survival}
Bradburn, M.~J., Clark, T.~G., Love, S.~B., and Altman, D.~G. (2003).
\newblock Survival analysis part ii: multivariate data analysis--an
  introduction to concepts and methods.
\newblock {\em British journal of cancer}, 89(3):431--436.

\bibitem[Brendel et~al., 2014]{brendel2014weighted}
Brendel, M., Janssen, A., Mayer, C.-D., and Pauly, M. (2014).
\newblock Weighted logrank permutation tests for randomly right censored life
  science data.
\newblock {\em Scandinavian Journal of Statistics}, 41(3):742--761.

\bibitem[Chansky et~al., 2016]{chansky2016survival}
Chansky, K., Subotic, D., Foster, N.~R., and Blum, T. (2016).
\newblock Survival analyses in lung cancer.
\newblock {\em Journal of thoracic disease}, 8(11):3457.

\bibitem[Chen et~al., 2010]{chen2010}
Chen, L. H.~Y., Goldstein, L., and Shao, Q.~M. (2010).
\newblock {\em Normal approximation by Stein's method}.
\newblock Springer.

\bibitem[Chung et~al., 1991]{chung1991survival}
Chung, C.-F., Schmidt, P., and Witte, A.~D. (1991).
\newblock Survival analysis: A survey.
\newblock {\em Journal of Quantitative Criminology}, 7(1):59--98.

\bibitem[Chwialkowski et~al., 2016]{chwialkowski2016kernel}
Chwialkowski, K., Strathmann, H., and Gretton, A. (2016).
\newblock A kernel test of goodness of fit.
\newblock In {\em Proceedings of the 33rd International Conference on
  International Conference on Machine Learning-Volume 48}, pages 2606--2615.

\bibitem[Collett, 2015]{collett2015modelling}
Collett, D. (2015).
\newblock {\em Modelling survival data in medical research}.
\newblock Chapman and Hall/CRC.

\bibitem[Crowley and Hu, 1977]{crowley1977covariance}
Crowley, J. and Hu, M. (1977).
\newblock Covariance analysis of heart transplant survival data.
\newblock {\em Journal of the American Statistical Association},
  72(357):27--36.

\bibitem[Dehling and Mikosch, 1994]{dehling1994random}
Dehling, H. and Mikosch, T. (1994).
\newblock Random quadratic forms and the bootstrap for u-statistics.
\newblock {\em Journal of Multivariate Analysis}, 51(2):392--413.

\bibitem[Edmonson et~al., 1979]{edmonson1979different}
Edmonson, J.~H., Fleming, T.~R., Decker, D., Malkasian, G., Jorgensen, E.,
  Jefferies, J., Webb, M., and Kvols, L. (1979).
\newblock Different chemotherapeutic sensitivities and host factors affecting
  prognosis in advanced ovarian carcinoma versus minimal residual disease.
\newblock {\em Cancer treatment reports}, 63(2):241--247.

\bibitem[Fernandez and Gretton, 2019]{fernandez2019maximum}
Fernandez, T. and Gretton, A. (2019).
\newblock A maximum-mean-discrepancy goodness-of-fit test for censored data.
\newblock In {\em The 22nd International Conference on Artificial Intelligence
  and Statistics}, pages 2966--2975.

\bibitem[Fernandez et~al., 2019]{fernandez2019kernel}
Fernandez, T., Gretton, A., Rindt, D., and Sejdinovic, D. (2019).
\newblock A kernel log-rank test of independence for right-censored data.
\newblock {\em arXiv preprint arXiv:1912.03784}.

\bibitem[Fernandez and Rivera, 2019]{fernandez2019reproducing}
Fernandez, T. and Rivera, N. (2019).
\newblock A reproducing kernel hilbert space log-rank test for the two-sample
  problem.
\newblock {\em arXiv preprint arXiv:1904.05187}.

\bibitem[Fleming and Harrington, 2011]{fleming2011counting}
Fleming, T.~R. and Harrington, D.~P. (2011).
\newblock {\em Counting processes and survival analysis}, volume 169.
\newblock John Wiley \& Sons.

\bibitem[Gill, 1983]{gill1983large}
Gill, R. (1983).
\newblock Large sample behaviour of the product-limit estimator on the whole
  line.
\newblock {\em Ann. Statist.}, 11(1):49--58.

\bibitem[Gill, 1980]{gill1980censoring}
Gill, R.~D. (1980).
\newblock {\em Censoring and stochastic integrals}, volume 124 of {\em
  Mathematical Centre Tracts}.
\newblock Mathematisch Centrum, Amsterdam.

\bibitem[Gorham and Mackey, 2015]{gorham2015measuring}
Gorham, J. and Mackey, L. (2015).
\newblock Measuring sample quality with stein's method.
\newblock In {\em Advances in Neural Information Processing Systems}, pages
  226--234.

\bibitem[Gorham and Mackey, 2017]{GorMac2017}
Gorham, J. and Mackey, L. (2017).
\newblock Measuring sample quality with kernels.
\newblock In {\em ICML}, pages 1292--1301.

\bibitem[Gretton et~al., 2012]{GreBorRasSchetal12}
Gretton, A., Borgwardt, K., Rasch, M., Schoelkopf, B., and Smola, A. (2012).
\newblock A kernel two-sample test.
\newblock {\em JMLR}, 13:723--773.

\bibitem[Gu et~al., 2019]{gu2019computational}
Gu, S.~S., Lheureux, S., Sayad, A., Cybulska, P., Hogen, L. B.-D., Vyarvelska,
  I., Tu, D., Parulekar, W., Levine, D.~A., Bernardini, M.~Q., et~al. (2019).
\newblock Computational modeling of ovarian cancer: Implications for therapy
  and screening.
\newblock {\em medRxiv}, page 19009712.

\bibitem[Hollander and Proschan, 1979]{hollander1979testing}
Hollander, M. and Proschan, F. (1979).
\newblock Testing to determine the underlying distribution using randomly
  censored data.
\newblock {\em Biometrics}, pages 393--401.

\bibitem[Hyv{\"a}rinen, 2005]{hyvarinen2005estimation}
Hyv{\"a}rinen, A. (2005).
\newblock Estimation of non-normalized statistical models by score matching.
\newblock {\em Journal of Machine Learning Research}, 6(Apr):695--709.

\bibitem[Jitkrittum et~al., 2017]{jitkrittum2017linear}
Jitkrittum, W., Xu, W., Szab{\'o}, Z., Fukumizu, K., and Gretton, A. (2017).
\newblock A linear-time kernel goodness-of-fit test.
\newblock In {\em Advances in Neural Information Processing Systems}, pages
  262--271.

\bibitem[Kanagawa et~al., 2019]{kanagawa2019kernel}
Kanagawa, H., Jitkrittum, W., Mackey, L., Fukumizu, K., and Gretton, A. (2019).
\newblock A kernel stein test for comparing latent variable models.
\newblock {\em arXiv preprint arXiv:1907.00586}.

\bibitem[Kaplan and Meier, 1958]{kaplan1958nonparametric}
Kaplan, E.~L. and Meier, P. (1958).
\newblock Nonparametric estimation from incomplete observations.
\newblock {\em Journal of the American statistical association},
  53(282):457--481.

\bibitem[Klein and Moeschberger, 2006]{klein2006survival}
Klein, J.~P. and Moeschberger, M.~L. (2006).
\newblock {\em Survival analysis: techniques for censored and truncated data}.
\newblock Springer Science \& Business Media.

\bibitem[Ley et~al., 2017]{ley2017stein}
Ley, C., Reinert, G., Swan, Y., et~al. (2017).
\newblock Stein’s method for comparison of univariate distributions.
\newblock {\em Probability Surveys}, 14:1--52.

\bibitem[Liu et~al., 2016]{liu2016kernelized}
Liu, Q., Lee, J., and Jordan, M. (2016).
\newblock A kernelized stein discrepancy for goodness-of-fit tests.
\newblock In {\em International Conference on Machine Learning}, pages
  276--284.

\bibitem[Loprinzi et~al., 1994]{loprinzi1994prospective}
Loprinzi, C.~L., Laurie, J.~A., Wieand, H.~S., Krook, J.~E., Novotny, P.~J.,
  Kugler, J.~W., Bartel, J., Law, M., Bateman, M., and Klatt, N.~E. (1994).
\newblock Prospective evaluation of prognostic variables from patient-completed
  questionnaires. north central cancer treatment group.
\newblock {\em Journal of Clinical Oncology}, 12(3):601--607.

\bibitem[Miller~Jr, 2011]{miller2011survival}
Miller~Jr, R.~G. (2011).
\newblock {\em Survival analysis}, volume~66.
\newblock John Wiley \& Sons.

\bibitem[Mirabello et~al., 2009]{mirabello2009osteosarcoma}
Mirabello, L., Troisi, R.~J., and Savage, S.~A. (2009).
\newblock Osteosarcoma incidence and survival rates from 1973 to 2004: data
  from the surveillance, epidemiology, and end results program.
\newblock {\em Cancer: Interdisciplinary International Journal of the American
  Cancer Society}, 115(7):1531--1543.

\bibitem[Nelson, 1972]{nelson1972theory}
Nelson, W. (1972).
\newblock Theory and applications of hazard plotting for censored failure data.
\newblock {\em Technometrics}, 14(4):945--966.

\bibitem[Vershynin, 2018]{vershynin2019high}
Vershynin, R. (2018).
\newblock {\em High-Dimensional Probability: An Introduction with Applications
  in Data Science}.
\newblock Cambridge Series in Statistical and Probabilistic Mathematics.
  Cambridge University Press.

\bibitem[Xu and Matsuda, 2020]{xu2020stein}
Xu, W. and Matsuda, T. (2020).
\newblock A stein goodness-of-fit test for directional distributions.
\newblock {\em The 23rd International Conference on Artificial Intelligence and
  Statistics}.

\bibitem[Yang et~al., 2018]{yang2018goodness}
Yang, J., Liu, Q., Rao, V., and Neville, J. (2018).
\newblock Goodness-of-fit testing for discrete distributions via stein
  discrepancy.
\newblock In {\em International Conference on Machine Learning}, pages
  5557--5566.

\bibitem[Yang et~al., 2019]{yang2019stein}
Yang, J., Rao, V., and Neville, J. (2019).
\newblock A stein--papangelou goodness-of-fit test for point processes.
\newblock In {\em The 22nd International Conference on Artificial Intelligence
  and Statistics}, pages 226--235.

\bibitem[Yang, 1994]{Yang1994}
Yang, S. (1994).
\newblock A central limit theorem for functionals of the {K}aplan-{M}eier
  estimator.
\newblock {\em Statist. Probab. Lett.}, 21(5):337--345.

\end{thebibliography}
\bibliographystyle{apalike}

\end{document}